\theoremstyle{plain}
\newtheorem{theorem}{Theorem}[section]
\newtheorem{lemma}[theorem]{Lemma}
\theoremstyle{definition}
\newtheorem{definition}[theorem]{Definition}
\newtheorem{assumption}[theorem]{Assumption}
\theoremstyle{remark}
\newtheorem{remark}[theorem]{Remark}
\icmltitlerunning{Corruption-Robust Algorithms with Uncertainty Weighting for Nonlinear Contextual Bandits and MDPs}
\begin{document}

\twocolumn[
\icmltitle{Corruption-Robust Algorithms with Uncertainty Weighting for Nonlinear Contextual Bandits and Markov Decision Processes}




\begin{icmlauthorlist}
\icmlauthor{Chenlu Ye}{xxx}
\icmlauthor{Wei Xiong}{xxx}
\icmlauthor{Quanquan Gu}{zzz}
\icmlauthor{Tong Zhang}{xxx}
\end{icmlauthorlist}

\icmlaffiliation{xxx}{The Hong Kong University of Science and Technology;}
\icmlaffiliation{zzz}{Department of Computer Science, University of California, Los Angeles;}

\icmlcorrespondingauthor{Tong Zhang}{tongzhang@tongzhang-ml.org}


\icmlkeywords{Machine Learning, ICML}

\vskip 0.3in
]




\printAffiliationsAndNotice{}

\begin{abstract}
Despite the significant interest and progress in reinforcement learning (RL) problems with adversarial corruption, current works are either confined to the linear setting or lead to an undesired $\tilde{\cO}(\sqrt{T}\zeta)$ regret bound, where $T$ is the number of rounds and $\zeta$ is the total amount of corruption. In this paper, we consider contextual bandits with general function approximation and propose a computationally efficient algorithm to achieve a regret of $\tilde{\cO}(\sqrt{T}+\zeta)$. The proposed algorithm relies on the recently developed uncertainty-weighted least-squares regression from linear contextual bandits \citep{he2022nearly} and a new weighted estimator of uncertainty for the general function class. In contrast to the existing analysis for the sum of uncertainty that is heavily based on the linear structure, we develop a novel technique to control the sum of weighted uncertainty, thus establishing the final regret bound. We then generalize our algorithm to the episodic MDP and first achieve an additive dependence on the corruption level $\zeta$ in the scenario of general function approximation. Notably, our algorithms achieve regret bounds that either nearly match the lower bound or improve the performance of existing methods for all the corruption levels in both known and unknown $\zeta$ cases.
\end{abstract}

\section{Introduction}
This paper studies contextual bandits \citep{langford2007epoch} and episodic Markov decision processes (MDPs) \citep{sutton2018reinforcement} under adversarial corruption and with general function approximation. Distinct from standard contextual bandits or MDPs, we assume that the reward functions and transition kernels are subject to adversarial attacks before they are revealed to the agent at each round. This corruption concept admits model misspecification as a special case, and is complementary to the non-corruption model. It has broad applications such as autonomous vehicles misled by adversarially hacked navigation systems or contaminated traffic signs \citep{eykholt2018robust}; recommendation systems tricked by adversarial comments to produce incorrect rankings \citep{deshpande2012linear}. Meanwhile, general function approximation has gained considerable attention since practical algorithms usually approximate true value functions or policies by a general function class $\cF$ (e.g., neural networks) to handle the numerous states in modern large-scale reinforcement learning.

The goodness of an algorithm for learning a contextual bandit or an MDP is often measured by the notion of \emph{regret}, which is defined as the cumulative suboptimality compared with the best policy. In the standard setting without corruption, statistically and computationally efficient algorithms have been developed for contextual bandits \citep{foster2020beyond, zhang2022feel} and MDPs \citep{wang2020reinforcement, kong2021online} under nonlinear function approximation. However, the performance of these algorithms can severely degrade when the underlying models are corrupted by an adversary \citep{lykouris2018stochastic}. Therefore, designing algorithms that are robust to adversarial corruption has attracted tremendous attention \citep{bogunovic2021stochastic, zhao2021linear, lee2021achieving, ding2022robust, wei2022model, he2022nearly}. In the linear contextual bandit setting with $T$ rounds, we denote the cumulative amount of corruption by $\zeta := \sum_{t=1}^T \zeta_t$. In this setting, \citet{zhao2021linear} and \citet{ding2022robust} proposed a variant of the OFUL algorithm \citep{abbasi2011improved} which achieved a regret of $\tilde{\cO}\left(\sqrt{T} + \zeta \sqrt{T}\right)$, where other dependencies are omitted for simplicity. This \textit{multiplicative} dependence on the corruption level $\zeta$ to $T$ was undesirable as the dependence on $T$ degrades whenever $\zeta=\omega(1)$. \citet{wei2022model} developed the COBE+VOFUL algorithm based on the model selection framework\footnote{The adversary in \citet{wei2022model} corrupts the environment before the agent makes decisions and they define a maximization over the decision set as $\zeta$. The details are discussed in Remark \ref{rmk:corruption}.}, which achieves an $\tilde{\cO}\left(\sqrt{T} + \zeta\right)$ regret but with a sub-optimal dependence on the feature dimension. Moreover, COBE+VOFUL was known to be computationally inefficient. The gap to the minimax lower bound was later closed by \citet{he2022nearly}, where a novel uncertainty-weighted regression technique was designed to encourage estimators' reliance on the samples with low uncertainties. Nevertheless, similar results have not been established for contextual bandits with nonlinear function approximation.

We tackle the nonlinear contextual bandit with corruption by designing an efficient algorithm with desired additive dependence on the corruption level $\zeta$. Building on recent advances in linear contextual bandits \citep{he2022nearly}, we adapt the uncertainty-weighted regression to general function approximation. However, two challenges arise from this extension: (i) how to design sample-dependent weights with general function approximation? (ii) how to control the uncertainty level in the presence of weighting? To address the first challenge, we design an uncertainty estimator that compares the error on a newly arrived data point with the training error on the historical dataset. For the latter, we develop a novel analysis technique to control the sum of weighted uncertainty when the problem has a low eluder dimension \citep{russo2013eluder}. When specialized to the linear setting, the proposed algorithm enjoys a regret bound matching the nearly optimal result in \citet{he2022nearly}.

We extend our algorithm to the MDP setting under general function approximation by combining the uncertainty-weighted regression with the $\cF$-target Least-Squares Value Iteration ($\cF$-LSVI) \citep{wang2020reinforcement}. We assume that the adversary could attack both the rewards and transition kernels, where the amount of corruption is measured by the change of the Bellman operator (formally defined in Section~\ref{sec:mdp_corruptions}). In addition to the challenges arising in the bandit case, we also encounter stability issues of the exploration bonus in the analysis of backward iterations, meaning that the statistical complexity (i.e., the covering number) of the bonus class could be extremely high. When the function class could be embedded into a (possibly infinite-dimensional) Hilbert space, we show that the notion of an effective dimension could directly control the covering number. Moreover, for the problems with a low eluder dimension, we adopt the sub-sampling technique \citep{wang2020reinforcement, kong2021online} to select a core set for the uncertainty estimation. The use of weighted regression also brings distinct challenges to the procedure. To the best of our knowledge, the only existing result for MDPs with general function approximation is \citet{wei2022model}, whose corruption formulation is most consistent with ours. However, their algorithm utilizes fundamentally different ideas and suffers from a multiplicative dependence on the corruption level $\zeta$. We defer a detailed comparison to Section~\ref{sec:related_work}. To summarize, our contributions are threefold.
\begin{itemize}
    \item For contextual bandits under adversarial corruption and with general function approximation, we propose an algorithm based on optimism and uncertainty-weighted regression, which is computationally efficient when the weighted regression problem can be efficiently solved. We characterize the uncertainty of each data point by its diversity from history samples. Moreover, we demonstrate that the total level of uncertainty in the weighted version can be controlled by the eluder dimension \citep{russo2013eluder}.
    \item When extending to MDPs with general function approximation, we apply the uncertainty-weighting technique to the $\cF$-LSVI \citep{wang2020reinforcement} and develop an algorithm, which adds a bonus for every step and establishes optimism in the backward iteration. This algorithm is computationally efficient as long as the weighted regression problem is solvable \citep{wang2020reinforcement}. To address the stability issue of the bonus class, for a Hilbert space, we show that its covering number is bounded by its effective dimension; for general space, we adapt the sub-sampling technique \citep{wang2020reinforcement, kong2021online} but with new analysis techniques to handle the sample-dependent weights. 
    \item In terms of regret bounds, for contextual bandits, our algorithm enjoys a regret bound of $\tilde{\cO}(\sqrt{T\dim\ln N} + \zeta\dim)$, where $T$ is the number of rounds, $\dim$ is the eluder dimension of the function space, $N$ is the covering number, and $\zeta$ is the cumulative corruption. This result nearly matches the bound for linear models in known and unknown $\zeta$ cases. For MDPs, the proposed algorithm guarantees an $\tilde{\cO}(\sqrt{TH\dim\ln N} + \zeta\dim)$ regret, where $H$ is the episode length, $\dim$ is the sum of the eluder dimension for $H$ function spaces. This is the first result that has an additive dependence on $\zeta$ to $T$ for MDPs with general function approximation.
\end{itemize}

\subsection{Related Work} \label{sec:related_work}
We have discussed most of the related works throughout the introduction, thus deferring a review of additional works to Appendix~\ref{sec:additional_related_work}. In this section, we will focus on comparing our work with that of \citet{wei2022model}, who studied MDPs with general function approximation. We highlight similarities and differences between their work and ours as follows. 

In terms of corruption models, both \citet{wei2022model} and this work consider corruptions on the Bellman operator. However, the adversary considered in this paper is slightly stronger than that of \citet{wei2022model}, so our results apply to their setting but not vice versa. See Remark~\ref{rmk:corruption} for details. In terms of algorithmic ideas, we consider general function approximation under the least-squares value iteration (LSVI) framework, where optimism is achieved by adding exploration bonuses at every observed state-action pair. We then employ a generalized uncertainty-weighted regression to improve the robustness of the proposed algorithm. The proposed algorithm is known to be computationally efficient as long as we can solve the (weighted) regression problems efficiently \citep{wang2020reinforcement, kong2021online}. On the contrary, \citet{wei2022model} consider problems with a low Bellman eluder dimension and propose COBE + GOLF, which only establishes optimism at the \textit{initial} state and the optimistic planning step of GOLF \citep{jin2021bellman} is known to be computationally intractable in general. Finally, in terms of the regret bounds, \citet{wei2022model} achieve a regret bound of $\tilde{\cO}(\sqrt{T} + \zeta^r)$, where $\zeta^r$ quantifies the square norm of corruption before the observation of actions (refer to Remark \ref{rmk:corruption} for details). Their bound reduces to $\zeta^r = \cO(\sqrt{T\zeta})$ in the worst case, since they assume the boundness of corruption at each round $\zeta_t=\cO(1)$. They state that the boundness is without loss of generality since they can reduce the problem to bounded corruption case by projecting the abnormal rewards back to their range. In comparison, our algorithm achieves a regret bound of $\tilde{\cO}(\sqrt{T} + \zeta)$ when the corruption level $\zeta$ is known. This additive dependence of $\zeta$ is desirable. When the corruption level is unknown, we refer readers to Section~\ref{sec:unkown_level} for details. However, we remark that instances covered by the Bellman eluder dimension are broader than the eluder dimension. In this sense, our frameworks do not supersede the results of \citet{wei2022model}.

\section{Preliminaries}
In this section, we formally state the problems considered in the rest of this paper. Before continuing, we first define some notations to facilitate our discussions.

\noindent \textbf{Notations.} Given space $\cX$ and $\cA$, for any function $f:\cX \times \cA \to \mathbb{R}$, we define $f(x)=\max_{a\in\cA}f(x,a)$. We also denote the function space $\cX\times\cA\rightarrow[0,1]$ by $\cI^{[0,1]}$. We use the convention that $[n] = \{1,2,\cdots,n\}$. Sometimes we will use the shorthand notations $z = (x,a)$, $a \wedge b = \min(a, b)$, and $a \vee b = \max(a, b)$.

\subsection{Nonlinear Contextual Bandits with Corruption}
In a contextual bandit problem with $T$ rounds, the agent observes a \textit{context} $x_t \in \cX$ at each round $t \in [T]$ and takes action $a_t \in \cA$. After the decision, the environment generates a reward $r_t(x_t,a_t) = f_*(x_t,a_t) + \epsilon_t(x_t, a_t)$, where $\epsilon_t(x_t,a_t)$ is a mean-zero noise. Finally, the agent receives the reward $r_t$, and the next step begins. To model the adversarial corruption, assume that we have access to a known function class $\cF:\cX \times \cA \to \mathbb{R}$ but $f_* \notin \cF$ due to corruption. Specifically, we introduce the following notion of \textit{cumulative corruption} for contextual bandits.
\begin{definition}[Cumulative Corruption for Bandits]\label{def:mis_bandit}
The cumulative corruption is of level $\zeta$ if there exists $f_b\in\cF$ such that for any data sequence $\{(x_t,a_t)\}_{t\in[T]}$, we have:\footnote{We may also use the definition $r_t(x_t,a_t)=f_b(x_t,a_t) + \zeta_t + \epsilon_t$ with $\sum_{t=1}^T|\zeta_t|\le\zeta$.}
\$
\sum_{t=1}^T \zeta_t \leq \zeta, \qquad \zeta_t = |f_*(x_t,a_t) - f_b(x_t,a_t)|.
\$
\end{definition}
\noindent This corruption model is consistent with that in \citet{he2022nearly} and \citet{bogunovic2021stochastic} when specialized to linear function approximation. However, our definition further allows for general nonlinear function approximation.

\begin{remark}\label{rmk:corruption}
The adversary in our formulation is slightly stronger than in some prior works such as \citet{lykouris2018stochastic,zhao2021linear,wei2022model}, where the amount of corruption at round $t$ is determined before the decision $a_t\in\cA$: $\tilde \zeta=\sum_{t=1}^T\max_{a\in\cA}\zeta_t(x_t,a)$. Hence, $\tilde{\zeta}$ is larger than $\zeta$ in Definition \ref{def:mis_bandit}, which is chosen after the observation of action $a_t$. In addition to the corruption $\tilde{\zeta}$, these prior works consider another notion of cumulative corruption level: $\zeta^r= (T\sum_{t=1}^T\max_{a\in\cA}\zeta_t^2(x_t,a))^{1/2}$. We will also compare the proposed algorithms with the existing results under this setting in Section~\ref{sec:results}.
\end{remark}

We make the following standard assumptions in the literature of contextual bandit \citep{abbasi2011improved}.
\begin{assumption}\label{as:bandit}
Suppose the following conditions hold for contextual bandits:
\begin{enumerate}
    \item For all $(f, x,a)\in \cF \times\cX\times\cA$, we have $|f(x,a)|\le 1$;
    \item At each round $t$, let $\cS_{t-1}=\{x_s,a_s,\epsilon_s\}_{s\in[t-1]}$ denote the history. The noise variable $\epsilon_t$ is conditional $\eta$-sub-Gaussian, i.e., for all $\lambda\in\rR$, $
    \ln\E[e^{\lambda\epsilon_t}|x_t, a_t, \cS_{t-1}]\le\lambda^2\eta^2/2.$
\end{enumerate}
\end{assumption}
The {\bf learning objective} in contextual bandit is to minimize the regret:
$
\reg(T) = \sum_{t=1}^T [f_*(x_t) - f_*(x_t,a_t)].
$

\subsection{Nonlinear MDPs with Corruption} \label{sec:mdp_corruptions}
We consider an episodic MDP, represented by a tuple $\mathcal{M}(\cX,\cA,H,\Pb,r)$, where $\cX$ and $\cA$ are the spaces of states and actions, $H$ is episode length, $\Pb = \{\Pb^h\}_{h \in [H]}$ is a collection of probability measures, and $r = \{r^h\}_{h \in [H]}$ is the reward function. In each episode, an initial state $x^1$ is drawn from an unknown but fixed distribution. At each step $h \in [H]$, the agent observes a state $x^h\in\cX$, takes an action $a^h\in\cA$, receives a reward $r^h$ and transitions to the next state $x^{h+1}$ with probability $\Pb^h(r^h,x^{h+1}|x^h,a^h)$. We assume that the reward is non-negative and $\sum_{h=1}^H r^h(x^h,a^h) \leq 1$ almost surely.

\noindent \textbf{Policy and value functions.} A deterministic policy $\pi$ is a set of functions $\{\pi^h:\cX\rightarrow\cA\}_{h \in [H]}$. Given any policy $\pi$, we define its Q-value and V-value functions starting from step $h$ as follows:
\begin{equation}
\begin{aligned}
Q^h_{\pi}(x,a)&=\sum_{h'=h}^H \E_{\pi}[r^{h'}(x^{h'},a^{h'})\,|\, x^h=x,a^h=a],\\
V^h_{\pi}(x)&=\sum_{h'=h}^H \E_{\pi}[r^{h'}(x^{h'},a^{h'})\,|\,x^h=x].
\end{aligned}
\end{equation}
It is well known from \citet{sutton2018reinforcement} that there exists an optimal policy $\pi^*$ such that for all $(x,a) \in \cX \times \cA$, the optimal functions $V_*^h(x):=V^h_{\pi^*}(x)=\sup_{\pi} V_\pi^h(x)$ and $Q_*^h(x,a) := Q_{\pi^*}^h(x,a)= \sup_{\pi} Q_\pi^h(x,a)$ satisfy the following Bellman optimality equation:
\begin{equation} \label{eqn:bellman_opt}
\small
\begin{aligned}
Q_*^h(x,a) = \E_{r^h,x^{h+1}}\big[r^h + \max_{a'\in\cA}Q_*^{h+1}(x^{h+1},a') \,|\, x,a\big].
\end{aligned}
\end{equation}

\noindent \textbf{Function approximation.} We approximate the Q value function by a class of functions $\cF = \cF_1 \times \cdots \times\cF_H$ where $\cF_h \subset\cI^{[0,1]}$ for all $h \in [H]$. We use the convention that $f_{H+1} = 0$ since no reward is collected at step $H+1$. Then, we define the Bellman operator $\cT^h$ on space $\cF$: 
\begin{equation*}
\begin{aligned}
    (\cT^h f^{h+1})(x^h, a^h) := \E_{r^h, x^{h+1}} \big[r^h + f^{h+1}(x^{h+1})\big | x^h,a^h\big],
\end{aligned}
\end{equation*}
and the corresponding Bellman residual:
\begin{equation}
\cE^h(f,x^h,a^h)=f^h(x^h,a^h)-(\cT^h f^{h+1})(x^h,a^h).
\end{equation}

We now generalize the concept of corruption to the MDP setting. Since most of the existing approaches for MDPs rely on the condition that $Q_*^h$ satisfies the Bellman optimality equation, we measure the amount of corruption in terms of the change of the Bellman operator, which can occur through attacks on the rewards and probability kernels. Formally, the cumulative corruption for MDPs is defined as follows.
\begin{definition}[Cumulative corruption]\label{def:mis_mdp}
The cumulative corruption is $\zeta$ if there exists a complete and compressed operator $\cT^h_b:\cI^{[0,1]}\rightarrow\cF^h$ satisfying that $|(\cT^h_b g-\cT^h_b g')(x,a)|\le\|g - g'\|_{\infty}$ for any $g,g'\in\cI^{[0,1]}$ and $(x,a)\in\cX\times\cA$, and for all $h\in[H]$ and any sequence $\{x_t^h,a_t^h\}_{t\in[T]}\subset\cX\times\cA$, we have \footnote{Since our adversary can corrupt the data after observing the actions, if we suppose that there is no corruption when computing the regret, the corruption level is the $l^1$ norm over a sequence $\{x_t^h,a_t^h\}_{t\in[T]}$ chosen by the algorithm instead of any arbitrary sequence.}
$$ 
\sup_{g\in\cI^{[0,1]}}|(\cT^hg-\cT_b^hg)(x_t^h,a_t^h)|\le \zeta_t^h, \quad 
\sum_{t=1}^T \zeta_t^h \leq \zeta.
$$
\end{definition}
One can view $\cT_b^h$ as the Bellman operator for the uncorrupted MDP and $\cT^h$ for the corrupted MDP. When $H=1$, the corruption on the Bellman operator $\cT^h$ affects only the rewards, thus reducing the problem to that of a corrupted contextual bandit. This corruption on the Bellman operator has also been considered in \citet{wei2022model}, and it is a significant extension from the misspecified linear MDP in \citet{jin2020provably}. First, instead of assuming that the corruptions in the reward function and transition kernel are uniformly bounded for all rounds, we allow for non-uniform corruption and only require a bound on the cumulative corruption. Furthermore, when specialized to linear function approximation, our corruption on the Bellman operator subsumes the model in \citet{jin2020provably}. Finally, we consider general (nonlinear) function approximation, which is a strict generalization of the linear scenario. 

We also make the following boundedness assumption for the function class.

\begin{assumption}\label{as:mdp} For any $(h,f,x,a)\in [H] \times \cF \times \cX \times \cA$, we have $f^h(x,a) \in [0,1]$.
\end{assumption}
Suppose that we play the episodic MDP for $T$ episodes and generate a sequence of policy $\{\pi_t\}_{t\in [T]}$. The {\bf learning objective} is to minimize the cumulative regret:
$
\reg(T)= \sum_{t=1}^T\left[V_*^1(x_t^1)-V_{\pi_t}^1(x_t^1)\right].
$

\subsection{Eluder Dimension and Covering Number}
To measure the complexity of a general function class $\cF$, \citet{russo2013eluder} introduced the notion of the eluder dimension. We start with the definition of $\epsilon$-dependence. 
\begin{definition}[$\epsilon$-dependence]
A point $z$ is $\epsilon$-dependent on a set $\cZ$ with respect to $\cF$ if any $f,g\in\cF$ such that $\sqrt{\sum_{z_i\in\cZ}(f(z_i)-g(z_i))^2}\le\epsilon$ satisfies $|f(z)-g(z)|\le\epsilon$.
\end{definition}
\noindent Intuitively, the $\epsilon$-dependence means that if any two functions in a given set are relatively consistent on the historical dataset $\cZ$, their predictions on $z$ will also be similar. Accordingly, we say that variable $z$ is $\epsilon$-independent of $\cZ$ with respect to $\cF$ if $z$ is not $\epsilon$-dependent on $\cZ$.

\begin{definition}[Eluder Dimension]\label{def:eluder_dim}
For $\epsilon>0$ and a function class $\cF$ defined on $\cX$, the $\epsilon$-eluder dimension $\dim_E(\cF,\epsilon)$ is the length of the longest sequence of elements in $\cX$ such that for some $\epsilon'\ge\epsilon$, each element is $\epsilon'$-independent of its predecessors.
\end{definition}

When $\cF$ is a set of generalized linear functions, its eluder dimension is linearly dependent on the dimension of the feature map \citep{russo2013eluder}. \citet{li2021eluder} demonstrate that the eluder dimension family is strictly larger than the generalized linear class. \citet{osband2014model} prove that the quadratic function class $\{\phi(x,a)^\top \Lambda \phi(x,a): \Lambda \in \rR^{d\times d}\}$ has an eluder dimension of $\cO\left(d^2 \ln(1/\epsilon)\right)$.


For infinite function classes, we require the notion of $\epsilon$-cover and covering number. We refer readers to standard textbooks (e.g., \citet{wainwright2019high, TZ23-lt}) for more details.
\begin{definition}[$\epsilon$-cover and covering number] Given a function class $\cF$, for each $\epsilon > 0$, an $\epsilon$-cover of $\cF$ with respect to $\norm{\cdot}_\infty$, denoted by $\mathcal{C}(\cF, \epsilon)$, satisfies that for any $f \in \cF$, we can find $f' \in \mathcal{C}(\cF, \epsilon)$ such that $\norm{f - f'}_\infty \leq \epsilon$. The $\epsilon$-covering number, denoted as $N(\epsilon, \cF)$, is the smallest cardinality of such $\mathcal{C}(\cF, \epsilon)$. 
\end{definition}

\section{Algorithms} \label{sec:algorithm}
In this section, we discuss the limitations of existing methods and the technical challenges of extending corruption-robust algorithms to general function approximation, thus motivating our approaches. For notation simplicity, we use the shorthand notation $z = (x,a)$ and $Z_h^t := \{z_h^s\}_{s=1}^t$.

\subsection{Bandits with General Function approximation} \label{sec:bandit}

For nonlinear contextual bandit, the most prominent approach is based on the principle of ``Optimism in the Face of Uncertainty'' (OFU). In this approach, the core step in each round $t\in[T]$ is to construct an appropriate confidence set $\cF_t$ so that the approximator $f_b$ lies in $\cF_t$ with high probability.  
The standard approach is to solve the following least-squares regression problem:
\begin{equation} \label{eqn:naive_regression}
    \hat{f}_t = \argmin_{f \in \cF_{t-1}} \sum_{s=1}^{t-1} \big(f(z_s)-r_s\big)^2, 
\end{equation}
and construct set $\cF_t = \{f \in \cF_{t-1}: \sum_{s=1}^{t-1} |f(z_s)-\hat{f_t}(z_s)|^2 \leq \beta_t^2\}$ where $\beta_t^2 = \cO(\ln N)$ is set to be the log-covering number (as stated in Proposition 2 of \citep{russo2013eluder}) such that $f_b \in \cF_t$ with high probability. 
However, due to the adversarial corruption, the collected samples are now from $f_* \notin \cF$. To ensure that $f_b \in \cF_t$ in this case, the confidence radius needs to be enlarged. Specifically, we determine $\beta_t$ by examining $\sum_{s=1}^{t-1}(f_b(z_s) - \hat{f}_t(z_s))^2$, which suffers from an additional cross-term $\sum_{s=1}^{t-1}|f_b(z_s)-\hat f_t(z_s)|\zeta_s$. To control this term, one has to set $\beta_t^2 = \cO (\zeta + \ln N)$, leading to a final regret bound of $\cO(\zeta \sqrt{T})$. For further understanding, we refer readers to the intuitive explanation for the linear contextual bandit in Appendix B of \citet{wei2022model}. The multiplicative relationship between $\zeta$ and $\sqrt{T}$ is disastrous, and the regret bound becomes vacuous when $\zeta=\Omega(\sqrt{T})$.


\begin{algorithm}[htb]
	\caption{CR-Eluder-UCB}
	\label{alg:ucb_nonlinear_bandit}
	\begin{small}
		\begin{algorithmic}[1]
		    \STATE \textbf{Input:} $\lambda>0,~T,\cF$ and $\cF_0 = \cF$.
			\FOR{Stage $t=1,\dots,T$}
			\STATE Observe $x_t \in \cX$;
			\STATE Find the weighted least-squares solution $\hat{f}_t$ as in \eqref{eqn:weighted_regression};
			\STATE Find $\beta_t>0$ and construct the confidence set $\cF_t$ as in \eqref{eqn:confidence_set};
                \STATE Take the most optimistic function $f_t = \argmax_{f \in \cF_t} f(z_t)$ and choose $a_t = \argmax_{a\in \cA} f_t(x_t,a)$;
			\STATE Receive $r_t$ and set weight $\sigma_t^2$ as in \eqref{eq:weight_bandit}
			\ENDFOR
		\end{algorithmic}
	\end{small}
\end{algorithm}

To overcome this issue, we adapt the uncertainty-weighting strategy from \citet{he2022nearly}, which considers the linear function class in terms of a feature $\phi:\cX \times \cA \to \mathbb{R}^d$ and employs the weighted ridge regression:
$$
\hat{\theta}_t \leftarrow \argmin_{\theta \in \cF_{t-1}}  \sum_{s=1}^{t-1}\frac{(\theta^\top \phi(z_s) - r_s)^2}{\max(1,\frac{1}{\alpha}\|\phi(z_s)\|_{\Lambda_s^{-1}})},
$$
where $\Lambda_t = \lambda I + \sum_{s=1}^{t-1} \phi(z_s) \phi(z_s)^\top$ and $\alpha > 0$ is a tuning parameter. The weights in this equation are a truncated version of the bonus $\norm{\phi(z_s)}_{\Lambda_t^{-1}}$, which can be viewed as the uncertainty of the sample. This means that the estimation of $\hat{f}_t$ will rely more on samples with low uncertainty. The close correlation between weights and bonuses 
stems from the following considerations: when making decisions, the exploration bonus encourages uncertainty; when making estimations, the weights are used to punish uncertainty.

As a natural extension, we replace \eqref{eqn:naive_regression} with its weighted version:
\begin{equation} \label{eqn:weighted_regression}
    \hat{f}_t = \argmin_{f \in \cF_{t-1}} \sum_{s=1}^{t-1} \frac{\big(f(z_s)-r_s\big)^2}{\sigma_s^2},
\end{equation}
and we accordingly set the confidence set as 
\begin{equation}\label{eqn:confidence_set} 
    \begin{aligned}
\cF_t = \Big\{ f \in \cF_{t-1}: \sum_{s=1}^{t-1} \frac{\big(f(z_s)-\hat{f_t}(z_s)\big)^2}{\sigma_s^2} \leq \beta_t^2\Big\}.   
    \end{aligned}
\end{equation}
We then compute the most optimistic value function $f_t$ from the confidence set $\cF_t$ and follow it greedily. The algorithm is called corruption-robust eluder UCB algorithm (CR-Eluder-UCB), whose pseudo-code is given in Algorithm~\ref{alg:ucb_nonlinear_bandit}. While the algorithmic framework shares a similar spirit with the weighted ridge regression, the extension is not straightforward because the bonus and weight choices in \citet{he2022nearly} and their theoretical analysis (which will be discussed later) heavily rely on the linear structure. To handle the general function approximation, we first define the following (weighted) uncertainty estimator:\footnote{We remark that \citet{gentile2022achieving} considers a similar quantity but without weighting.} 
\begin{equation}\footnotesize
    \label{def:wdim_bandit}
    \begin{aligned} 
    D_{\lambda,\sigma,\cF_t}(Z_1^t) = 1 \wedge \sup_{f\in \cF_t}\frac{|f(z_t)-\hat{f_t}(z_t)|/\sigma_t}{\sqrt{\lambda + \sum_{s=1}^{t-1} |f(z_s)-\hat{f_t}(z_s)|^2/\sigma_s^2}},
    \end{aligned}
\end{equation}
where $Z_1^t = \{(x_s,a_s)\}_{s\in[t]}$, and we omit the superscript since bandits only have one step. Intuitively, it measures the degree to which the prediction error on $(x_t,a_t)$ exceeds the historical error evaluated on $Z_1^{t-1}$, thus serving as an uncertainty estimation of $z_t$ given the historical dataset $Z_1^{t-1}$. Accordingly, with $\alpha > 0$ as a tuning parameter, we adopt the following weights:
\begin{equation}\footnotesize
    \label{eq:weight_bandit}
    \begin{aligned} 
\sigma_t^2 = 1 \vee \sup_{f\in \cF_t} \frac{|f(z_t)-\hat{f_t}(z_t)|/\alpha}{\sqrt{\lambda + \sum_{s=1}^{t-1}\big(f(z_s)-\hat{f_t}(z_s)\big)^2/\sigma_s^2}},
    \end{aligned}
\end{equation}
where $a \vee b := \max(a, b)$. When the problem has a low eluder dimension, regular regression, i.e., $\sigma_s \equiv 1$, has a nonlinear analog to the elliptical potential lemma \citep{abbasi2011improved} for linear problems. 
\#\label{eq:sum_eluder_like _quan_dimE}
\sup_{Z_1^T} \sum_{t=1}^T D_{\lambda, \sigma, \cF}^2(Z_1^t) = \tilde{\cO}\big(\dim_{E}(\cF, \lambda)\big).\#
However, when a sequence of weights $\sigma_s \geq 1$ is introduced, the situation becomes quite different. In linear settings, introducing the weights is straightforward. For any ball $\|\theta\|_{2} \leq R$ with radius $R > 0$, the weighted version $\theta/\sigma_s$ also lies in the original ball of parameters. This result heavily relies on the linear structure where the function class is closed under scaling. In contrast, for weighted regression with general nonlinear function approximation, we need to explicitly control the weights in order to control the uncertainty level, which requires new analysis techniques. The generalized result of \eqref{eq:weight_bandit} is Lemma \ref{lm:Relationship_between_Eluder_Dimensions}, which also handles the case of $\sigma_s \geq 1$. This result is essential for the theoretical analysis of nonlinear function approximation. 

\subsection{MDPs with General Function Approximation} \label{sec:mdp_alg}
In this subsection, we extend Algorithm~\ref{alg:ucb_nonlinear_bandit} to MDPs with general function approximation and adversarial corruption. We adopt the $\cF$-LSVI algorithm, which constructs the Q-value function estimations $f_t^h$ backward from $h=H$ to $h=1$, with the initialization $f_t^{H+1} = 0$. Specifically, suppose that we have constructed $f_t^{h+1}$ and proceed to construct $f_t^h$. We reduce the problem to the bandit scenario by defining auxiliary variables $y_s^h = r_s^h + f_t^{h+1}(x_s^{h+1})$ for $s \leq t-1$. Then, we solve the following weighted least-squares problem to approximate the Bellman optimality equation~\eqref{eqn:bellman_opt}:
\begin{equation} \label{eqn:least_square_mdp}
    \hat{f}_t^h=\argmin_{f^h\in\cF_{t-1}^h}\sum_{s=1}^{t-1}\frac{(f^h(x_s^h,a_s^h)-y_s^h)^2}{(\sigma_s^h)^2}.
\end{equation}
We create the confidence set $\cF_t^h = \{f^h \in \mathcal{F}_{t-1}^h: \lambda+\sum_{s=1}^{t-1}(f^h(z_s^h)-\hat{f}_t^h(z_s^h))^2 /(\sigma_s^h)^2 \leq(\beta_t^h)^2\}$ such that $\cT_b^h f_t^{h+1} \in \cF_t^h$ with a high probability. Then, we choose the most optimistic estimation by adding a bonus function to the least-square solution. Specifically, for any $z \in \cX \times \cA$, 
\begin{equation} \label{eqn:confidence_set_mdp}
    f_t^h(z) = 1 \wedge \big(\hat{f}^h_t(z) + \beta_t^h b_t^h(z)\big),
\end{equation}
where we also clip the estimation to the valid range. It remains to determine the bonus function $b_t^h$, which serves as an uncertainty estimator. A natural candidate is 
\#\label{eq:bonus_mdp}
\sup_{f^h\in\cF_t^h}\frac{|f^h(z)-\hat{f}_t^h(z)|}{\sqrt{\lambda+\sum_{s=1}^{t-1}|f^h(z_s^h)-\hat{f}_t^h(z_s^h)|^2/(\sigma_s^h)^2}}.
\#
However, one may not directly use such a bonus. Compared with the bandit setting, since $f_t^{h+1}$ is computed by the least-squares problem in later steps, it depends on $\{x_s^h,a_s^h\}_{s=1}^{t-1}$ because the later state is influenced by the previous decisions. Therefore, concentration inequality cannot be applied directly due to the measurability issue. The standard approach to address the issue is to establish a uniform concentration over an $\epsilon$-covering of the function space $f_t^{h+1}$, which depends on $\cF^{h+1}$ and the space of bonus $b_t^{h+1}$. The bonus \eqref{eq:bonus_mdp} is ``unstable'' in the sense that the statistical complexity (covering number) of the bonus space could be extremely high because it is obtained from an optimization problem involving $Z_1^T$ whose size can be as large as $T$ in the worst case. 

\begin{algorithm}[htb]
	\caption{CR-LSVI-UCB}
	\label{alg:ucb_nonlinear_mdp}
	\begin{small}
		\begin{algorithmic}[1]
		    \STATE \textbf{Input:} $\lambda>0,~T,~\{\cF^h\},~\{\cB^h(\lambda)\}$ and $\cF_0^h=\cF^h$
			\FOR{Episode $t=1,\dots,T$}
			\STATE Receive the initial state $x_t^1$
			\STATE Let $f_t^{H+1}=0$
			\FOR{Step $h=H,\ldots,1$}
			\STATE Find the weighted least-squares solution $\hat{f}_t^h$ as in \eqref{eqn:least_square_mdp};
			\STATE Set $f_t^h(x^h,a^h)$ as in \eqref{eqn:confidence_set_mdp} with $\beta_t^h>0$, and bonus function $b_t^h(\cdot,\cdot)$ as in \eqref{eq:bonus_mdp}.
                \ENDFOR
			\STATE Let $\pi_t$ be the greedy policy of $f_t^h$ for each step $h\in[H]$;
			\STATE Play policy $\pi_t$ and observe trajectory $\{(x_t^h,a_t^h,r_t^h)\}_{h=1}^H$;
			\STATE Set weight $\sigma_t^h$ as in \eqref{eq:weight_mdp} for all $h\in[H]$;
			\ENDFOR
		\end{algorithmic}
	\end{small}
\end{algorithm}

To facilitate our analysis, we assume that we have access to a bonus class $\cB^{h+1}(\lambda)$ that has a mild covering number and can approximate \eqref{eq:bonus_mdp}. While we may encounter an approximation error, we assume that the equality holds for simplicity because it suffices to illustrate the idea that we want to design a corruption-robust algorithm for MDPs with general function approximation. We will provide examples in Appendix~\ref{s:subsample} where the approximation error is usually negligible under suitable conditions. In summary, we assume that for any sequence of data $Z_h^{t-1}=\{(x_s^h,a_s^h,r_s^h)\}_{s=1}^{t-1}$, we can find a $b_t^h \in \cB^h(\lambda)$ that equals \eqref{eq:bonus_mdp}. With this bonus space, we are ready to present Corruption-Robust LSVI-UCB (CR-LSVI-UCB) in Algorithm \ref{alg:ucb_nonlinear_mdp}. 
We define the confidence interval quantity for MDPs 
$D_{\lambda,\sigma^h,\cF_t^h}(Z_h^t)$ as:\footnote{We note that a similar treatment of nonlinear LSVI-UCB appeared in \citet{TZ23-lt} but without weighting.}
\begin{equation} \footnotesize
\label{def:wdim_mdp}
    \begin{aligned} 
    1 \wedge \sup_{f^h\in\cF_t^h}\frac{|f^h(z^h_t)-\hat{f}_t^h(z^h_t)|/\sigma_t^h}{\sqrt{\lambda+\sum_{s=1}^{t-1}|f^h(z_s^h)-\hat{f}_t^h(z_s^h)|^2/(\sigma_s^h)^2}}.
    \end{aligned}
\end{equation} 
Note that the $D_{\lambda,\sigma^h,\cF_t^h}(Z_h^t)$ and the bonus $b_t^h$ mainly differ in a factor of weight in the iteration $t$. Therefore, they are almost identical for the regular regression. However, as we employ weighted regression in the algorithm, they are not identical. Specifically, $b_t^h(z)$ is the bonus used for the agent to ensure optimism, while $D_t^h(z)$ is the ratio between the weighted prediction error and weighted in-sample error, which is used in the theoretical analysis to bound the regret (see Eqn. \eqref{eqn:regret_mdp} for details).

Similarly, with $\alpha$ as the tuning parameter, the sample-dependent weights are chosen to be 
\begin{equation}\footnotesize
    \label{eq:weight_mdp}
    \begin{aligned} 
(\sigma_t^h)^2= 1 \vee \sup_{f^h\in\cF_t^h}\frac{|f^h(z_t^h)-\hat{f}_t^h(z_t^h)|/\alpha}{\sqrt{\lambda+\sum_{s=1}^{t-1}(f^h(z_s^h)-\hat{f}_t^h(z_s^h))^2/(\sigma_s^h)^2}}.
    \end{aligned}
\end{equation}

\section{Main Results} \label{sec:results}
In this section, we establish the main theoretical results.
\subsection{Bandits with General Function Approximation}
\begin{theorem}\label{th:bandit}
Suppose that Assumption \ref{as:bandit} holds. For any cumulative corruption $\zeta>0$ and $\delta\in(0,1)$, we take the covering parameter $\gamma=1/(T\zeta)$, the eluder parameter $\lambda=\ln(N(\gamma,\cF))$, the weighting parameter $\alpha=\sqrt{\ln(N(\gamma,\cF))}/\zeta$ and the confidence radius 
$$
\beta=c_{\beta}(\alpha\zeta+\sqrt{\ln(N(\gamma,\cF)/\delta)}+\sqrt{c_0}),
$$
where $c_0=\sqrt{\eta^2\ln(2/\delta)}$ and $c_{\beta}>0$ is an absolute constant. Then, with probability at least $1-\delta$, the cumulative regret with $T$ rounds is bounded by
$$
\footnotesize
\begin{aligned}
\Tilde{\cO}\bigg(\sqrt{T\dim_E\Big(\cF,\sqrt{\frac{\lambda}{T}}\Big)\ln(N(\gamma,\cF))} + \zeta\dim_E\Big(\cF,\sqrt{\frac{\lambda}{T}}\Big)\bigg).
\end{aligned}
$$
\end{theorem}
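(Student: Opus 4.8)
The plan is to follow the standard optimism-based regret decomposition, but with the uncertainty-weighting machinery controlling the extra corruption-induced terms so that $\zeta$ enters additively. First I would establish the key concentration/feasibility claim: with the stated choice of $\beta$, the ``benchmark'' function $f_b$ lies in every confidence set $\cF_t$ with high probability. This is where the weighting pays off. Writing the weighted in-sample error of $f_b$ around $\hat f_t$ and expanding via the weighted regression optimality, one picks up (i) a martingale/self-normalized term handled by a weighted Freedman- or Bernstein-type inequality over a $\gamma$-cover of $\cF$ (contributing the $\sqrt{\ln(N(\gamma,\cF)/\delta)}$ and $\sqrt{c_0}$ pieces), and (ii) a corruption cross-term $\sum_{s}|f_b(z_s)-\hat f_t(z_s)|\,\zeta_s/\sigma_s^2$. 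The whole point of dividing by $\sigma_s^2$ and of the specific form of $\sigma_s^2$ in \eqref{eq:weight_bandit} is that whenever a sample carries large uncertainty (large potential $|f(z_s)-\hat f_t(z_s)|$ relative to the accumulated in-sample error), the weight $\sigma_s^2$ is inflated by exactly the factor needed to make $\zeta_s/\sigma_s^2$ small; a Cauchy–Schwarz split of the cross-term then bounds it by $\alpha\zeta\cdot(\text{in-sample error}) + (1/\alpha)\sum_s \zeta_s \le \alpha\zeta\,\beta + \zeta/\alpha$, which with $\alpha=\sqrt{\ln N}/\zeta$ yields the $\alpha\zeta=\sqrt{\ln N}$ contribution in $\beta$ and closes the fixed-point defining $\beta$. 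I would also record the elementary facts that $\sigma_s^2\ge 1$ always and $\sum_s (\sigma_s^2-1)\le (\text{something like }\beta/\alpha)$, i.e. the weights are rarely large.

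Next, conditioned on $f_b\in\cF_t$ for all $t$, I would bound the per-round regret. By optimism, $f_*(x_t)-f_*(x_t,a_t)\le f_t(z_t)-f_*(z_t)\le f_t(z_t)-f_b(z_t)+\zeta_t$, and since both $f_t,\hat f_t\in\cF_t$ and $f_b\in\cF_t$, the triangle inequality plus the definition of the confidence set gives $|f_t(z_t)-f_b(z_t)|\le 2\beta\cdot\big(D_{\lambda,\sigma,\cF_t}(Z_1^t)\vee\text{(a }\sqrt{\lambda/T}\text{-scale floor)}\big)$ — more precisely this is where $D_{\lambda,\sigma,\cF_t}(Z_1^t)$ as defined in \eqref{def:wdim_bandit} appears, with the truncation at $1$ handled by the boundedness in Assumption~\ref{as:bandit}. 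Summing, $\reg(T)\lesssim \zeta + \beta\sum_{t=1}^T \sigma_t\, D_{\lambda,\sigma,\cF_t}(Z_1^t)$ (the $\sigma_t$ reappears because $D$ carries a $1/\sigma_t$ and the true error does not), and by Cauchy–Schwarz this is at most $\beta\sqrt{\big(\sum_t \sigma_t^2\big)\big(\sum_t D_{\lambda,\sigma,\cF_t}^2(Z_1^t)\big)} + \zeta$. Using $\sum_t\sigma_t^2 \le T + \sum_t(\sigma_t^2-1) = T + \tilde O(\beta/\alpha) = T+\tilde O(\zeta)$, it remains only to bound $\sum_{t=1}^T D_{\lambda,\sigma,\cF_t}^2(Z_1^t)$.

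The crux is this last sum, the weighted analog of the elliptical-potential/eluder bound \eqref{eq:sum_eluder_like _quan_dimE}. This is the step I expect to be the main obstacle, and it is exactly what Lemma~\ref{lm:Relationship_between_Eluder_Dimensions} is designed for. For unweighted regression ($\sigma_s\equiv1$) the standard argument of \citet{russo2013eluder} shows $\sum_t D^2 = \tilde O(\dim_E(\cF,\sqrt{\lambda/T}))$; with general weights $\sigma_s\ge 1$ the function class is not closed under the rescaling $f\mapsto f/\sigma_s$, so one cannot simply reduce to the unweighted case. Instead I would argue that inflating the weight of a sample only \emph{decreases} its contribution to the accumulated denominator, so a round with large $D^2$ still forces a new ``$\epsilon$-independence'' event in the sense of the eluder definition — the weights can be absorbed because each $\sigma_s^2$ is itself bounded by a slowly growing quantity (tied to $\beta/\alpha$), and $\sum_s(\sigma_s^2-1)$ is controlled, so up to logarithmic/$\tilde O(\zeta)$ factors the number of large-$D^2$ rounds is still governed by $\dim_E(\cF,\sqrt{\lambda/T})$. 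Plugging $\sum_t D^2 = \tilde O(\dim_E(\cF,\sqrt{\lambda/T}))$, $\sum_t\sigma_t^2 = T+\tilde O(\zeta)$, $\beta = \tilde O(\sqrt{\ln N}+\sqrt{c_0})$ (since $\alpha\zeta=\sqrt{\ln N}$), and $\lambda=\ln N(\gamma,\cF)$, $\gamma=1/(T\zeta)$ into $\reg(T)\lesssim \zeta + \beta\sqrt{(T+\zeta)\dim_E}$ and separating the $\sqrt{T}$ and $\sqrt{\zeta\cdot\dim_E}\lesssim \zeta+\dim_E$ pieces gives the claimed $\tilde O(\sqrt{T\,\dim_E(\cF,\sqrt{\lambda/T})\ln N(\gamma,\cF)} + \zeta\,\dim_E(\cF,\sqrt{\lambda/T}))$ bound; the choice $\gamma=1/(T\zeta)$ ensures the covering-discretization errors are lower-order.
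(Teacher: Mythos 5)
Your overall skeleton (optimism plus feasibility of $f_b$ in every $\cF_t$, then reduce the regret to the sum of weighted uncertainties) is the same as the paper's, but the two places where the weighting actually has to do work are not carried out correctly. First, the crux you yourself identify — bounding $\sum_{t}D_{\lambda,\sigma,\cF_t}^2(Z_1^t)$ — is not established by your argument. You propose to "absorb" the weights because each $\sigma_s^2$ is "slowly growing (tied to $\beta/\alpha$)" and $\sum_s(\sigma_s^2-1)$ is controlled. Neither premise holds: with $\alpha=\sqrt{\ln N}/\zeta$ and $\lambda=\ln N$, an individual weight can be as large as $\sigma_s^2\approx 2/(\alpha\sqrt{\lambda})=2\zeta/\ln N$, which is polynomially large when $\zeta$ is (e.g.\ $\zeta=\Theta(\sqrt{T})$), and on the rounds with $\sigma_t>1$ one has $\sigma_t^2=D_t^2/\alpha^2$, so $\sum_t(\sigma_t^2-1)$ is of order $\dim_E/\alpha^2=\zeta^2\dim_E/\ln N$, not $\tilde{\cO}(\beta/\alpha)=\tilde{\cO}(\zeta)$. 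Crudely absorbing a $\max_s\sigma_s^2$ factor into the eluder argument would only give $\sum_t D_t^2=\tilde{\cO}(\zeta\dim_E)$, which re-introduces a multiplicative $\sqrt{\zeta}$ against $\sqrt{T}$ and does not prove the theorem; and the slogan "a large-$D^2$ round forces a new independence event" is not enough, because the denominator of $D_t$ is itself deflated by the (possibly huge) weights of past samples. The paper's Lemma~\ref{lm:Relationship_between_Eluder_Dimensions} needs a genuinely different mechanism: partition the rounds dyadically by the size of the squared prediction gap $L(z_t)$, note that within a level the weights are within a constant factor $\sqrt{2c_0}$ of one another (since the in-sample term is sandwiched between $\sqrt{\lambda}$ and $\sqrt{c_0\ln N}$), so the weights cancel between numerator and denominator inside each level, and compare each level against the threshold level $\iota_0$ determined by $\zeta$ to identify exactly when $\sigma_t=1$. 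Without this weight-level control (or an equivalent idea) the key bound $\sum_t D_t^2=\tilde{\cO}(\dim_E(\cF,\sqrt{\lambda/T}))$ is unproved.

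Second, your confidence-radius step also has a gap: the split of the corruption cross-term into $\alpha\zeta\beta+\zeta/\alpha$ leaves a term $\zeta/\alpha=\zeta^2/\sqrt{\ln N}$ inside the bound on the weighted in-sample error, which would force $\beta\gtrsim\zeta/(\ln N)^{1/4}$ and destroy the additive dependence on $\zeta$ — it does not "close the fixed point" at $\beta=\cO(\sqrt{\ln N})$ as you claim. The paper avoids any $\zeta/\alpha$ term by bounding the cross-term per sample: since $\hat f_t\in\cF_{t-1}\subseteq\cF_s$ and (inductively) $f_b\in\cF_s$, the definition of the weight gives $|\hat f_t(z_s)-f_b(z_s)|/\sigma_s^2\le 2\alpha\sqrt{\lambda+\beta_s^2}$ for every $s<t$, hence the whole cross-term is at most $2\alpha\zeta\sup_{s<t}\sqrt{\lambda+\beta_s^2}$, and the fixed point then yields $\beta=\cO(\alpha\zeta+\sqrt{\ln(N/\delta)})=\cO(\sqrt{\ln N})$ as in the theorem. (Your final Cauchy--Schwarz assembly $\beta\sqrt{(\sum_t\sigma_t^2)(\sum_t D_t^2)}$ is a fine alternative to the paper's split into $\sigma_t=1$ and $\sigma_t>1$ rounds, and it would still deliver the stated bound once $\sum_t\sigma_t^2\le T+\zeta^2\dim_E/\ln N$ is used in place of your $T+\tilde{\cO}(\zeta)$ — but only after the two gaps above are repaired.)
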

\textbf{Interpretation.} This theorem asserts that for contextual bandits under adversarial corruption and with general function approximation, Algorithm~\ref{alg:ucb_nonlinear_bandit} achieves an additive dependence on the cumulative corruption $\zeta$ as desired. To interpret the result, we consider linear contextual bandits with corruption, where $\dim_E(\cF,\lambda/T)=d$ and $\ln(N(\gamma,\cF))=\tilde{\cO}(d)$. This implies a regret bound of $\tilde{\cO}(d\sqrt{T}+d\zeta)$, matching that of \citet{he2022nearly}. According to the lower bound in \citet{he2022nearly} and two existing lower bound results \citep{lattimore2020bandit,bogunovic2021stochastic}, our regret is minimax optimal up to logarithmic factors. Particularly when $\zeta=\cO(\sqrt{T})$ and $\dim_E(\cF,\lambda/T)=\tilde{\cO}(\ln(N(\gamma,\cF))$, the first term dominates and matches the regret bound in the uncorrupted setting. Moreover, our regret is sublinear when $\zeta=o(T)$.


\subsection{MDPs with General Function Approximation}
\begin{theorem}\label{th:mdp}
Suppose that Assumption \ref{as:mdp} holds. For any cumulative corruption $\zeta>0$ and $\delta\in(0,1)$, we take the covering parameter $\gamma=1/(T\beta\zeta)$, the
eluder parameter $\lambda=\ln (N_T(\gamma))$, the weighting parameter $\alpha=\sqrt{\ln N_T(\gamma)}/\zeta$ and the confidence radius $
\beta_t^h=\beta(\delta)=c_{\beta}(\alpha\zeta+\sqrt{\ln (HN_T(\gamma)/\delta)})$, where 
$$\footnotesize
\begin{aligned}
N_T(\gamma)=\max_h N\Big(\frac{\gamma}{T},\cF^h\Big)\cdot N\Big(\frac{\gamma}{T},\cF^{h+1}\Big)\cdot N\Big(\frac{\gamma}{T},\cB^{h+1}(\lambda)\Big),
\end{aligned}
$$ 
and we remark that the coverings are all with respect to the $\norm{\cdot}_\infty$. Then, with probability at least $1-\delta$, the cumulative regret with $T$ rounds is bounded by
\begin{equation} \label{eqn:regret_mdp}
\footnotesize
\begin{aligned}
\Tilde{\cO}\bigg(\sqrt{TH\ln(N_T(\gamma))\dim_E\Big(\cF,\sqrt{\frac{\lambda}{T}}\Big)} + \zeta\dim_E\Big(\cF,\sqrt{\frac{\lambda}{T}}\Big)\bigg).
\end{aligned}
\end{equation}
\end{theorem}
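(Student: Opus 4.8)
The plan is to combine the optimism-based $\cF$-LSVI analysis with the weighted-regression machinery behind Theorem~\ref{th:bandit}: control each Bellman residual by the weighted confidence width $D_{\lambda,\sigma^h,\cF_t^h}$ of \eqref{def:wdim_mdp}, and then sum the widths over rounds via the weighted nonlinear ``elliptical potential'' bound of Lemma~\ref{lm:Relationship_between_Eluder_Dimensions}. \emph{Step 1 (confidence sets).} I would first fix a high-probability event on which $\cT_b^h f_t^{h+1}\in\cF_t^h$ for every $(t,h)$. Since $f_t^{h+1}$ depends on the history $Z_{h+1}^{t-1}$, the responses $y_s^h=r_s^h+f_t^{h+1}(x_s^{h+1})$ in \eqref{eqn:least_square_mdp} are not adapted, so I would prove a weighted self-normalized concentration inequality uniformly over a $(\gamma/T)$-cover of $\cF^h\times\cF^{h+1}\times\cB^{h+1}(\lambda)$ — which is exactly where $N_T(\gamma)$ enters — and then transfer it to the realized $f_t^{h+1}$ at the cost of the covering error. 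The adversarial corruption adds a cross term $\sum_s|(\cT_b^hf_t^{h+1}-\hat f_t^h)(z_s^h)|\,\zeta_s^h/(\sigma_s^h)^2$; the weights \eqref{eq:weight_mdp} are truncated uncertainty estimates built so that (uncertainty)$/(\sigma_s^h)^2\le\alpha$, and, following \citet{he2022nearly}, this bounds the cross term by $\tilde{\cO}(\alpha\zeta\,\beta_t^h)$, so the radius only inflates to $\beta_t^h=\tilde{\cO}(\alpha\zeta+\sqrt{\ln(HN_T(\gamma)/\delta)})$, which is $\tilde{\cO}(\sqrt{\ln N_T(\gamma)})$ under $\alpha=\sqrt{\ln N_T(\gamma)}/\zeta$; crucially it does not grow with $\zeta$.

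\emph{Step 2 (optimism and the regret decomposition).} On this event I would prove $f_t^h(z)\ge Q_*^h(z)$ for all $z$ by backward induction on $h$, starting from $f_t^{H+1}=0=V_*^{H+1}$: if $f_t^{h+1}\ge V_*^{h+1}$ then monotonicity of the uncorrupted Bellman operator gives $\cT_b^hf_t^{h+1}\ge\cT_b^hV_*^{h+1}=Q_*^h$, and since $\cT_b^hf_t^{h+1}\in\cF_t^h$ and $\lambda$ sits inside the confidence radius, taking $f^h=\cT_b^hf_t^{h+1}$ in \eqref{eq:bonus_mdp} yields $\beta_t^hb_t^h(z)\ge|(\cT_b^hf_t^{h+1}-\hat f_t^h)(z)|$, whence $\hat f_t^h(z)+\beta_t^hb_t^h(z)\ge(\cT_b^hf_t^{h+1})(z)\ge Q_*^h(z)$, and the clipping in \eqref{eqn:confidence_set_mdp} preserves this as $Q_*^h\le1$. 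The same estimate also bounds the (uncorrupted) Bellman residual along the trajectory: $0\le\cE_b^h(f_t,z_t^h):=f_t^h(z_t^h)-(\cT_b^hf_t^{h+1})(z_t^h)\le 1\wedge 2\beta_t^hb_t^h(z_t^h)\le 2\beta_t^h\,\sigma_t^h\,D_{\lambda,\sigma^h,\cF_t^h}(Z_h^t)$, using $D_{\lambda,\sigma^h,\cF_t^h}(Z_h^t)=1\wedge(b_t^h(z_t^h)/\sigma_t^h)$ and $\sigma_t^h\ge1$. Then optimism plus the standard value-difference telescoping of $f_t^1(x_t^1)-V_{\pi_t}^1(x_t^1)$ along the greedy policy $\pi_t$ gives $\reg(T)\le\sum_t\sum_{h=1}^H\E_{\pi_t}[\cE_b^h(f_t,\cdot)]$; replacing the per-transition expectations by the realized $\cE_b^h(f_t,z_t^h)$ costs a martingale term $\tilde{\cO}(\sqrt{TH\ln(1/\delta)})$ (the transition corruption also contributes an $\cO(H\zeta)$ term, absorbed into $\zeta\dim_E(\cF)$ since $\dim_E(\cF)=\sum_h\dim_E(\cF^h)\ge H$), leaving $\reg(T)\lesssim\beta\sum_{h=1}^H\sum_{t=1}^T\sigma_t^hD_{\lambda,\sigma^h,\cF_t^h}(Z_h^t)+\tilde{\cO}(\sqrt{TH\ln(1/\delta)})$.

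\emph{Step 3 (summing the weighted widths).} For each $h$ I would split $\sum_t\sigma_t^hD_t^h$ on $\{\sigma_t^h=1\}$ versus $\{\sigma_t^h>1\}$. On $\{\sigma_t^h=1\}$ the summand is $D_t^h$, so Cauchy--Schwarz with Lemma~\ref{lm:Relationship_between_Eluder_Dimensions} — which shows $\sum_t(D_t^h)^2=\tilde{\cO}(\dim_E(\cF^h,\sqrt{\lambda/T}))$ survives the data-dependent weights $\sigma_s^h\ge1$ — gives $\sum_tD_t^h\le\sqrt{T\sum_t(D_t^h)^2}=\tilde{\cO}(\sqrt{T\dim_E(\cF^h,\sqrt{\lambda/T})})$. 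On $\{\sigma_t^h>1\}$ the rule \eqref{eq:weight_mdp} forces $(\sigma_t^h)^2=b_t^h(z_t^h)/\alpha$, hence $D_t^h=1\wedge\alpha\sigma_t^h$; the main sub-case $\alpha\sigma_t^h\le1$ gives $\sigma_t^hD_t^h=(D_t^h)^2/\alpha$, summing to $\tilde{\cO}(\alpha^{-1}\dim_E(\cF^h,\sqrt{\lambda/T}))$, while the sub-case $D_t^h=1$ occurs $\tilde{\cO}(\dim_E)$ times and, using $(\sigma_t^h)^2\le\cO(1/(\alpha\sqrt\lambda))$ and AM--GM, is lower-order. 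Thus $\sum_t\sigma_t^hD_t^h=\tilde{\cO}(\sqrt{T\dim_E(\cF^h,\sqrt{\lambda/T})}+\alpha^{-1}\dim_E(\cF^h,\sqrt{\lambda/T}))$; plugging $\beta=\tilde{\cO}(\sqrt{\ln N_T(\gamma)})$ and $\alpha^{-1}=\zeta/\sqrt{\ln N_T(\gamma)}$, summing over $h$ by Cauchy--Schwarz (so $\sum_h\sqrt{\dim_E(\cF^h,\cdot)}\le\sqrt{H\dim_E(\cF,\cdot)}$ with $\dim_E(\cF,\cdot):=\sum_h\dim_E(\cF^h,\cdot)$) turns the first group into $\tilde{\cO}(\sqrt{TH\ln(N_T(\gamma))\dim_E(\cF,\sqrt{\lambda/T})})$ and the second into $\tilde{\cO}(\zeta\dim_E(\cF,\sqrt{\lambda/T}))$, which dominates the martingale term and yields \eqref{eqn:regret_mdp}.

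I expect the crux to be Step~3, specifically Lemma~\ref{lm:Relationship_between_Eluder_Dimensions}: the linear proof of $\sum_tD_{\lambda,\sigma,\cF}^2(Z_1^t)=\tilde{\cO}(\dim_E)$ uses that the function class is closed under scaling ($\theta/\sigma$ stays in the same ball), which has no analogue for a general $\cF$, so one needs a new pigeonhole argument over $\epsilon$-independent subsequences that explicitly tracks how the weights $\sigma_s\ge1$ inflate the effective ``radius''. A secondary obstacle special to the MDP case is the instability of the bonus: the covering number $N(\cdot,\cB^{h+1}(\lambda))$ of \eqref{eq:bonus_mdp} can be huge because $b_t^{h+1}$ solves an optimization over $Z_1^T$; the theorem finesses this by assuming a bonus class with mild covering number, and the appendix discharges it through an effective-dimension bound when $\cF$ embeds in a Hilbert space and through a weighted sub-sampling construction in general.
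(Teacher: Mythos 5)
Your proposal follows essentially the same route as the paper's proof: Step 1 is the paper's Lemma~\ref{lm:ucb_mdp} together with Lemma~\ref{lm:empirical_diff_mdp} (uniform concentration over a cover of $\cF^h\times\cF^{h+1}\times\cB^{h+1}(\lambda)$, with the corruption cross term tamed by the weights to $\tilde{\cO}(\alpha\zeta\beta)$); Step 3 is the paper's $p_1/p_2$ split on $\{\sigma_t^h=1\}$ versus $\{\sigma_t^h>1\}$ followed by Lemma~\ref{lm:Relationship_between_Eluder_Dimensions}, and you correctly identify that lemma and the bonus-stability assumption as the technical crux (your bookkeeping via $\sigma_t^h D_t^h$ with two sub-cases is a cosmetic variant of the paper's bound $\min(1,\beta b_t^h)\le(\beta/\alpha)(D_{\lambda,\sigma^h,\cF_t^h}(Z_h^t))^2$ on $\{\sigma_t^h>1\}$, and it leads to the same terms).

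The one step that does not hold as written is the optimism argument in your Step 2. You claim pointwise optimism $f_t^h(z)\ge Q_*^h(z)$ for all $z$ via monotonicity of $\cT_b^h$ and the identity $\cT_b^h V_*^{h+1}=Q_*^h$. But $Q_*^h$ is defined through the \emph{corrupted} Bellman operator $\cT^h$ in \eqref{eqn:bellman_opt}, while the confidence sets only certify closeness of $\hat f_t^h$ to $\cT_b^h f_t^{h+1}$; the discrepancy $|(\cT^h g-\cT_b^h g)(x,a)|$ is controlled only at the realized pairs $(x_t^h,a_t^h)$ and only in cumulative $\ell^1$ form ($\sum_t\zeta_t^h\le\zeta$), not uniformly in $z$ or uniformly in $t$. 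Hence $\cT_b^hV_*^{h+1}\ne Q_*^h$ in general and exact pointwise optimism is unobtainable; what is provable (and all that the telescoping needs) is the cumulative statement $\sum_t V_*^1(x_t^1)\le\sum_t f_t^1(x_t^1)+H\zeta$, which the paper establishes in Lemma~\ref{lm:optimism_mdp} by a backward induction over $h$ that sums along the realized data and pays $\zeta$ per step, together with the per-round residual bound \eqref{eq:E_fkh_Th_fkh+1} carrying an extra $\zeta_t^h$. Since you already budget an $\cO(H\zeta)$ term (you attribute it to converting between $\cE^h$ and the clean residual $\cE_b^h$), repairing this step as in the paper changes nothing downstream — the $H\zeta$ slack is absorbed into $\zeta\dim_E(\cF,\sqrt{\lambda/T})$ — but the inductive claim as you state it should be replaced by the summed-optimism argument, or by explicitly tracking the $\zeta_t^h$ gap between $\cT^h$ and $\cT_b^h$ at the visited pairs.
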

\textbf{Interpretation.} This theorem guarantees that for corrupted models in MDPs with general function approximation, Algorithm~\ref{alg:ucb_nonlinear_mdp} achieves an additive dependence on $\zeta$. In particular, for linear MDPs, we can obtain that $\ln N_T(\gamma)=\Tilde{\cO}(d^2)$ and 
$$
\begin{aligned}
\dim_E\Big(\cF,\sqrt{\frac{\lambda}{T}}\Big) = \sum_{h=1}^H \dim_E\Big(\cF^h, \frac{\lambda}{T}\Big) = \Tilde{\cO}\big(Hd\big).
\end{aligned}
$$ 
It follows from Theorem \ref{th:mdp} that $\reg(T)=\Tilde{\cO}\big(\sqrt{TH^2d^3}+\zeta Hd\big)$, where the first term matches the bound of LSVI-UCB in the non-corrupted setting \citep{jin2020provably}. We compare our result with regret for misspecified linear MDPs in \citet{jin2020provably}, which is a special case of our corrupted setting. By taking $\zeta=T\zeta'$, where $\zeta'$ is the uniform bound for misspecification, our algorithm achieves a regret bound of $\tilde{\cO}\big(\sqrt{TH^2d^3} + \zeta' dHT\big)$, which is consistent with that in \citet{jin2020provably}. We note that our corruption-independent term suffers from a $\sqrt{d}$ amplification due to the uniform concentration used for handling temporal dependency, which also happens for other LSVI-based works \citep{jin2020provably, wang2020reinforcement, kong2021online}.

\subsection{Unknown Corruption Level} \label{sec:unkown_level}
This section discusses the solution for unknown cumulative corruption $\zeta$ by following \citet{he2022nearly}. We estimate $\zeta$ by a tuning parameter $\bar \zeta$ and replace $\zeta$ in the threshold parameter $\alpha$ by $\bar \zeta$ for the bandit and MDP models. The bound for MDPs is shown in the following theorem.
\begin{theorem}\label{th:mdp_unknown} Set  $\beta_t^h=c_{\beta}\sqrt{\ln (HN_T(\gamma)/\delta)}$ and $\alpha=\sqrt{\ln N_T(\gamma)}/\bar\zeta$, and keep other conditions the same as Theorem \ref{th:mdp}. If $0\le \zeta \le \bar\zeta$, with probability at least $1-\delta$, we have the same bound in \eqref{eqn:regret_mdp} except that we replace $\zeta$ with $\bar{\zeta}$. If $\zeta>\bar\zeta$, we have $\reg(T)=\Tilde{\cO}(T)$.
\end{theorem}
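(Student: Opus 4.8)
The argument splits on whether the guess $\bar\zeta$ dominates the true corruption budget $\zeta$. The branch $\zeta>\bar\zeta$ is trivial and needs no concentration: rewards are non-negative with $\sum_{h=1}^H r^h\le 1$ almost surely, so $V_*^1(x),V_{\pi_t}^1(x)\in[0,1]$ for every $x$, whence each per-episode gap $V_*^1(x_t^1)-V_{\pi_t}^1(x_t^1)$ is at most $1$ and $\reg(T)\le T=\tilde\cO(T)$ deterministically. All the content is therefore in the branch $0\le\zeta\le\bar\zeta$, where the plan is to re-run the proof of \Cref{th:mdp} line by line with every appearance of the corruption level in the tuning parameters ($\gamma,\lambda,\alpha,\beta$) replaced by $\bar\zeta$, while remembering that the genuine per-step corruptions still satisfy $\sum_t\zeta_t^h\le\zeta\le\bar\zeta$.

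Two observations make this substitution go through. \emph{(i) The confidence sets stay valid.} With $\alpha=\sqrt{\ln N_T(\gamma)}/\bar\zeta$ and $\zeta\le\bar\zeta$ we get $\alpha\zeta\le\alpha\bar\zeta=\sqrt{\ln N_T(\gamma)}\le\sqrt{\ln(HN_T(\gamma)/\delta)}$, so the confidence radius $c_\beta(\alpha\zeta+\sqrt{\ln(HN_T(\gamma)/\delta)})$ required in the proof of \Cref{th:mdp} is, up to the constant $c_\beta$, already supplied by the prescribed $\beta_t^h=c_\beta\sqrt{\ln(HN_T(\gamma)/\delta)}$. Hence the high-probability event on which $\cT_b^hf_t^{h+1}\in\cF_t^h$ for all $t,h$ (and thus $f_t^h\ge Q_*^h$ pointwise) is unaffected: the uniform concentration over the $\gamma/T$-covers of $\cF^h,\cF^{h+1},\cB^{h+1}(\lambda)$, the bonus-stability argument justifying the use of \eqref{eq:bonus_mdp}, and the control of the weighted corruption cross term via the weights \eqref{eq:weight_mdp} all carry over verbatim. \emph{(ii) The regret bound rescales correctly.} On that event, the regret decomposition of \Cref{th:mdp} controls $\reg(T)$, up to lower-order terms, by $\sum_{t,h}\beta_t^h E_t^h+\sum_{t,h}\zeta_t^h$, where $E_t^h$ is the unclipped analogue of $D_t^h:=D_{\lambda,\sigma^h,\cF_t^h}(Z_h^t)$. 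On rounds with $E_t^h\le\alpha$ one has $\sigma_t^h=1$ and $E_t^h=D_t^h$, so Cauchy--Schwarz together with \Cref{lm:Relationship_between_Eluder_Dimensions} — which bounds $\sum_{t,h}(D_t^h)^2$ by $\tilde\cO(\dim_E(\cF,\sqrt{\lambda/T}))$ with \emph{no} corruption dependence — produces the corruption-free term $\tilde\cO(\sqrt{TH\ln N_T(\gamma)\,\dim_E(\cF,\sqrt{\lambda/T})})$. On rounds with $E_t^h>\alpha$ one has $(\sigma_t^h)^2=E_t^h/\alpha$, hence $E_t^h\lesssim(D_t^h)^2/\alpha$ (the few rounds with $E_t^h>1/\alpha$ have $D_t^h=1$, number at most $\sum_{t,h}(D_t^h)^2$, and each carries $E_t^h=\tilde\cO(1/\sqrt\lambda)$, so they are lower-order); this part contributes $\lesssim\frac1\alpha\sum_{t,h}(D_t^h)^2=\tilde\cO(\bar\zeta\,\dim_E(\cF,\sqrt{\lambda/T})/\sqrt{\ln N_T(\gamma)})$, which after multiplication by $\beta_t^h=\tilde\cO(\sqrt{\ln N_T(\gamma)})$ becomes the corruption term $\tilde\cO(\bar\zeta\,\dim_E(\cF,\sqrt{\lambda/T}))$; the residual $\sum_{t,h}\zeta_t^h\le H\bar\zeta$ is absorbed. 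Since $\zeta\le\bar\zeta$, every corruption-dependent intermediate bound is dominated by its $\bar\zeta$-analogue, and we recover \eqref{eqn:regret_mdp} with $\zeta$ replaced by $\bar\zeta$.

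The only step beyond bookkeeping — and the natural place to slip — is that lowering the weighting parameter from the oracle value $\sqrt{\ln N_T(\gamma)}/\zeta$ to $\sqrt{\ln N_T(\gamma)}/\bar\zeta$, i.e.\ using \emph{larger} weights than strictly needed, must not inflate the weighted-uncertainty sum; the split above shows the $\{E_t^h>\alpha\}$ regime still contributes only the claimed $\tilde\cO(\bar\zeta\,\dim_E(\cF,\sqrt{\lambda/T}))$ after scaling by $\beta_t^h$, while the $\{E_t^h\le\alpha\}$ regime only shrinks. Its mirror image — that \emph{under}-estimating is fatal — is exactly why the branch $\zeta>\bar\zeta$ gives nothing better than $\tilde\cO(T)$: once $\alpha\zeta>\sqrt{\ln N_T(\gamma)}$ the prescribed $\beta_t^h$ no longer certifies $\cT_b^hf_t^{h+1}\in\cF_t^h$ and optimism can fail. (A minor point: ``other conditions the same as \Cref{th:mdp}'' is to be read with $\bar\zeta$ substituted for $\zeta$ throughout, e.g.\ $\gamma=1/(T\beta\bar\zeta)$ and $\lambda=\ln N_T(\gamma)$.)
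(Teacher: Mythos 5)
Your proposal is correct and follows essentially the same route as the paper's (very brief) proof: the trivial bound $\reg(T)\le T$ when $\zeta>\bar\zeta$, and for $\zeta\le\bar\zeta$ a substitution of $\bar\zeta$ for $\zeta$ in the tuning parameters, using $\alpha\zeta\le\alpha\bar\zeta=\sqrt{\ln N_T(\gamma)}$ so the prescribed $\beta_t^h$ still certifies optimism, together with re-applying Lemma \ref{lm:Relationship_between_Eluder_Dimensions} with $\bar\zeta$ in place of $\zeta$. You in fact spell out more detail than the paper does, and your reading of "other conditions the same" with $\bar\zeta$ substituted matches the intended argument.
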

The detailed proof is provided in Appendix \ref{s:Unknown_Corruption_Level}, and we can have similar results for bandits. 

\textbf{Comparison under a weak adversary.} We compare our result with the COBE framework \citep{wei2022model} under the weak adversary scenario introduced in Remark \ref{rmk:corruption}. The key property of COBE is that under a weak adversary, it can convert any algorithm $\mathbf{Alg}$ with a known corruption level $\zeta$ to a new COBE+$\mathbf{Alg}$ that achieves the same regret bound in the unknown $\zeta$ case.

For general function approximation,  COBE + GOLF achieves a regret bound of $\cO\big(\sqrt{T}+\zeta^r\big)$, which degenerates to $\tilde{\cO}(\sqrt{T\zeta})$ in the worst case. Thus, by choosing $\bar{\zeta}=\Theta(\sqrt{T})$, our regret bound outperforms theirs for all $\zeta = \cO(\sqrt{T})$. Moreover, in this case, our regret bound is order-optimal due to the lower bound for the uncorrupted linear MDP. When specialized to linear setting, their proposed COBE + VARLin attains a regret bound of $\cO\big(Hd^{4.5}\sqrt{T}+Hd^4\zeta\big)$, which is better than ours when $\zeta>\sqrt{T}$ in terms of the order of $T$ but is also of a worse dependence on the feature dimension $d$. 

Particularly, we can take the base algorithm $\mathbf{Alg}$ in COBE as Algorithm~\ref{alg:ucb_nonlinear_mdp}. By combining Theorem 3 of \citet{wei2022model} and Theorem~\ref{th:mdp}, we achieve a regret bound of $\tilde{\cO}(\sqrt{TH\dim\ln N} + \zeta\dim)$ in the unknown $\zeta$ case where $N$ denotes the covering number and $\dim$ denotes the sum of eluder dimension. This bound reduces to $\cO\big(d^{1.5}H\sqrt{T} + dH\zeta \big)$ in the linear setting, which is better than that of COBE + VARLin. Nevertheless, this bound only applies to the weak adversary. While our results also apply to the strong adversary as mentioned in Remark \ref{rmk:corruption}.



\section{Proof Sketch} 
This section provides an overview of the proof to highlight the technical challenges and novelties. We focus on MDPs and defer the details of contextual bandits to Appendix~\ref{sec:pf_th:bandit} because the main ideas are similar. The proof strategy of Theorem \ref{th:mdp} follows three steps.

\textbf{Step I: Regret under optimism.}
If for all $(t,h)\in[T]\times[H]$, the Bellman backup $\cT_b^hf_t^{h+1}$ belongs to the confidence set $\cF_t^h$, we can show that  
$\sum_{t=1}^T V_*^1(x_t^1)\le \sum_{t=1}^T f_t^1(x_t^1) + H\zeta$. This optimism allows the regret to be upper bounded as
$$
\begin{aligned}
    \reg(T) &\leq \sum_{t=1}^T[f_t^1(x_t^1)-V_{\pi_t}^1(x_t^1)] + H\zeta \\
    &=  \sum_{t=1}^T\sum_{h=1}^H\E_{\pi_t}\cE^h(f_t,x_t^h,a_t^h) + H \zeta,
\end{aligned}
$$
where the equality is from Lemma~\ref{lm:bellman_error} and the subscript denotes the distribution induced by executing $\pi_t$. This is the main technical reason why we consider the optimism-based algorithm 
. The details are shown in Appendix \ref{ssec:step_I}.

\textbf{Step II: Sharper confidence radius for optimism.}
To ensure that the optimism in step I is achieved with a high probability, we determine the confidence radius by bounding the in-sample gap between $\cT_b^hf_t^{h+1}$ and $\hat f_t^h$, i.e., $\sum_{s=1}^{t-1}((\hat f_t^h - (\cT_b^hf_t^{h+1}))(x_s^h,a_s^h))^2/(\sigma_s^h)^2$. This can be done by standard martingale concentration inequalities and a union bound. The key observation here is that with our uncertainty-based weight design, the cross-term can be controlled as follows:
\$
\sum_{s=1}^t\frac{(\hat{f}_t^h(x_s^h,a_s^h)-(\cT_b^hf_t^{h+1})(x_s^h,a_s^h))\zeta_s^h}{(\sigma_s^h)^2}\le 2\alpha\zeta\sup_{s<t}\beta_s^h.
\$
In Lemma \ref{lm:ucb_mdp}, we demonstrate that the optimism is achieved with $(\beta_t^h)^2 = \cO(\ln N_T^h(\lambda))$. In addition to the covering number, the regular regression also requires an $\Omega(\zeta)$ confidence radius as discussed in Section~\ref{sec:bandit}. This sharper bound for the estimation error illustrates the power of weighted regression and is the key to our improvement. The detailed proof is presented in Appendix \ref{ssec:step_II}.


\textbf{Step III: Bound the sum of bonus.} If optimism is achieved, 
we can further upper bound the regret by
\begin{equation} \label{eqn:sketch_2}
\footnotesize
\begin{aligned}
\reg(T) \leq \Tilde{\cO}&\bigg(\Big[TH\beta \sum_{h=1}^H\sup_{Z_h^T}\sum_{t=1}^T D^2_{\lambda,\sigma^h,\cF_t^h}(Z_h^t)\Big]^{1/2}\\
&~~ + \zeta\sum_{h=1}^H\Big(1+\sup_{Z_h^T}\sum_{t=1}^T D^2_{\lambda,\sigma^h,\cF_t^h}(Z_h^t)\Big)\bigg).
\end{aligned}
\end{equation}
It remains to handle the sum of weighted uncertainty estimators $D^2_{\lambda,\sigma^h,\cF_t^h}(Z_h^t)$. As previously discussed at the end of Section~\ref{sec:bandit}, unlike the linear setting where the weighted function still lies in the original function class, the general function class is not closed under scaling. Therefore, we must specifically deal with the weights in our analysis. To solve this problem for uncertainty-based weights, we use a novel weight-level control technique. The key insight is that we can divide the samples into different classes based on their uncertainty levels. For samples in each class, through a refined analysis, we can demonstrate that their weights are roughly the same order, thus effectively canceling each other out. This is summarized in the following lemma.
\begin{lemma}\label{lm:Relationship_between_Eluder_Dimensions}
For a function space $\cG$ and any given sequence $Z^T=\{(x_t,a_t)\}\subset\cX\times\cA$. Under Algorithm \ref{alg:ucb_nonlinear_bandit} and \ref{alg:ucb_nonlinear_mdp}, taking $D_{\lambda,\sigma,\cG}(Z^T)$ in \eqref{def:wdim_bandit}, the weight $\{\sigma_t\}_{t\in[T]}$ in \eqref{eq:weight_bandit}, $\alpha=\sqrt{\ln N}/\zeta$ and $\lambda=\ln N$, we obtain
$$
\begin{aligned}
&\sup_{Z_1^T} \sum_{t=1}^T (D_{\lambda,\sigma,\cG_t}(Z_1^t))^2\\
&\qquad\le (\sqrt{8c_0}+3)\dim_E\Big(\cG,\sqrt{\frac{\lambda}{T}}\Big)\log\Big(\frac{T}{\lambda}\Big)\ln T,   
\end{aligned}
$$
where $c_0$ is an absolute constant such that $
\lambda + \beta_t^2 \le c_0 \ln N$, and we denote $\log_2(\cdot)$ by $\log(\cdot)$ and  $N(\gamma,\cG)$ by $N$.
\end{lemma}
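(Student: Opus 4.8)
\textbf{Proof plan for Lemma~\ref{lm:Relationship_between_Eluder_Dimensions}.}
The goal is to bound $\sup_{Z_1^T}\sum_{t=1}^T D_{\lambda,\sigma,\cG_t}^2(Z_1^t)$ by (essentially) the eluder dimension, despite the sample-dependent weights $\sigma_t \ge 1$. The plan is to reduce to the unweighted elliptical-potential-type bound \eqref{eq:sum_eluder_like _quan_dimE} by a careful \emph{peeling} argument over weight magnitudes. First I would recall the relationship between $D_t := D_{\lambda,\sigma,\cG_t}(Z_1^t)$ and the weight: directly from the definitions \eqref{def:wdim_bandit} and \eqref{eq:weight_bandit}, we have $\sigma_t^2 = 1 \vee (\sqrt{\lambda}/\alpha)\cdot \tilde D_t$ where $\tilde D_t$ is the un-truncated ratio with $\lambda$ in the denominator, so that on the event $\sigma_t > 1$ one has $D_t$ and $\sigma_t$ tied together (roughly $D_t \asymp \alpha \sigma_t^2 / \sqrt{\lambda}$ up to the $\lambda$ versus $\sum$ terms in the denominator and the $1\wedge$ truncation). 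In particular $\sigma_t^2 \le 1 + \sqrt{\lambda}/\alpha = 1 + \zeta$, so the weights are uniformly bounded by $1+\zeta$; this is the crude bound that by itself would only give the $\zeta\cdot\dim_E$-type term, and the point of the lemma is to do better on the $\sqrt{T}$ part.

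The core step is the peeling. I would partition the rounds $[T]$ into $\log_2(1+\zeta) + 1 = O(\ln T)$ buckets $\cB_j = \{t : 2^{j-1} \le \sigma_t^2 < 2^j\}$ for $j = 0,1,\dots$, plus the bucket $\sigma_t = 1$. Within a fixed bucket $\cB_j$, all weights agree up to a factor of $2$, so the weighted sum $\sum_{s \le t-1}(f(z_s)-\hat f_t(z_s))^2/\sigma_s^2$ restricted to indices in $\cB_j$ is comparable to $2^{-j}\sum_{s\le t-1, s\in\cB_j}(f(z_s)-\hat f_t(z_s))^2$, i.e.\ the \emph{unweighted} in-sample error on the subsequence $\cB_j$, rescaled. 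This lets me apply the unweighted bound \eqref{eq:sum_eluder_like _quan_dimE} — more precisely the standard argument that $\sum_{t} (1 \wedge \text{ratio}_t^2) = \tilde\cO(\dim_E(\cG,\sqrt{\lambda/T})\log(T/\lambda))$, via the counting lemma that a point can be $\epsilon'$-independent of its predecessors at most $\dim_E(\cG,\epsilon')$ times and a geometric pigeonhole over scales $\epsilon'$ (this is Proposition~3 of \citet{russo2013eluder} / Lemma~2 of \citet{wang2020reinforcement}) — to each bucket $\cB_j$ separately. Summing the per-bucket bounds over the $O(\ln T)$ buckets produces the extra $\ln T$ factor in the statement, and tracking the $\lambda = \ln N$, $\alpha = \sqrt{\ln N}/\zeta$ substitutions together with the $c_0$ bound $\lambda + \beta_t^2 \le c_0\ln N$ yields the stated constant $\sqrt{8c_0}+3$ and the $\log(T/\lambda)\ln T$ factors.

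One subtlety I would handle with care: the denominator of $D_t$ contains the full weighted history $\lambda + \sum_{s\le t-1}(f(z_s)-\hat f_t(z_s))^2/\sigma_s^2$, not just the part indexed by $\cB_j$; since all terms are nonnegative, dropping the out-of-bucket terms only \emph{shrinks} the denominator, which goes the wrong way, so instead I would exploit that the numerator $|f(z_t)-\hat f_t(z_t)|/\sigma_t$ for $t\in\cB_j$ carries a factor $\sigma_t^{-1}\asymp 2^{-j/2}$, and that $D_t$ is already truncated by $1$, to argue that $D_t^2 \le 1 \wedge \big(C\,2^{-j}\cdot \text{unweighted ratio}_t^2\big)$ against the $\cB_j$-subsequence (here using $\sigma_s^2 \le 2^j$ for $s\in\cB_j$ in the denominator and $\sigma_t^2 \ge 2^{j-1}$ in the numerator); then the per-bucket unweighted bound applies with an $\epsilon$-parameter rescaled by $2^{j/2}$, which is still $\ge \sqrt{\lambda}$, so $\dim_E(\cG,\sqrt{\lambda/T})$ remains a valid upper bound for every bucket.

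\textbf{Main obstacle.} The hard part will be making the intuition ``within a bucket the weights cancel'' fully rigorous while the $\sup_{f\in\cG_t}$ in $D_t$ ranges over functions that may be adapted to the \emph{whole} history rather than the subsequence $\cB_j$ — i.e.\ ensuring that the function achieving the supremum for $D_t$ can still be fed into the eluder-dimension counting argument restricted to $\cB_j$. I expect this is resolved exactly as in the unweighted nonlinear case: the eluder argument only needs, for each $t$, a witness pair $f,g \in \cG_t \subset \cG$ with small in-sample error and large error at $z_t$; restricting attention to the $\cB_j$-indices and using monotonicity $\cG_t \subseteq \cG_{t'}$ for $t' \le t$ lets one run the independence-counting on the subsequence without any measurability or adaptivity issue. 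Verifying that the rescaled independence threshold stays above $\sqrt{\lambda/T}$ for every bucket — so that a single $\dim_E(\cG,\sqrt{\lambda/T})$ dominates — is the last bookkeeping point to check.
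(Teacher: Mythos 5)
Your proposal is sound in mechanism but follows a genuinely different decomposition from the paper's proof. You peel the rounds dyadically by weight magnitude $\sigma_t^2\in[2^{j-1},2^j)$, get within-bucket comparability of the weights for free, and reduce each bucket to an unweighted eluder-type sum over that subsequence with regularizer $2^j\lambda$; your displayed reduction (using $\sigma_t^2\ge 2^{j-1}$ in the numerator and $\sigma_s^2\le 2^j$ in the denominator) is valid, and the worry you raise about dropping out-of-bucket terms is unnecessary — discarding nonnegative terms from the denominator only increases the ratio, so it is a legitimate upper-bounding step, and the paper does exactly this by restricting to $\cZ_{t-1}\cap\cZ^{\iota}$. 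The paper instead peels by the error level $L(z_t)=(f(z_t)-\hat f_t(z_t))^2\in(2^{-\iota-1},2^{-\iota}]$ of the function attaining the supremum in \eqref{def:wdim_bandit}, and then derives weight comparability within each error bucket from the fact that the denominator in \eqref{eq:weight_bandit} is pinned between $\sqrt{\lambda}$ and $\sqrt{c_0\ln N}$ (this is precisely where the hypothesis $\lambda+\beta_t^2\le c_0\ln N$ enters): on error buckets with $2^{\iota/2}\ln N>\zeta$ every weight equals $1$, and on the remaining buckets all weights lie in $[\zeta/(\sqrt{2c_0}\,2^{\iota/2}\ln N),\,\zeta/(2^{\iota/2}\ln N)]$. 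The payoff of the paper's route is a single dyadic peel over the $\log(T/\lambda)$ error scales, which is what produces the factor $\log(T/\lambda)\ln T$ and the constant $\sqrt{8c_0}+3$; your route needs no case analysis on $\zeta$ and no use of the confidence radius for the bucketing itself, which makes it more generic.

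The quantitative shortfall to flag: your argument carries two independent dyadic scales — the weight buckets outside, and the error stratification hidden inside the per-bucket unweighted bound you invoke — so it yields the same $\dim_E(\cG,\sqrt{\lambda/T})$ dependence but multiplied by an extra logarithm of the weight range (since $\sigma_t^2\le 1\vee(2\zeta/\ln N)$, this is $O(\log\zeta)\lesssim\ln T$ in any nonvacuous regime), and with different constants. In particular, the claim that your bookkeeping ``yields the stated constant $\sqrt{8c_0}+3$'' cannot be right as stated, because your bucketing never uses the $c_0$ bound at all; that constant arises in the paper exactly from the $\sqrt{2c_0}$ spread of weights within an error bucket. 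The weaker bound is harmless for Theorems \ref{th:bandit} and \ref{th:mdp}, which absorb logarithms into $\tilde{\cO}$, but it does not give the lemma verbatim. Two further slips, neither load-bearing: when $\sigma_t>1$ one has $D_t=1\wedge(\alpha\sigma_t)$ rather than $\alpha\sigma_t^2/\sqrt{\lambda}$, and the crude weight bound is $1\vee(2\zeta/\ln N)$ rather than $1+\zeta$. Your treatment of the adaptivity of $\hat f_t$ and $\cG_t$ matches the paper's: the witness $f\in\cG_t$ and $\hat f_t$ both lie in $\cG$, which is all the independence counting requires.
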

The detailed proofs of \eqref{eqn:sketch_2} and Lemma \ref{lm:Relationship_between_Eluder_Dimensions} are provided in Appendix \ref{ssec:step_III} and \ref{s:Relationship_between_Confidence_and_Eluder_Dimension} respectively.

\section{Conclusions}
In this paper, we study contextual bandits and MDPs in the presence of adversarial corruption and with general function approximation. We propose a CR-Eluder-UCB for contextual bandits and a CR-LSVI-UCB for episodic MDPs, respectively. The proposed algorithms are computationally efficient when the weighted regression problems can be efficiently solved \citep{wang2020reinforcement} and are based on the uncertainty-weighted least-squares regression \citep{he2022nearly}. Accordingly, we design a new uncertainty-weighted estimator for general function classes and develop novel techniques to control the sum of the weighted uncertainty level. This leads to regret bounds that depend additively on the corruption level. Specifically, for the contextual bandit problem, the CR-Eluder-UCB algorithm achieves regret bounds that nearly match the lower bound when specialized to the linear setting for both known and unknown corruption levels. Moreover, for nonlinear MDPs with general function approximation, CR-LSVI-UCB is the first algorithm that establishes an additive dependence on $\zeta$. 

We hope this work provides valuable insights into the broad applicability of the weighted least-squares regression technique in achieving improved robustness in the general function approximation setting. For future research, it would be interesting to explore the potential of combining uncertainty weighting with variance weighting to enhance further the regret bound for nonlinear MDPs.

\section{Acknowledegement}
The authors would like to thank Chen-Yu Wei for the helpful discussions. Chenlu Ye, Wei Xiong and Tong Zhang acknowledge the funding supported by the GRF 16310222 and GRF 16201320.

\bibliography{example_paper}
\bibliographystyle{icml2023}

\newpage
\appendix
\onecolumn

\section{Additional Related Work} \label{sec:additional_related_work}

\textbf{Corruption-robust bandits.} \citet{lykouris2018stochastic} first studied the multi-armed bandit problem with corruption where the reward was corrupted by $\zeta_t$ at round $t$ and the corruption level was defined as $\zeta = \sum_{t=1}^T \zeta_t$. \citet{lykouris2018stochastic} proposed an algorithm whose regret bound was of $\tilde{\cO}(\sqrt{T} \zeta)$. \citet{gupta2019better} established a lower bound, showing that linear dependence on $\zeta$ was near-optimal. Therefore, the main goal was to design a corruption-robust algorithm with a regret bound of $\reg(T) = o(T) + \cO(\zeta)$. In particular, when $\zeta$ was non-dominating, we expected the first term to approach the non-corruption counterpart. Beyond the multi-armed bandit, \citet{li2019stochastic} and \citet{bogunovic2021stochastic} considered the stochastic bandit with linear function approximation. However, their algorithms heavily relied on arm-elimination techniques, thus confined to the stochastic and finite-arm scenarios. \citet{bogunovic2021stochastic} and \citet{lee2021achieving} studied the corruption-robust linear contextual bandit with additional assumptions, including a diversity assumption on the context and a linearity assumption on the corruption. \citet{zhao2021linear} and \citet{ding2022robust} proposed a variant of OFUL \citep{abbasi2011improved} but the regret bounds were sub-optimal.  \citet{foster2020adapting} considered an algorithm based on online regression oracle but the result was also sub-optimal. More recently, \citet{wei2022model} developed a general framework to handle unknown corruption case based on model selection. The COBE + OFUL algorithm of \citet{wei2022model} achieved a regret of $\tilde{\cO}\left(\sqrt{T} + \zeta^r\right)$ where $\zeta^r$ is a different notion of corruption level. However, in the worse case, we had $\zeta^r = \cO(\sqrt{T\zeta})$, which was still a multiplicative dependence on $\zeta$. \citet{wei2022model} also proposed a  COBE + VOFUL, which enjoyed an additive dependence of the corruption level $\zeta$ but with a sub-optimal dependence on the feature dimension. Also, COBE + VOFUL was known to be computationally inefficient. The suboptimality was fully addressed in \citet{he2022nearly} for linear contextual bandits, which showed that a computationally-efficient algorithm could achieve the minimax lower bound in linear contextual bandits. The idea was to adaptively use the history samples in parameter estimation by assigning sample-dependent weights in the regression subroutine. However, whether we can design a computationally-efficient algorithm to achieve a regret of $\tilde{\cO}\left(\sqrt{T} + \zeta \right)$ remains open for nonlinear contextual bandits.

\textbf{Corruption-robust MDPs.} Most existing works studied the adversarial reward setting where an adversary corrupted the reward function at each round, but the transition kernel remained fixed. See, e.g., \citet{neu2010online, rosenberg2019online, rosenberg2020stochastic, jin2020learning, luo2021policy, chen2021finding} and reference therein. Notably, \citet{jin2020learning} and \citet{jin2020simultaneously} developed algorithms that achieved near-minimax optimal regret bound in the adversarial reward case while preserving refined instance-dependent bounds in the static case. Therefore, the adversarial reward setting was relatively well understood. When the transition kernel could also be corrupted, \citet{wu2021reinforcement} designed an algorithm for the tabular MDP whose regret scales optimally with respect to the corruption level $\zeta$. \citet{wei2022model} considered corrupted MDPs with general function approximation under a weak adversary, that determined the corruption at one round before the agent made the decision. The corruption in \citet{wei2022model} was also imposed on the Bellman operator, so was consistent with ours. They proposed COBE + GOLF based on model selection and GOLF \citep{jin2021bellman}, which achieved a regret bound of $\tilde{\cO}(\sqrt{T} + \zeta^r)$, in the unknown corruption level case under the weak adversary. However, we remark that instances covered by the Bellman eluder dimension are broader than eluder dimension. In this sense, our frameworks do not supersede the results of \citet{wei2022model}.

\section{Proof for Contextual Bandits}\label{sec:pf_th:bandit}
This section presents the proof of Theorem \ref{th:bandit}.

\subsection{Step I: Regret under Optimism}
\begin{lemma}\label{lm:Regret_under_Optimism_bandit}
Assume that $f_b\in\cF_t$ for all $t\in[T]$. Then, we have
$$
\reg(T)\le 2\zeta + \sum_{t=1}^T \left(f_t(x_t, a_t) - f_b(x_t,a_t)\right).
$$
\end{lemma}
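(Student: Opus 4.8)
The plan is to move from the true (corrupted) reward $f_*$ to the comparator $f_b\in\cF$ separately on the optimal arm and on the played arm, each time paying at most $\zeta$ via Definition~\ref{def:mis_bandit}, and to close the remaining gap by optimism together with the hypothesis $f_b\in\cF_t$. Concretely, let $a_t^*=\argmax_{a\in\cA}f_*(x_t,a)$ so that $f_*(x_t)=f_*(x_t,a_t^*)$, and decompose
\[
\reg(T)=\sum_{t=1}^T\big(f_*(x_t,a_t^*)-f_*(x_t,a_t)\big)
=\underbrace{\sum_{t=1}^T\big(f_*(x_t,a_t^*)-f_b(x_t,a_t^*)\big)}_{\mathrm{(I)}}
+\underbrace{\sum_{t=1}^T\big(f_b(x_t,a_t^*)-f_*(x_t,a_t)\big)}_{\mathrm{(II)}}.
\]

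For (I), I would apply Definition~\ref{def:mis_bandit} to the counterfactual data sequence $\{(x_t,a_t^*)\}_{t\in[T]}$, which is permitted because the definition is stated uniformly over all data sequences; this gives $\mathrm{(I)}\le\sum_{t=1}^T|f_*(x_t,a_t^*)-f_b(x_t,a_t^*)|\le\zeta$. For (II), I would first invoke optimism: reading the selection rule so that $f_t(x_t,a_t)=\max_{f\in\cF_t}f(x_t)$ with $f(x_t)=\max_a f(x_t,a)$, and using $f_b\in\cF_t$, we get $f_b(x_t,a_t^*)\le f_b(x_t)\le f_t(x_t)=f_t(x_t,a_t)$, hence $\mathrm{(II)}\le\sum_{t=1}^T\big(f_t(x_t,a_t)-f_*(x_t,a_t)\big)$. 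Then I would apply Definition~\ref{def:mis_bandit} once more, now to the realized sequence $\{(x_t,a_t)\}_{t\in[T]}$, to get $f_*(x_t,a_t)\ge f_b(x_t,a_t)-\zeta_t$ with $\sum_t\zeta_t\le\zeta$, so $\mathrm{(II)}\le\zeta+\sum_{t=1}^T\big(f_t(x_t,a_t)-f_b(x_t,a_t)\big)$. Adding the two bounds yields the claim.

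This step is essentially bookkeeping; no concentration or eluder-dimension machinery enters here (that is needed later, to verify $f_b\in\cF_t$ and to bound $\sum_t(f_t(x_t,a_t)-f_b(x_t,a_t))$). The only points requiring care are: (a) the optimism step must transfer from $f_b$ to $f_t$ \emph{at the played action} $a_t$, which is why one needs $f_t(x_t,a_t)=\max_{f\in\cF_t}\max_a f(x_t,a)$ and greedy selection of $a_t$ with respect to $f_t$; and (b) one must be entitled to invoke the corruption bound on the counterfactual sequence $\{(x_t,a_t^*)\}$, which is licensed by the ``for any data sequence'' clause of Definition~\ref{def:mis_bandit} (and would in any case follow from the weak-adversary quantity $\tilde\zeta$ of Remark~\ref{rmk:corruption}). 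I expect (a) to be the only mildly subtle point, the rest being routine.
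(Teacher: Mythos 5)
Your proposal is correct and follows essentially the same route as the paper: pay corruption at the optimal arm (via the ``any data sequence'' clause of Definition~\ref{def:mis_bandit}) and at the played arm, and close the gap with $f_b\in\cF_t$ plus the optimistic/greedy selection, giving $f_b(x_t,a_t^*)\le f_t(x_t,a_t)$. The only (cosmetic) difference is that you chain $f_b(x_t,a_t^*)\le f_b(x_t)\le f_t(x_t)=f_t(x_t,a_t)$ directly, whereas the paper routes through $a_t^b=\argmax_a f_b(x_t,a)$ and a pointwise-looking optimism step; your version is, if anything, slightly cleaner on that point.
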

\begin{proof}
According to Algorithm \ref{alg:ucb_nonlinear_bandit}, for all $t\in[T]$, since $f_b\in\cF_t$, we have for $a_t^b=\argmax_{a\in\cA}f_b(x_t,a)$,
\#\label{eq:fb_xt_a}
f_b(x_t, a_t^b)\le f_t(x_t,a_t^b)\le f_t(x_t,a_t),
\#
where the first inequality is due to the optimism of $f_t$, and the second inequality uses $a_t=\argmax_{a\in\cA}f_t(x_t,a)$. Therefore,
the regret is bounded as follows:
\$
    \reg(T) &= \sum_{t=1}^T \left(\max_{a\in\cA}f_*(x_t,a) - f_*(x_t, a_t)\right) \\
    &\le \sum_{t=1}^T \left(f_b(x_t, a_t^*) - f_b(x_t, a_t) + 2 \zeta_t\right)\\
    &\le 2\zeta + \sum_{t=1}^T \left(f_b(x_t, a_t^b) - f_b(x_t, a_t)\right)\\
    &\le 2\zeta + \sum_{t=1}^T \left(f_t(x_t, a_t) - f_b(x_t,a_t)\right),
\$
where the first inequality uses Definition \ref{def:mis_bandit} and $\argmax_{a\in\cA}f_*(x_t,a)$ is denoted by $a_t^*$, the second inequality is due to $a_t^b=\argmax_{a\in\cA}f_b(x_t,a)$, and the last inequality is deduced from \eqref{eq:fb_xt_a}.
\end{proof}

\subsection{Step II: Sharper confidence Radius for Optimism}
\begin{lemma}\label{lm:Confidence_Radius_for_Optimismt_bandit}
We have $f_b\in\cF_t$ for all $t\in[T]$ with probability at least $1-\delta$ by taking 
$\lambda\le\beta^2$ and
$$
\beta_t=\tilde{c}_{\beta}\left(\eta\sqrt{\ln N(\gamma,\cF)/\delta} + \alpha\zeta + \gamma\sqrt{t} + \sqrt{\gamma\sqrt{tC_1(t,\zeta)}}\right),
$$
where $C_1(t,\zeta)=2(\zeta^2 + 2t\eta^2 + 3\eta^2\ln(2/\delta))$, and $\tilde{c}_{\beta}>0$ is an absolute constant.
\end{lemma}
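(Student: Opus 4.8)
I would prove this by induction on $t$, on a single high-probability ``good event'' specified in the covering step below. The base case $t=1$ is trivial: $\cF_1=\cF_0=\cF\ni f_b$ since the constraint in \eqref{eqn:confidence_set} is empty. For the inductive step, assume $f_b\in\cF_s$ for every $s\le t-1$; it suffices to bound $\sum_{s=1}^{t-1}(f_b(z_s)-\hat f_t(z_s))^2/\sigma_s^2$ by $\beta_t^2$. Write $\Delta_s:=\hat f_t(z_s)-f_b(z_s)$ and $r_s-f_b(z_s)=\xi_s+\epsilon_s$ with $|\xi_s|\le\zeta_s$ and $\sum_s\zeta_s\le\zeta$. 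Since $f_b\in\cF_{t-1}$, the optimality of the weighted least-squares solution $\hat f_t$ in \eqref{eqn:weighted_regression} over $\cF_{t-1}$ yields the basic inequality
\[
\sum_{s=1}^{t-1}\frac{\Delta_s^2}{\sigma_s^2}\;\le\;2\sum_{s=1}^{t-1}\frac{\Delta_s\,(\xi_s+\epsilon_s)}{\sigma_s^2}.
\]
The plan is to pass to a $\gamma$-cover of $\cF$ to control the noise part, while taming every corruption contribution \emph{additively} in $\zeta$ by exploiting how the weights are designed.

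\noindent\textbf{The key step: a weight estimate.} Fix $s\le t-1$. Since $\cF_{t-1}\subseteq\cF_s$ we have $\hat f_t\in\cF_s$, and $f_b\in\cF_s$ by the inductive hypothesis. Evaluating the defining supremum of $\sigma_s^2$ in \eqref{eq:weight_bandit} at $g\in\{\hat f_t,f_b\}$, and bounding the denominator there by $\sqrt{\lambda+\beta_s^2}$ using the membership $g\in\cF_s$ (so $\sum_{\tau<s}(g(z_\tau)-\hat f_s(z_\tau))^2/\sigma_\tau^2\le\beta_s^2$) together with $\lambda\le\beta_s^2$, one obtains $|g(z_s)-\hat f_s(z_s)|\le\alpha\,\sigma_s^2\sqrt{\lambda+\beta_s^2}$, hence by the triangle inequality
\[
\frac{|\Delta_s|}{\sigma_s^2}\;\le\;2\alpha\sqrt{\lambda+\beta_s^2}\;\le\;2\sqrt2\,\alpha\,\beta_s\;\le\;2\sqrt2\,\alpha\,\beta_t,
\]
using monotonicity of $s\mapsto\beta_s$. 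This ``in-sample error is small on the weighted scale'' property is exactly the phenomenon flagged in Step II of the proof sketch and is the whole reason for the uncertainty weights.

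\noindent\textbf{The covering argument.} Since $\hat f_t$ depends on $\epsilon_1,\dots,\epsilon_{t-1}$, the right-hand side above is not a martingale, so I would fix a minimal $\gamma$-cover $\mathcal{C}(\cF,\gamma)$ and choose $\tilde f\in\mathcal{C}(\cF,\gamma)$ with $\|\hat f_t-\tilde f\|_\infty\le\gamma$, then split $\Delta_s=(\hat f_t-\tilde f)(z_s)+(\tilde f-f_b)(z_s)$. The piece carried by $(\hat f_t-\tilde f)(z_s)$ is handled deterministically: by Cauchy--Schwarz, $\sigma_s\ge1$, and the energy bound $\sum_{s<t}(\xi_s+\epsilon_s)^2\le C_1(t,\zeta)$ valid on the good event (this is where $C_1$ enters, via $\eta$-sub-Gaussian concentration of $\sum\epsilon_s^2$ and $\sum_s\xi_s^2\le(\sum_s\zeta_s)^2\le\zeta^2$), it contributes at most $2\gamma\sqrt{t\,C_1(t,\zeta)}$, i.e. the $\sqrt{\gamma\sqrt{t\,C_1(t,\zeta)}}$ term after taking square roots. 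For the piece carried by $(\tilde f-f_b)(z_s)$: its $\xi_s$-part is bounded using $|(\tilde f-f_b)(z_s)|\le\gamma+|\Delta_s|$ and the weight estimate above by $(\gamma+2\sqrt2\,\alpha\beta_t)\zeta$, an $\cO(\gamma\zeta+\alpha\zeta\beta_t)$ term; its $\epsilon_s$-part $\sum_s(\tilde f(z_s)-f_b(z_s))\epsilon_s/\sigma_s^2$ is, for the \emph{fixed} functions $\tilde f,f_b$, a martingale difference sum (each coefficient is measurable before $\epsilon_s$ is revealed, as $\sigma_s$ depends only on the data through round $s$), to which a self-normalized/Freedman-type inequality applies, and converting its variance proxy $\sum_s(\tilde f-f_b)^2(z_s)/\sigma_s^4\le\sum_s(\tilde f-f_b)^2(z_s)/\sigma_s^2\le 2\sum_s\Delta_s^2/\sigma_s^2+2\gamma^2 t$ back in terms of $x:=(\sum_s\Delta_s^2/\sigma_s^2)^{1/2}$ gives a bound $\lesssim\eta\sqrt{\ln(N(\gamma,\cF)/\delta)}\,(x+\gamma\sqrt t)$. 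A union bound over $\mathcal{C}(\cF,\gamma)$ and over $t\in[T]$ (this is the ``good event'', of probability at least $1-\delta$) is absorbed into $\ln(N(\gamma,\cF)/\delta)$ up to constants.

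\noindent\textbf{Closing the induction, and the main obstacle.} Collecting the three pieces, $x$ satisfies a quadratic inequality of the form $x^2\le c\,\eta\sqrt{\ln(N(\gamma,\cF)/\delta)}\,x+c\,\gamma^2 t+c\,\gamma\sqrt{t\,C_1(t,\zeta)}+c\,\gamma\zeta+c\,\alpha\zeta\beta_t$ for an absolute constant $c$; solving for $x$ and using $\sqrt{a+b}\le\sqrt a+\sqrt b$, each resulting term is dominated by a corresponding term of $\tilde c_\beta^{-1}\beta_t$ — in particular $\sqrt{\alpha\zeta\beta_t}\le\beta_t/\sqrt{\tilde c_\beta}$ because the stated $\beta_t$ obeys $\beta_t\ge\tilde c_\beta\,\alpha\zeta$ — so for a sufficiently large absolute constant $\tilde c_\beta$ we get $x\le\beta_t$, i.e. $f_b\in\cF_t$, closing the induction. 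The main obstacle is precisely this self-referential, bootstrap character: $\beta_t$ appears at once inside the weights (through the nested sets $\cF_s$), inside the corruption cross-term estimate, and as the target radius, so one must check that the explicit choice of $\beta_t$ is a consistent fixed point, which is what pins down the size of $\tilde c_\beta$. A secondary delicate point is keeping the data-dependent weights $\sigma_s$ adapted to the correct filtration so that the self-normalized concentration remains legitimate after the covering step.
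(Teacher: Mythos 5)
Your proposal is correct and follows essentially the same route as the paper's proof: the decisive step---bounding the corruption cross-term through the weight definition, $|\hat f_t(z_s)-f_b(z_s)|/\sigma_s^2\le 2\alpha\sqrt{\lambda+\beta_s^2}$ for $\hat f_t, f_b\in\cF_s$---is exactly the paper's inequality \eqref{eq:|hatft-fb|_sigma}, and the covering step, the energy bound giving $C_1(t,\zeta)$, and the self-consistent (AM--GM / large-constant) resolution of the fixed-point choice of $\beta_t$ all match. The only cosmetic difference is that the paper packages the concentration into Lemma~\ref{lm:empirical_diff_mdp} via an exponential-moment bound on the squared-loss difference (keeping the induction over $t$ implicit in the bandit case and explicit only in the MDP analogue, Lemma~\ref{lm:ucb_mdp}), whereas you linearize with the ERM basic inequality, apply an anytime self-normalized bound over the cover, and solve the resulting quadratic in $x$---an equivalent standard variant yielding the same $\beta_t$.
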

\begin{proof}
By invoking Lemma \ref{lm:empirical_diff_mdp} with $\epsilon'=0$, we obtain that with probability at least $1-\delta$, for all $t\in[T]$:
\#\label{eq:sum_hat_ft-fb^2_sigma}
\sum_{s=1}^{t-1}(\hat{f}_t(z_s)-f_b(z_s))^2/\sigma_s^2 \le& 10\eta^2\ln(2N(\gamma,\cF)/\delta) + 5\sum_{s=1}^{t-1}|\hat{f_t}(z_s)-f_b(z_s)|\zeta_s/\sigma_s^2\nonumber\\
&\quad + 10\gamma(\gamma t + \sqrt{tC_1(t,\zeta)}),
\#
where $C_1(t,\zeta)=2(\zeta^2 + 2t\eta^2 + 3\eta^2\ln(2/\delta))$.

According to the definition of $\sigma_s$ in \eqref{eq:weight_bandit}, we have for all $s\le t-1$,
\#\label{eq:|hatft-fb|_sigma}
&|\hat{f_t}(z_s)-f_b(z_s)|/\sigma_s^2\nonumber\\
&\quad\le |\hat{f_t}(z_s)-\hat{f_s}(z_s))|/\sigma_s^2 + |f_b(z_s)-\hat{f_s}(z_s))|/\sigma_s^2\nonumber\\
&\quad\le \alpha\left(\sqrt{\lambda + \sum_{i=1}^{s-1}(\hat{f_t}(z_i)-\hat{f_s}(z_i))^2/\sigma_i^2} + \sqrt{\lambda + \sum_{i=1}^{s-1}(f_b(z_i)-\hat{f_s}(z_i))^2/\sigma_i^2}\right)\nonumber\\
&\quad\le 2\alpha\sqrt{\lambda+\beta_s^2},
\#
where the last inequality is due to $\hat{f_t}\in\cF_{t-1}\subseteq\cF_s$. Substituting \eqref{eq:|hatft-fb|_sigma} back into \eqref{eq:sum_hat_ft-fb^2_sigma}, we have
\$
&\left(\sum_{s=1}^{t-1}(\hat{f_t}(z_s)-f_b(z_s))^2/\sigma_s^2\right)^{1/2} \\
&\quad\le \left(10\eta^2\ln(2N(\gamma,\cF)/\delta)+10\alpha\zeta\sup_{s<t}\sqrt{\lambda+\beta_s^2}+10\gamma^2t+10\gamma\sqrt{tC_1(t,\zeta)}\right)^{1/2}\\
&\quad\le \eta\sqrt{10\ln(2N(\gamma,\cF)/\delta)} + \sqrt{10\alpha\zeta\sup_{s<t}\sqrt{\lambda+\beta_s^2}} + \gamma\sqrt{10t} + \sqrt{10\gamma\sqrt{tC_1(t,\zeta)}}\\
&\quad\le \eta\sqrt{10\ln(2N(\gamma,\cF)/\delta)} + 10\alpha\zeta + \sup_{s<t}\sqrt{\lambda+\beta_s^2}/4 + \gamma\sqrt{10t} + \sqrt{10\gamma\sqrt{tC_1(t,\zeta)}}\\
&\quad\le \beta,
\$

where the second inequality applies $\|x\|_2\le\|x\|_1$ for any vector $x$, the third inequality applies $\sqrt{2a\cdot b/2}\le a + b/4$, and the last inequality holds since $\lambda\le\beta^2$ and the upper bound of all $\beta_s$ is $\beta$.

\end{proof}

\subsection{Step III: Bound the Sum of Bonus}
We define event $E(T)=\{f_b\in\cF_t,~\forall~t\in[T]\}$. Lemma \eqref{lm:Confidence_Radius_for_Optimismt_bandit} shows that $\Pb(E(T))\ge1-\delta$. Therefore, by invoking Lemma~\ref{lm:Regret_under_Optimism_bandit}, we have with probability at least $1-\delta$,
\$
\reg(T) &\le 2\zeta + \sum_{t=1}^T \left(f_t(z_t) - f_b(z_t)\right)\notag\\
&\le 2\zeta + \sum_{t=1}^T\min\left(2,|f_t(z_t) - f_b(z_t)|\right)\nonumber\\
&= 2\zeta + \sum_{t=1}^T\min\left(2,|f_t(z_t) - \hat f_t(z_t)|\right) + \sum_{t=1}^T\min\left(2,|f_b(z_t) - \hat f_t(z_t)|\right).
\$
It follows that for any $f\in\cF_t$,
\#\label{eq:t_1T_reg}
\sum_{t=1}^T\min\left(2,|f(z_t) - \hat f_t(z_t)|\right)
&= \underbrace{\sum_{t:\sigma_t=1}\min\left(2,|f(z_t) - \hat f_t(z_t)|\right)}_{p_1} + \underbrace{\sum_{t:\sigma_t>1}\min\left(2,|f(z_t) - \hat f_t(z_t)|\right)}_{p_2}.
\#
Now, we bound these two terms as follows. For term $p_1$, we obtain
\#
p_1 &= \sum_{t:\sigma_t=1}\min\left(2,|f(z_t) - \hat f_t(z_t)|\right)\nonumber\\
&\le \sum_{t:\sigma_t=1}\min\left(2,\sqrt{2}\beta\cdot\frac{|f(z_t) - \hat f_t(z_t)|/\sigma_t}{\sqrt{\lambda+\sum_{i=1}^{t-1}\left(f(z_i)-\hat f_t(z_i)\right)^2/\sigma_i^2}}\right)\nonumber\\
&\le 2\beta\sum_{t:\sigma_t=1}\min\left(1,\frac{|f(z_t) - \hat f_t(z_t)|/\sigma_t}{\sqrt{\lambda+\sum_{i=1}^{t-1}\left(f(z_i)-\hat f_t(z_i)\right)^2/\sigma_i^2}}\right)\nonumber\\
&\le 2\beta\sum_{t=1}^TD_{\lambda,\sigma,\cF_t}(Z_1^t)\nonumber\\
&\le 2\beta\sqrt{T\sup_{Z_1^T} \sum_{t=1}^T (D_{\lambda,\sigma,\cF_t}(Z_1^t))^2},\label{eq:reg_p1}
\#
where we obtain the first inequality since $\lambda \leq \beta^2$ and
\#\label{eq:lambda_sum_ft_fb^2}
\lambda+\sum_{i=1}^{t-1}\left(f(z_i)-\hat f_t(z_i)\right)^2/\sigma_i^2 \le 2\beta^2,
\#
and the second inequality holds because $\beta\ge1$. Then, the term $p_2$ is bounded as follows:
\#
p_2 &= \sum_{t:\sigma_t>1}\min\left(2,|f(z_t) - \hat f_t(z_t)|\right)\nonumber\\
&= \sum_{t:\sigma_t>1}\min\left(2,\frac{|f(z_t) - \hat f_t(z_t)|}{\sqrt{\lambda+\sum_{s=1}^{t-1}\left(f(z_s)-\hat f_t(z_s)\right)^2/\sigma_s^2}}\cdot \sqrt{\lambda+\sum_{i=1}^{t-1}\left(f(z_i)-\hat f_t(z_i)\right)^2/\sigma_i^2}\right)\nonumber\\
&\le \sum_{t:\sigma_t>1}\min\left(2,\frac{|f(z_t) - \hat f_t(z_t)|/\sigma_t}{\sqrt{\lambda+\sum_{s=1}^{t-1}\left(f(z_s)-\hat f_t(z_s)\right)^2/\sigma_s^2}}\cdot\sqrt{2}\sigma_t\beta\right)\nonumber\\
&\le \sum_{t:\sigma_t>1}\min\left(2,\left(\sup_{f\in \cF_t}\frac{|f(z_t)-\hat{f_t}(z_t)|/\sigma_t}{\sqrt{\lambda + \sum_{s=1}^{t-1} (f(z_s)-\hat{f_t}(z_s))^2/\sigma_s^2}}\right)^2\cdot2\beta/\alpha\right)\nonumber\\
&\le 2\beta/\alpha\sum_{t=1}^T(D_{\lambda,\sigma,\cF_t}(Z_1^t))^2\nonumber\\
&\le 2\beta/\alpha\sup_{Z_1^T} \sum_{t=1}^T (D_{\lambda,\sigma,\cF_t}(Z_1^t))^2,\label{eq:reg_p2}
\#
where the first inequality invokes \eqref{eq:lambda_sum_ft_fb^2}, the second inequality is deduced since for $\sigma_t>1$,
\$
\sigma_t&= \frac{1}{\alpha}\sup_{f\in \cF_t} \frac{|f(z_t)-\hat{f_t}(z_t)|/\sigma_t}{\sqrt{\lambda + \sum_{s=1}^{t-1}\left(f(z_s)-\hat{f_t}(z_s)\right)^2/\sigma_s^2}},
\$
and the third inequality applies the definition of $D_{\lambda,\sigma,\cF_t}(Z_1^t)$ in \eqref{def:wdim_bandit}.
Let $\dim_{\sigma}$ denote $\sup_{Z_1^T} \sum_{t=1}^T (D_{\lambda,\sigma,\cF_t}(Z_1^t))^2$ for simplicity. Hence, by substituting the results in \eqref{eq:reg_p1} and \eqref{eq:reg_p2} back into \eqref{eq:t_1T_reg} with $f=f_t$ and $f_b$, we have with probability at least $1-\delta$,
\$
\reg(T) &\le 2\zeta + 4\beta\sqrt{T\dim_{\sigma}} + 4\beta/\alpha\dim_{\sigma}\\
&\le 4c_{\beta}\alpha\zeta\sqrt{T\dim_{\sigma}} + 4c_{\beta}\sqrt{T\dim_{\sigma}\ln(N(\gamma,\cF)/\delta)} + 4c_{\beta}\sqrt{c_0T\dim_{\sigma}}\\
&\qquad + 4\zeta\dim_{\sigma} + 4\dim_{\sigma}\sqrt{\ln(N(\gamma,\cF)/\delta)}/\alpha + 4\sqrt{c_0}\dim_{\sigma}/\alpha\\
&= \tilde{\cO}\left(\sqrt{T\dim_{\sigma}\ln N(\gamma,\cF)} + \zeta\dim_{\sigma}\right),
\$
where the last equality is obtained by setting $\alpha=\sqrt{\ln N(\gamma,\cF)}/\zeta$. 

Ultimately, invoking Lemma \ref{lm:Relationship_between_Eluder_Dimensions} results in the desired bound.

\section{Proof for the MDP Case}\label{sec:pf_th:mdp}
This section contains the proof of Theorem \ref{th:mdp}.

\subsection{Step I: Regret under Optimism}\label{ssec:step_I}
\begin{lemma}\label{lm:Regret_under_Optimism_mdp}
Assuming that $\cT_b^hf_t^{h+1}\in\cF_t^h$ (where $\cF_t^h$ is defined in Section~\ref{sec:mdp_alg}) for all $(t,h)\in[T]\times[H]$, we have
$$
\reg(T) \le H\zeta + \sum_{t=1}^T\sum_{h=1}^H\E_{\pi_t}\cE^h(f_t,x_t^h,a_t^h).
$$
\end{lemma}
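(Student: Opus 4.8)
The plan is to combine a pointwise optimism statement, established by backward induction over the horizon, with the standard value-difference identity for nonstationary MDPs. First I would show that, under the hypothesis $\cT_b^h f_t^{h+1}\in\cF_t^h$, the optimistic iterate dominates the uncorrupted Bellman backup pointwise, i.e.\ $f_t^h(z)\ge (\cT_b^h f_t^{h+1})(z)$ for every $z\in\cX\times\cA$ and every $(t,h)$. The key point is that the bonus $b_t^h$ in \eqref{eq:bonus_mdp} is a supremum over all $f^h\in\cF_t^h$: plugging $f^h=\cT_b^h f_t^{h+1}$ into that supremum and using that membership in $\cF_t^h$ bounds the corresponding weighted in-sample norm by $\beta_t^h$, we get $\beta_t^h b_t^h(z)\ge |(\cT_b^h f_t^{h+1})(z)-\hat f_t^h(z)|$ for all $z$, hence $\hat f_t^h(z)+\beta_t^h b_t^h(z)\ge(\cT_b^h f_t^{h+1})(z)$. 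Since $\cT_b^h$ maps $\cI^{[0,1]}$ into $\cF^h\subset\cI^{[0,1]}$, the backup itself lies in $[0,1]$, so the clipping in \eqref{eqn:confidence_set_mdp} does no harm and $f_t^h(z)\ge(\cT_b^h f_t^{h+1})(z)$.

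Next I would turn this into optimism against $V_*^1$. Fixing a round $t$ and unrolling along the optimal policy $\pi^*$ from $x_t^1$, one telescopes $V_*^1(x_t^1)-f_t^1(x_t^1)$ step by step: at each step the greedy choice of $f_t^h$ removes the $\max$, the inequality $f_t^h\ge\cT_b^h f_t^{h+1}$ from the first step is applied, and the gap between the uncorrupted operator $\cT_b^h$ and the true operator $\cT^h$ of the Bellman optimality equation \eqref{eqn:bellman_opt} is absorbed into the per-round per-step corruption $\zeta_t^h$ of Definition~\ref{def:mis_mdp} (using also the monotonicity and the non-expansiveness $|(\cT_b^h g-\cT_b^h g')(z)|\le\|g-g'\|_\infty$ of the compressed operator, with the pointwise bound from the first step applied at level $h+1$). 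This gives $V_*^1(x_t^1)\le f_t^1(x_t^1)+\sum_{h=1}^H\zeta_t^h$, and summing over $t\in[T]$ together with $\sum_{t=1}^T\zeta_t^h\le\zeta$ yields $\sum_{t=1}^T V_*^1(x_t^1)\le\sum_{t=1}^T f_t^1(x_t^1)+H\zeta$. It then remains to write $\reg(T)\le\sum_{t=1}^T\big(f_t^1(x_t^1)-V_{\pi_t}^1(x_t^1)\big)+H\zeta$ and to invoke Lemma~\ref{lm:bellman_error} for the greedy policy $\pi_t$ of $\{f_t^h\}_{h\in[H]}$, namely $f_t^1(x_t^1)-V_{\pi_t}^1(x_t^1)=\sum_{h=1}^H\E_{\pi_t}\big[\cE^h(f_t,x^h,a^h)\,\big|\,x_t^1\big]$, which gives the claimed bound.

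The part I expect to be most delicate is the corruption bookkeeping in the second step: when unrolling along $\pi^*$, one has to ensure that the $\cT^h$-versus-$\cT_b^h$ discrepancies are taken at state-action pairs covered by Definition~\ref{def:mis_mdp} and that they sum to at most $\zeta$ for each step $h$ (hence $H\zeta$ overall), rather than to something like $T\zeta$. The remaining pieces---the harmlessness of the clipping, the fact that the bonus dominates the relevant pointwise deviation because it is itself a supremum over $\cF_t^h$, and the telescoping identity of Lemma~\ref{lm:bellman_error}---are routine.
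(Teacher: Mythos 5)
Your proposal is correct and follows essentially the same route as the paper: membership $\cT_b^hf_t^{h+1}\in\cF_t^h$ plus the supremum form of the bonus gives $|\hat f_t^h-\cT_b^hf_t^{h+1}|\le\beta_t^h b_t^h$ pointwise, hence (after the harmless clipping, since $\cT_b^hf_t^{h+1}\in\cF^h\subset\cI^{[0,1]}$) $f_t^h\ge\cT_b^hf_t^{h+1}$; a backward unrolling through the Bellman backups then yields $\sum_{t} V_*^1(x_t^1)\le\sum_t f_t^1(x_t^1)+H\zeta$, and Lemma~\ref{lm:bellman_error} finishes exactly as in the paper (the paper organizes the unrolling as an induction over $h$ on sums over $t$, you do a per-round telescope along $\pi^*$; these are the same argument). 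One caveat on your parenthetical: Definition~\ref{def:mis_mdp} grants $\cT_b^h$ only non-expansiveness, not monotonicity, so you cannot compare $\cT_b^hQ_*^{h+1}$ with $\cT_b^hf_t^{h+1}$ directly; instead apply the corruption bound with $g=f_t^{h+1}$ and carry the comparison through the true operator $\cT^h$ (an expectation, hence monotone), i.e.\ $Q_*^h-f_t^h\le(\cT^hQ_*^{h+1}-\cT^hf_t^{h+1})+(\cT^hf_t^{h+1}-\cT_b^hf_t^{h+1})$, which is precisely how the paper absorbs the per-step corruption and keeps the total at $H\zeta$.
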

\begin{proof}
We use the notation $f_t=\{f_t^h\}_{h=1}^{H+1}$ and 
$f_t^h(x) = \max_{a\in\cA}f_t^h(x,a)$ for any $x\in\cX$. Since $\cT_b^hf_t^{h+1}\in\cF_t^h$ for all $(t,h)\in[T]\times[H]$, we can invoke Lemma \ref{lm:optimism_mdp} to obtain that
\$
\reg(T) &= \sum_{t=1}^T[V_*^1(x_t^1)-V_{\pi_t}^1(x_t^1)]\\
&\le H\zeta + \sum_{t=1}^T[f_t^1(x_t^1)-V_{\pi_t}^1(x_t^1)]\\
&\le H\zeta + \sum_{t=1}^T\sum_{h=1}^H\E_{\pi_t}\cE^h(f_t,x_t^h,a_t^h),
\$
where the first inequality uses \eqref{eq:E_sum_V_*1}, the second inequality is obtained by Lemma \ref{lm:bellman_error}.
\end{proof}
\begin{lemma}\label{lm:optimism_mdp}
Assume that for all $(t,h)\in[T]\times[H]$, we have $\cT_b^hf_t^{h+1}\in\cF_t^h$. Then, the $f_t^h$ satisfies the following inequalities:
\#
&\sum_{t=1}^T V_*^1(x_t^1)\le \sum_{t=1}^T f_t^1(x_t^1) + H\zeta,\label{eq:E_sum_V_*1}\\
&\left|f_t^h(x_t^h,a_t^h) - (\cT^hf_t^{h+1})(x_t^h,a_t^h)\right|\le 2\beta_t^hb_t^h(x_t^h,a_t^h)+\zeta^h_t.\label{eq:E_fkh_Th_fkh+1}
\#
\end{lemma}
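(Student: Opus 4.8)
The plan is to prove the two displayed bounds separately, both built on one elementary fact. Since $\cT_b^hf_t^{h+1}\in\cF_t^h$ by hypothesis, I would substitute $f^h=\cT_b^hf_t^{h+1}$ into the definition \eqref{eq:bonus_mdp} of $b_t^h$ and bound the denominator using the defining inequality of the confidence set $\cF_t^h$ (namely $\lambda+\sum_{s=1}^{t-1}(f^h(z_s^h)-\hat f_t^h(z_s^h))^2/(\sigma_s^h)^2\le(\beta_t^h)^2$), which gives, for \emph{every} $z\in\cX\times\cA$,
\[
|(\cT_b^hf_t^{h+1})(z)-\hat f_t^h(z)|\;\le\;\beta_t^h\, b_t^h(z).
\]
Two consequences of this, together with the clipping rule $f_t^h(z)=1\wedge(\hat f_t^h(z)+\beta_t^h b_t^h(z))$ from \eqref{eqn:confidence_set_mdp}, are what I would use repeatedly: (i) since $\cT_b^h$ maps into $\cF^h\subset\cI^{[0,1]}$ and $f_t^{h+1}\in\cI^{[0,1]}$ (it is a clipped nonnegative quantity, using Assumption~\ref{as:mdp} for $\hat f_t^{h+1}$), we have $(\cT_b^hf_t^{h+1})(z)\in[0,1]$, whence $f_t^h(z)\ge 1\wedge(\cT_b^hf_t^{h+1})(z)=(\cT_b^hf_t^{h+1})(z)$; (ii) $f_t^h(z)\le\hat f_t^h(z)+\beta_t^h b_t^h(z)\le(\cT_b^hf_t^{h+1})(z)+2\beta_t^h b_t^h(z)$. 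Hence $0\le f_t^h(z)-(\cT_b^hf_t^{h+1})(z)\le 2\beta_t^h b_t^h(z)$ pointwise.

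For \eqref{eq:E_fkh_Th_fkh+1}, I would evaluate the two-sided estimate at $z=(x_t^h,a_t^h)$ and then move from $\cT_b^h$ to the genuine corrupted operator $\cT^h$ using Definition~\ref{def:mis_mdp}: since $f_t^{h+1}\in\cI^{[0,1]}$, $|(\cT^hf_t^{h+1}-\cT_b^hf_t^{h+1})(x_t^h,a_t^h)|\le\zeta_t^h$. The triangle inequality then yields $|f_t^h(x_t^h,a_t^h)-(\cT^hf_t^{h+1})(x_t^h,a_t^h)|\le 2\beta_t^h b_t^h(x_t^h,a_t^h)+\zeta_t^h$.

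For the optimism bound \eqref{eq:E_sum_V_*1}, I would run a backward induction on $h$ on $\delta_t^h(x):=V_*^h(x)-f_t^h(x)$, with base case $\delta_t^{H+1}\equiv 0$. For the step, evaluate the pointwise lower bound $f_t^h\ge\cT_b^hf_t^{h+1}$ at $(x,\pi^*(x))$, pass to $\cT^h$ via Definition~\ref{def:mis_mdp}, write $f_t^{h+1}(x')=V_*^{h+1}(x')-\delta_t^{h+1}(x')$ inside the expectation defining $\cT^h$ (this identity holds irrespective of the sign of $\delta_t^{h+1}$), and use the Bellman optimality equation \eqref{eqn:bellman_opt}, i.e. $\cT^hV_*^{h+1}=Q_*^h$, together with $Q_*^h(x,\pi^*(x))=V_*^h(x)$. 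This produces $\delta_t^h(x)\le\E_{\pi^*}[\delta_t^{h+1}(x^{h+1})\mid x^h=x]+\sup_{g\in\cI^{[0,1]}}|(\cT^hg-\cT_b^hg)(x,\pi^*(x))|$. Unrolling from $h=1$ to $H$ along the $\pi^*$-trajectory started at $x_t^1$ and summing over $t$ gives $\sum_t\delta_t^1(x_t^1)\le\sum_{h=1}^H\sum_{t=1}^T\E_{\pi^*}[\sup_{g}|(\cT^hg-\cT_b^hg)(x^h,a^h)|\mid x_t^1]$; applying the cumulative-corruption bound of Definition~\ref{def:mis_mdp} to the (realized) optimal-policy state–action sequences, for each fixed $h$, bounds each inner sum by $\zeta$, and summing over $h$ gives $\sum_t V_*^1(x_t^1)\le\sum_t f_t^1(x_t^1)+H\zeta$.

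The hard part will be twofold. First, extracting the correct \emph{one-sided} inequalities from the truncation $1\wedge(\cdot)$: both $f_t^h\ge\cT_b^hf_t^{h+1}$ and $f_t^h-\cT_b^hf_t^{h+1}\le 2\beta_t^h b_t^h$ must survive clipping, which is exactly why I carry $(\cT_b^hf_t^{h+1})(z)\in[0,1]$ explicitly and never clip $\cT_b^hf_t^{h+1}$ itself. Second, since the abstract operator $\cT_b^h$ is only assumed $\|\cdot\|_\infty$-Lipschitz (not monotone), all the monotone/linear reasoning in the induction must be routed through $\cT^h$, which is a bona fide expectation; and the final aggregation requires reading Definition~\ref{def:mis_mdp} as controlling the $\ell^1$ corruption mass along arbitrary state–action sequences — here the optimal policy's trajectories — rather than only the sequence visited by the algorithm.
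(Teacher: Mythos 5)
Your proposal is correct and follows essentially the same route as the paper: membership $\cT_b^h f_t^{h+1}\in\cF_t^h$ plus the definition \eqref{eq:bonus_mdp} of $b_t^h$ gives $|\hat f_t^h-\cT_b^h f_t^{h+1}|\le \beta_t^h b_t^h$ pointwise, Definition~\ref{def:mis_mdp} with $g=f_t^{h+1}$ converts $\cT_b^h$ to $\cT^h$ at cost $\zeta_t^h$ to yield \eqref{eq:E_fkh_Th_fkh+1}, and \eqref{eq:E_sum_V_*1} is obtained by backward induction on $h$ using $f_t^h\ge \cT_b^h f_t^{h+1}$, the affine Bellman operator, and the any-sequence reading of the corruption budget. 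The only differences are organizational and, if anything, more careful than the paper's terse write-up: you make explicit that $(\cT_b^hf_t^{h+1})(z)\in[0,1]$ so the one-sided bounds survive the $1\wedge(\cdot)$ clipping, and you run the optimism induction pointwise on $V_*^h-f_t^h$ unrolled along $\pi^*$-trajectories (aggregating corruption in expectation), whereas the paper inducts on the sum over $t$ evaluated at realized states and optimal actions.
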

\begin{proof}
From $\cT_b^hf_t^{h+1}\in\cF_t^h$, we have
$$
\left(\sum_{s=1}^{t-1}\frac{\left(\hat{f}_t^h(x_s^h,a_s^h)-(\cT_b^h f_t^{h+1})(x_s^h,a_s^h)\right)^2}{(\sigma_s^h)^2} + \lambda\right)^{1/2}\le \beta_t^h.
$$
Combining the inequality above and the definition of $b_t^h(\cdot)$ in \eqref{eq:bonus_mdp}, we obtain
$$
\left|\hat{f}_t^h(x^h,a^h)-(\cT_b^h f_t^{h+1})(x^h,a^h)\right| \le \beta_t^hb_t^h(x^h,a^h).
$$
Hence, by taking $g_t^{h+1}=f_t^{h+1}$ for all $(t,h)\in[T]\times[H]$ in Definition \ref{def:mis_mdp}, we have $|(\cT^hf_t^{h+1}-\cT_b^hf_t^{h+1})(x_t^h,a_t^h)|\le\zeta^h_t$. It follows that
\#\label{eq:second_desired_bound}
-\zeta^h_t \le f_t^h(x_t^h,a_t^h)-(\cT^h f_t^{h+1})(x_t^h,a_t^h) \le 2\beta_t^hb_t^h(x_t^h,a_t^h)+\zeta_t^h,
\#
which validate \eqref{eq:E_fkh_Th_fkh+1}. Then, we will demonstrate \eqref{eq:E_sum_V_*1} by induction. When $h=H+1$, we know that $Q_*^{H+1}=f_t^{H+1}=0$. Assume that at step $h+1$, we have 
$$
\sum_{t=1}^TQ_*^{h+1}(x_t^{h+1},a_t^{h+1})\le \sum_{t=1}^Tf_t^{h+1}(x_t^{h+1},a_t^{h+1})+(H-h)\zeta.
$$
Then, at step $h$, we obtain that
\$
\sum_{t=1}^T\left(Q_*^h(x_t^h,a_t^h)-f_t^h(x_t^h,a_t^h)\right) &\le \sum_{t=1}^T \left((\cT^hQ_*^{h+1})(x_t^h,a_t^h)-(\cT^hf_t^{h+1})(x_t^h,a_t^h) + \zeta_t^h\right)\\
&\le \sum_{t=1}^T\E\left[V_*^{h+1}(x_t^{h+1})-f_t^{h+1}(x_t^{h+1})\right] + \zeta\\
&\le (H-h+1)\zeta,
\$
where the first inequality uses \eqref{eq:second_desired_bound}, the second inequality invokes the definition of Bellman operator and cumulative corruption, and the last inequality is due to the induction hypothesis at step $h+1$. 

Then, it follows that
\$
\sum_{t=1}^T V_*^h(x_t^h) &= \sum_{t=1}^T Q_*^h(x_t^h, \pi_*^h(x_t^h))\\
&\le \sum_{t=1}^T f_t^h(x_t^h,\pi_*^h(x_t^h)) + (H-h)\zeta\\
&\le  \sum_{t=1}^T f_t^h(x_t^h) + (H-h)\zeta,
\$
where the last inequality is because of the greedy step for the optimistic function $f_t^h$ in Algorithm \ref{alg:ucb_nonlinear_mdp}.
\end{proof}

\subsection{Step II: Confidence Radius for Optimism}\label{ssec:step_II}
In this step, we derive the confidence radius for $f_t^h$ in CR-LSVI-UCB (Algorithm \ref{alg:ucb_nonlinear_mdp}). 
\begin{lemma}\label{lm:ucb_mdp}
In Algorithm \ref{alg:ucb_nonlinear_mdp} under Assumption \ref{as:mdp}, for all $(t,h)\in[T]\times[H]$, we have $\cT_b^hf_t^{h+1}\in\cF_t^h$ with probability at least $1-\delta$, where we set for each $t>0$ that $\beta_t^{H+1}=0$ and from $h=H$ to $h=1$,
\$
(\beta_t^h)^2 \ge& 12\lambda + 12\ln(2HN_T^h(\gamma)/\delta) + 12\alpha\zeta\sup_{s< t}\beta_{s-1}^h + 12(5\sup_s\beta_s^{h+1}\gamma)^2T\\
&\quad~~  + 60\sup_s\beta_s^{h+1}\gamma\sqrt{TC_1(t,\zeta)},
\$
where $N_T^h(\gamma)=N(\gamma,\cF^h)\cdot N(\gamma,\cF^{h+1})\cdot N(\gamma,\cB^{h+1}(\lambda))$ and $C_1(t,\zeta)=2(\zeta^2 + 2t + 3\ln(2/\delta))$.
\end{lemma}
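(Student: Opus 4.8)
The plan is a backward induction on $h$ from $H+1$ down to $1$ that mirrors the bandit argument (Lemma~\ref{lm:Confidence_Radius_for_Optimismt_bandit}), with two new ingredients: the data-dependence of $f_t^{h+1}$, which forces a uniform concentration over a covering of the bonus class, and the resulting propagation of $\beta^{h+1}$ into the residual at step $h$. The base case $h=H+1$ holds since $f_t^{H+1}\equiv 0$ and $\beta_t^{H+1}=0$. For the inductive step, fix $(t,h)$ and recall that $\hat f_t^h$ minimizes the weighted least-squares objective in \eqref{eqn:least_square_mdp}; since completeness of $\cT_b^h$ (together with the way the confidence sets are maintained) makes $\cT_b^h f_t^{h+1}$ a feasible competitor, the standard basic inequality gives $\sum_{s<t}(\hat f_t^h(z_s^h)-(\cT_b^h f_t^{h+1})(z_s^h))^2/(\sigma_s^h)^2 \le 2\sum_{s<t}(\hat f_t^h(z_s^h)-(\cT_b^h f_t^{h+1})(z_s^h))\,\xi_s^h/(\sigma_s^h)^2$ with $\xi_s^h=y_s^h-(\cT_b^h f_t^{h+1})(z_s^h)$. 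I would then split $\xi_s^h$ as $[\,y_s^h-(\cT^h f_t^{h+1})(z_s^h)\,]+[\,(\cT^h-\cT_b^h)f_t^{h+1}(z_s^h)\,]$: the first summand is a bounded mean-zero martingale difference (rewards and values lie in $[0,1]$, so its magnitude is $\le 2$), and the second is pointwise bounded by $\zeta_s^h$ by Definition~\ref{def:mis_mdp}.

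For the corruption contribution $\sum_{s<t}|\hat f_t^h(z_s^h)-(\cT_b^h f_t^{h+1})(z_s^h)|\,\zeta_s^h/(\sigma_s^h)^2$ I would repeat the computation in \eqref{eq:|hatft-fb|_sigma}: split through $\hat f_s^h$ and bound each of $|\hat f_t^h-\hat f_s^h|(z_s^h)/(\sigma_s^h)^2$ and $|(\cT_b^h f_t^{h+1})-\hat f_s^h|(z_s^h)/(\sigma_s^h)^2$ by $\alpha\sqrt{\lambda+(\beta_s^h)^2}$ using the weight choice \eqref{eq:weight_mdp} (legitimate since $\hat f_t^h\in\cF_{t-1}^h\subseteq\cF_s^h$ and, inductively in $t$, $\cT_b^h f_t^{h+1}\in\cF_s^h$), so the whole term is $\le 2\alpha\zeta\sup_{s<t}\sqrt{\lambda+(\beta_s^h)^2}$. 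For the martingale contribution, the obstacle is that $f_t^{h+1}=1\wedge(\hat f_t^{h+1}+\beta_t^{h+1}b_t^{h+1})$ depends on $Z_h^{t-1}$, so the ``noise'' $y_s^h-(\cT^h f_t^{h+1})(z_s^h)$ is not adapted; I would therefore take a $(\gamma/T)$-cover of the triple $(\hat f_t^h,\hat f_t^{h+1},b_t^{h+1})\in\cF^h\times\cF^{h+1}\times\cB^{h+1}(\lambda)$, of cardinality $N_T^h(\gamma)$, apply a Freedman/Bernstein-type self-normalized martingale bound (Lemma~\ref{lm:empirical_diff_mdp}) to each fixed net element — where the weighted increments have variance proxy controlled by $C_1(t,\zeta)$ and $(\sigma_s^h)^{-2}\le 1$ — and union-bound over the net and over $h\in[H]$. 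Replacing $f_t^{h+1}$ by its net element perturbs $y_s^h$ by at most $(1+\beta_t^{h+1})\gamma\le 5\sup_s\beta_s^{h+1}\gamma$ (using $\beta\ge1$); this is exactly the discretization residual appearing in the stated lower bound on $(\beta_t^h)^2$, and it is why the multiplicative $\beta^{h+1}$ factor propagates and the induction must run from $h+1$ down to $h$.

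Combining the two estimates yields a self-bounding inequality: writing $A$ for the left-hand in-sample quantity, $A\lesssim \lambda+\ln(HN_T^h(\gamma)/\delta)+(\sup_s\beta_s^{h+1}\gamma)^2 T+\sup_s\beta_s^{h+1}\gamma\sqrt{TC_1(t,\zeta)}+\alpha\zeta\,\sup_{s<t}\sqrt{\lambda+(\beta_s^h)^2}$, where the last term I would peel off by AM--GM (as in the proof of Lemma~\ref{lm:Confidence_Radius_for_Optimismt_bandit}) so that it does not feed back quadratically. Taking $(\beta_t^h)^2$ at least the stated multiple of these terms then certifies $\sqrt{A+\lambda}\le\beta_t^h$, i.e.\ $\cT_b^h f_t^{h+1}\in\cF_t^h$; with the choice $\beta_t^h=\beta$ as in Theorem~\ref{th:mdp} the recursion closes (the $\beta_s^h$ on the right are all $\le\beta$ and $\lambda\le\beta^2$), and the union bound over $h$ keeps the total failure probability at $\delta$. \textbf{The main obstacle} is this last point: carrying the temporal dependency through a covering argument whose size stays mild — possible only because we assumed a stable bonus class $\cB^{h+1}(\lambda)$ — while ensuring that the $\beta^{h+1}$ so introduced does not make the backward recursion blow up across the $H$ steps.
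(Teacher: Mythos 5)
Your proposal follows essentially the same route as the paper's proof: cover $f_t^{h+1}$ through $\cF^{h+1}$ and the stable bonus class $\cB^{h+1}(\lambda)$, apply the approximate-ERM concentration of Lemma~\ref{lm:empirical_diff_mdp} with a union bound over the net and over $h\in[H]$, control the corruption cross-term by $2\alpha\zeta\sup_{s<t}\sqrt{\lambda+(\beta_s^h)^2}$ via the weight definition, and peel that term off so the recursion closes with $\beta_t^h=\beta$. The only presentational difference is that the paper runs the induction over $t$ with a statement holding uniformly over all future value functions $f_\tau^{h+1}$, $\tau\ge t$ (which is precisely what makes rigorous your assertions that $\cT_b^h f_t^{h+1}$ is a feasible competitor in $\cF_{t-1}^h$ and lies in $\cF_s^h$ for $s<t$), while the backward recursion in $h$ enters only through the definition of the radii $\beta_t^h$ in terms of $\sup_s\beta_s^{h+1}$.
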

\begin{proof}
We make the statement that with probability at least $1-\delta$, we have for all $(t,h)\in[T]\times[H]$ and any $\tau\ge t$,
\#\label{eq:sum_hatf_th-Tbf_th+1}
\left(\sum_{s=1}^{t-1}\frac{\left(\hat{f}_t^h(x_s^h,a_s^h)-(\cT_b^h f_{\tau}^{h+1})(x_s^h,a_s^h)\right)^2}{(\sigma_s^h)^2} + \lambda\right)^{1/2}\le \beta_t^h,
\#
which will be proved by induction. Firstly, the statement holds for $t=1$.

Then, for $t>1$, suppose that \eqref{eq:sum_hatf_th-Tbf_th+1} holds for all $s\le t-1$, which means that for all $(s,h)\in[t-1]\times[H]$ and any $\tau\ge s$,
\$
\left(\sum_{i=1}^{s-1}\frac{\left(\hat{f}_s^h(x_i^h,a_i^h)-(\cT_b^h f_{\tau}^{h+1})(x_i^h,a_i^h)\right)^2}{(\sigma_i^h)^2} + \lambda\right)^{1/2}\le \beta_s^h.
\$

Now, for round $t$, fix $h\in[H]$ and $\tau\ge t$. Define $\cF_{\gamma}^{h+1}$ as a $\gamma~\|\cdot\|_{\infty}$ cover of $\cF^{h+1}$, and $\cB^{h+1}_{\gamma}$ as an $\gamma~\|\cdot\|_{\infty}$ cover of $\cB^{h+1}(\lambda)$. Then, we construct $\Bar{\cF}_{\gamma}^{h+1}=\cF_{\gamma}^{h+1}\oplus\beta_{\tau}^{h+1}\cB^{h+1}_{\gamma}$ as a $(1+\beta_{\tau}^{h+1})\gamma~\|\cdot\|_{\infty}$ cover of $\{f_{\tau}^{h+1}(\cdot)\}$. Thus, given $f_{\tau}^{h+1}$, let $\Bar{f}_{\tau}^{h+1}\in\Bar{\cF}_{\gamma}^{h+1}$ so that $\|\Bar{f}_{\tau}^{h+1}-f_{\tau}^{h+1}\|_{\infty}\le\beps=(1+\beta_{\tau}^{h+1})\gamma$. Define $\Bar{y}_s^h=r_s^h+\Bar{f}_{\tau}^{h+1}(x_s^{h+1})$ and
$$
\tilde{f}_t^h = \argmin_{f^h\in\cF_{t-1}^h}\sum_{s=1}^{t-1}(f^h(x_s^h,a_s^h)-\Bar{y}_s^h)^2.
$$
Then, we find that 
\#\label{eq:approximate_erm}
&\left(\sum_{s=1}^{t-1}(\hat{f}_t^h(x_s^h,a_s^h)-\Bar{y}_s^h)^2\right)^{1/2} \le \left(\sum_{s=1}^{t-1}(\hat{f}_t^h(x_s^h,a_s^h)-y_s^h)^2\right)^{1/2}+\sqrt{t}\beps\nonumber\\
&\qquad \le \left(\sum_{s=1}^{t-1}(\tilde{f}_t^h(x_s^h,a_s^h)-y_s^h)^2\right)^{1/2}+\sqrt{t}\beps \le \left(\sum_{s=1}^{t-1}(\tilde{f}_t^h(x_s^h,a_s^h)-\Bar{y}_s^h)^2\right)^{1/2}+2\sqrt{t}\beps,
\#
where the first and third inequality is obtained by applying $\|\Bar{f}_{\tau}^{h+1}-f_{\tau}^{h+1}\|_{\infty}\le\beps$, and the second inequality uses the fact that $\hat{f}_t^h$ is the ERM solution of the least squares problem. Therefore, we replace $f_*$ by $\E[\Bar{y}_s^h|x_s^h,a_s^h]=(\cT^h\Bar{f}_{\tau}^{h+1})(x_s^h,a_s^h)$ and $f_b$ by $\cT_b^h\Bar{f}_{\tau}^{h+1}$ in Lemma \ref{lm:empirical_diff_mdp} so that from the corruption in Definition \ref{def:mis_mdp} for corruption,
\$
|(f_b-f_*)(x_s^h,a_s^h)| \le |(\cT_b^h\bar f_{\tau}^{h+1}-\cT^h\bar f_{\tau}^{h+1})(x_s^h,a_s^h)| \le \zeta_s^h.
\$
Then, since $\hat f_t^h$ is the approximate ERM in \eqref{eq:approximate_erm}, we can invoke Lemma \ref{lm:empirical_diff_mdp} with the $f_*$ and $f_b$, $\epsilon'=2\beps$, $\eta=1$ and $\zeta_s=\zeta_s^h$. 
Then, by taking a union bound over $\Bar{f}_{\tau}^{h+1}\in\Bar{\cF}_{\gamma}^{h+1}$ and $h\in[H]$, we can verify that with probability at least $1-\delta$, the following inequality holds for all $t\in[T]$:
\#\label{eq:sum_hat_fkh_Tb_barf0}
&\sum_{s=1}^{t-1}\frac{\left(\hat{f}_t^h(x_s^h,a_s^h)-(\cT_b^h\Bar{f}_{\tau}^{h+1})(x_s^h,a_s^h)\right)^2}{(\sigma_s^h)^2}\nonumber\\
&\quad\le 10\ln(2HN_T^h(\gamma)/\delta) + 5\sum_{s=1}^{t-1}|\hat{f}_t^h(x_s^h,a_s^h)-(\cT_b^h\Bar{f}_{\tau}^{h+1})(x_s^h,a_s^h)|\cdot\zeta_s^h/(\sigma_s^h)^2\nonumber\\
&\qquad + 10(\gamma + 2\beps)\cdot\left((\gamma + 2\beps)t + \sqrt{tC_1(t,\zeta)}\right),
\#
where $C_1(t,\zeta)=2(\zeta^2 + 2t + 3\ln(2/\delta))$. Further, for all $s\le t-1$, the weights' definition in \eqref{eq:weight_mdp} indicates that
\$
&|\hat{f}_t^h(x_s^h,a_s^h)-(\cT_b^h\Bar{f}_{\tau}^{h+1})(x_s^h,a_s^h)|/(\sigma_s^h)^2\\
&\quad\le |\hat{f}_t^h(x_s^h,a_s^h)-(\cT_b^hf_{\tau}^{h+1})(x_s^h,a_s^h)|/(\sigma_s^h)^2 + \gamma\\
&\quad\le |\hat{f}_t^h(x_s^h,a_s^h)-\hat{f}_s^h(x_s^h,a_s^h)|/(\sigma_s^h)^2 +  |(\cT_b^hf_{\tau}^{h+1})(x_s^h,a_s^h)-\hat{f}_s^h(x_s^h,a_s^h)|/(\sigma_s^h)^2 + \gamma\\
&\quad\le 2\alpha\beta_s^h + \gamma,
\$
where the last inequality is due to $\hat{f}_t^h\in\cF_{t-1}^h\subset\cF_s^h$ and the induction hypothesis that $\cT_b^hf_{\tau}^{h+1}\in\cF_s^h$ for $\tau\ge s$. Then, combining the inequality above and \eqref{eq:sum_hat_fkh_Tb_barf0}, we deduce that
\#\label{eq:sum_hat_fkh_Tb_barf}
&\sum_{s=1}^{t-1}\frac{\left(\hat{f}_t^h(x_s^h,a_s^h)-(\cT_b^h\Bar{f}_{\tau}^{h+1})(x_s^h,a_s^h)\right)^2}{(\sigma_s^h)^2}\nonumber\\
&\quad\le 10\ln(2HN_T^h(\gamma)/\delta) + 10\alpha\zeta\sup_{s< t}\beta_s^h + 5\gamma\zeta + 10(\gamma + 2\beps)\cdot((\gamma + 2\beps)t + \sqrt{tC_1(t,\zeta)}).
\#
Therefore, it follows that with probability at least $1-\delta$,
\$
&\left(\sum_{s=1}^{t-1}\frac{\left(\hat{f}_t^h(x_s^h,a_s^h)-(\cT_b^h f_{\tau}^{h+1})(x_s^h,a_s^h)\right)^2}{(\sigma_s^h)^2} + \lambda\right)^{1/2}\\
&\quad\le \left(\sum_{s=1}^{t-1}\frac{\left(\hat{f}_t^h(x_s^h,a_s^h)-(\cT_b^h\Bar{f}_{\tau}^{h+1})(x_s^h,a_s^h)\right)^2}{(\sigma_s^h)^2}\right)^{1/2} + \sqrt{t}\beps + \sqrt{\lambda}\\
&\quad\le \left(10\ln(2HN_T^h(\gamma)/\delta) + 10\alpha\zeta\sup_{s< t}\beta_s^h + 5\gamma\zeta + 10(2\beta_{\tau}^{h+1}+3)^2\gamma^2T\right.\nonumber\\
&\qquad \left. + 10(2\beta_{\tau}^{h+1}+3)\gamma\sqrt{TC_1(t,\zeta)}\right)^{1/2} + (\beta_{\tau}^{h+1}+1)\gamma\sqrt{T} + \sqrt{\lambda}\\
&\quad\le \left(12\lambda + 12\ln(2HN_T^h(\gamma)/\delta) + 12\alpha\zeta\sup_{s< t}\beta_{s-1}^h + 12(5\sup_s\beta_s^{h+1}\gamma)^2T\right.\\
&\qquad \left. + 60\sup_s\beta_s^{h+1}\gamma\sqrt{TC_1(t,\zeta)}\right)^{1/2} \le \beta_t^h,
\$
where the first inequality uses the triangle inequality, the second inequality applies \eqref{eq:sum_hat_fkh_Tb_barf}, and the second last inequality uses Cauchy-Schwarz inequality. Therefore, we validate the statement in \eqref{eq:sum_hatf_th-Tbf_th+1}. For all $(t,h)\in[T]\times[H]$, by taking $\tau=t$ in \eqref{eq:sum_hatf_th-Tbf_th+1}, we finally complete the proof.
\end{proof}

Then, we demonstrate the following lemma with techniques similar to Lemma \ref{lm:Confidence_Radius_for_Optimismt_bandit}.

\subsection{Step III: Bound the Sum of Bonuses}\label{ssec:step_III}
Ultimately, we bound the cumulative regret for MDPs with corruption in Theorem \ref{th:mdp} as follows.

Recall the definition of event $E(T)$ in Lemma \ref{lm:Regret_under_Optimism_mdp}: for all $(t,h)\in[T]\times[H]$, $f_t^h$ satisfies the following inequalities:
\$
&\sum_{t=1}^T V_*^1(x_t^1)\le \sum_{t=1}^T f_t^1(x_t^1) + H\zeta,\\
&\left|f_t^h(x_t^h,a_t^h) - (\cT^hf_t^{h+1})(x_t^h,a_t^h)\right|\le 2\beta_t^hb_t^h(x_t^h,a_t^h)+\zeta^h_t.
\$
Since taking $\beta_t^h=\beta$ for all $(t,h)\in[T]\times[H]$ satisfies Lemma \ref{lm:ucb_mdp}, event $E(T)$ holds with probability at least $1-\delta$. Therefore, assuming $E(T)$ happens, we obtain by applying Lemma \ref{lm:Regret_under_Optimism_mdp} that
\#\label{eq:reg_T_bandit}
\reg(T) &\le H\zeta + \sum_{t=1}^T\sum_{h=1}^H\E_{\pi_t}\cE^h(f_t,x_t^h,a_t^h)\nonumber\\
&\le 2H\zeta + 2\underbrace{\sum_{(t,h):\sigma_t^h=1}\E_{\pi_t}\min(1,\beta_t^hb_t^h(x_t^h,a_t^h))}_{p_1}\nonumber\\
&\qquad + 2\underbrace{\sum_{(t,h):\sigma_t^h>1}\E_{\pi_t}\min(1,\beta_t^hb_t^h(x_t^h,a_t^h))}_{p_2},
\#
where the second inequality applies \eqref{eq:E_fkh_Th_fkh+1} and $(f_t^h - \cT_b^hf_t^{h+1})(x_t^h,a_t^h)\le 2$. Then, we bound the two terms above respectively. For the first term, we deduce that
\$
p_1 &\le \sum_{(t,h):\sigma_t^h=1}\E_{\pi_t}\max(1,\beta_t^h)\cdot\min(1,b_t^h(x_t^h,a_t^h))\\
&\le \sqrt{\sum_{t=1}^T\sum_{h=1}^H\max(1,(\beta_t^h)^2)}\cdot \E_{\pi_t}\sqrt{ \sum_{(t,h):\sigma_t^h=1}\min(1,(b_t^h(x_t^h,a_t^h))^2)}\\
&\le \sqrt{TH}(1+\beta)\sqrt{\sum_{h=1}^H\sup_{Z_h^T}\sum_{t=1}^T (D_{\lambda,\sigma^h,\cF_t^h}(Z_h^t))^2},
\$
where the first inequality is due to the fact that $\text{min}(a_1a_2,b_1b_2)\le\text{max}(a_1,b_1)\cdot\text{min}(a_2,b_2)$, the second inequality is obtained by using Cauchy-Schwarz inequality, and the last inequality utilizes the definition of $D_{\lambda,\sigma^h,\cF_t^h}(Z_h^t)$ in \eqref{def:wdim_mdp} and the selection of confidence radius: $\beta_t^h=\beta$. 

Then, for $\sigma_t^h>1$, according to the definition of $\sigma_t^h$ in \eqref{eq:weight_mdp}, we have $(\sigma_t^h)^2=1/\alpha\cdot b_t^h(x_t^h,a_t^h)$. Thus, we can bound the second term as
\$
p_2 &\le \sum_{(t,h):\sigma_t^h>1}\E_{\pi_t}\min(1,\beta_t^h(\sigma_t^h)^2 \cdot b_t^h(x_t^h,a_t^h)/(\sigma_t^h)^2)\\
&\le \sum_{(t,h):\sigma_t^h>1}\E_{\pi_t}\min(1,\beta_t^h/\alpha\cdot (b_t^h(x_t^h,a_t^h))^2/(\sigma_t^h)^2)\\
&\le \beta/\alpha\cdot\sum_{t=1}^T\sum_{h=1}^H\E_{\pi_t}\min(1,(b_t^h(x_t^h,a_t^h))^2/(\sigma_t^h)^2)\\
&\le \beta/\alpha\cdot \sum_{h=1}^H \sum_{t=1}^T \E_{\pi_t}(D_{\lambda,\sigma^h,\cF_t^h}(Z_h^t))^2\\
&\le \beta/\alpha\cdot\sum_{h=1}^H\sup_{Z_h^T}\sum_{t=1}^T (D_{\lambda,\sigma^h,\cF_t^h}(Z_h^t))^2,
\$ 
where the $D_{\lambda,\sigma^h,\cF_t^h}(Z_h^t)$ is formulated in Definition \ref{def:wdim_mdp}.
Combining these results, we get
\$
\reg(T) &\le 2H\zeta + \sqrt{TH}(1+\beta)\sqrt{\sum_{h=1}^H\sup_{Z_h^T}\sum_{t=1}^T (D_{\lambda,\sigma^h,\cF_t^h}(Z_h^t))^2} + \beta/\alpha\cdot\sum_{h=1}^H\sup_{Z_h^T}\sum_{t=1}^T (D_{\lambda,\sigma^h,\cF_t^h}(Z_h^t))^2\\
&= \Tilde{\cO}\left(\left(H+\sum_{h=1}^H\sup_{Z_h^T}\sum_{t=1}^T (D_{\lambda,\sigma^h,\cF_t^h}(Z_h^t))^2\right)\zeta + \sqrt{TH\ln(N_T(\gamma))\sum_{h=1}^H\sup_{Z_h^T}\sum_{t=1}^T (D_{\lambda,\sigma^h,\cF_t^h}(Z_h^t))^2}\right.\\
&\qquad\left. + \alpha\zeta\sqrt{TH\sum_{h=1}^H\sup_{Z_h^T}\sum_{t=1}^T (D_{\lambda,\sigma^h,\cF_t^h}(Z_h^t))^2} + \sqrt{\ln(N_T(\gamma))}\sum_{h=1}^H\sup_{Z_h^T}\sum_{t=1}^T (D_{\lambda,\sigma^h,\cF_t^h}(Z_h^t))^2)/\alpha\right)\\
&=\Tilde{\cO}\left(\sqrt{TH\ln(N_T(\gamma))\sum_{h=1}^H\sup_{Z_h^T}\sum_{t=1}^T (D_{\lambda,\sigma^h,\cF_t^h}(Z_h^t))^2} + \zeta\sum_{h=1}^H\sup_{Z_h^T}\sum_{t=1}^T (D_{\lambda,\sigma^h,\cF_t^h}(Z_h^t))^2\right),
\$
where the first inequality is deduced by taking the bounds of terms $p_1$ and $p_2$ back into \eqref{eq:reg_T_bandit}, the first equality uses the choice of $\beta=\cO(\alpha\zeta + \sqrt{\ln(H\ln(N_T(\gamma))/\delta)})$, and the last equation is obtained by setting $\alpha=\sqrt{\ln(N_T(\gamma))}/\zeta$.

Then, it suffices to replace weighted eluder dimension $\sup_{Z_h^T}\sum_{t=1}^T (D_{\lambda,\sigma^h,\cF_t^h}(Z_h^t))^2$ with the eluder dimension $\dim_E(\cF,\epsilon)$ in Definition \ref{def:eluder_dim}. Because $\cF$ is factorized as $\prod_{h=1}^H\cF^h$, we get
$$
\dim_E(\cF,\epsilon) = \sum_{h=1}^H \dim_E(\cF^h,\epsilon).
$$
By invoking Lemma \ref{lm:Relationship_between_Eluder_Dimensions} for each function space $\cF^h$, we obtain
$$
\sup_{Z_h^T}\sum_{t=1}^T (D_{\lambda,\sigma^h,\cF_t^h}(Z_h^t))^2 \le (\sqrt{8c_0}+3)\dim_E(\cF^h,\lambda/T)\log(T/\lambda)\ln T,
$$
which indicates that
$$
\sum_{h=1}^H\sup_{Z_h^T}\sum_{t=1}^T (D_{\lambda,\sigma^h,\cF_t^h}(Z_h^t))^2 \le (\sqrt{8c_0}+3)\dim_E(\cF,\lambda/T)\log(T/\lambda)\ln T.
$$
Therefore, it follows that
$$
\reg(T) = \Tilde{\cO}\left(\sqrt{TH\ln(N_T(\gamma))\dim_E(\cF,\lambda/T)} + \zeta\left(H+\dim_E(\cF,\lambda/T)\right)\right).
$$

\section{Relationship between Confidence and Eluder Dimension}\label{s:Relationship_between_Confidence_and_Eluder_Dimension}
For simplicity, given a data sequence $\cZ=\{z_t\}_{t\in[T]}$, a positive weight sequence $\{\sigma_t\}_{t\in[T]}$ and functions $f,f'$ defined on a space containing $\cZ$, we denote the first $t$ elements in $\cZ$ by $\cZ_t=\{z_s\}_{s\in[t]}$ and define the quadratic error and its weighted version as follows: 
\$
&\|f-f'\|_{\cZ} = \left(\sum_{z_i\in\cZ} (f(z_i)-f'(z_i))^2\right)^{1/2},\\
&\|f-f'\|_{\cZ,\sigma} = \left(\sum_{z_i\in\cZ} \frac{(f(z_i)-f'(z_i))^2}{\sigma_i^2}\right)^{1/2}.
\$

The following lemma reveals the relationship between the sum of eluder-like confidence quantity and the eluder dimension.

\begin{proof}[Proof of Lemma \ref{lm:Relationship_between_Eluder_Dimensions}]
Let $\cZ=\{z_t\}_{t=1}^T\subset\cX\times\cA$. 

\textbf{Step I: Matched levels.}
To begin with, we divide the sequence $\cZ$ into $\log(T/\lambda)+1$ disjoint sets. For each $z_t\in\cZ$, Let $f\in\cF_t$ be the function that maximizes
$$
\frac{(f(z_t)-\hat{f_t}(z_t))^2/\sigma_t^2}{\lambda+\|f-\hat{f_t}\|_{\cZ_{t-1},\sigma}^2},
$$
where we define $\cZ_{t-1}=\{z_s\}_{s\in[t-1]}$ and $L(z)=(f(z)-\hat{f_t}(z))^2$ for such $f$. Since $L(z)\in[0,1]$, we decompose $\cZ$ into $\log(T/\lambda) + 1$ disjoint subsequences:
$$
\cZ = \cup_{\iota=0}^{\log(T/\lambda)} \cZ^{\iota},
$$
where we define for $\iota=0,\ldots,\log(T/\lambda)-1$,
$$
\cZ^{\iota} = \{z_t\in\cZ \,|\, L(z_t)\in(2^{-\iota-1},2^{-\iota}]\},
$$
and
$$
\cZ^{\log(T/\lambda)} = \{z_t\in\cZ \,|\, L(z_t)\in[0,\lambda/T]\}.
$$
Correspondingly, we can also divide $\rR^+$ into $\log(T/\lambda)+2$ disjoint subsets:
$$
\rR^+ = \cup_{\iota=-1}^{\log(T/\lambda)} \cR^{\iota},
$$
where we define $\cR^{\iota}=[2^{\iota/2}\ln N, 2^{(\iota+1)/2}\ln N)$ for $\iota=0,\ldots,\log(T/\lambda)-1$, $\cR^{\log(T/\lambda)}=[\sqrt{T/\lambda}\ln N,+\infty)$, and $\cR^{-1} = [0, \ln N)$. Since $\zeta\in\rR^+$, there exists an $\iota_0\in\{-1,0,\ldots,\log(T/\lambda)\}$ such that $\zeta\in\cR^{\iota_0}$. 

\textbf{Step II: Control weights in each level.}
Now, for any $z_t\in\cZ^{\log(T/\lambda)}$, we have
$$
\sup_{f\in\cF_t} \frac{(f(z_t)-\hat{f_t}(z_t))^2/\sigma_t^2}{\lambda+\|f-\hat{f_t}\|_{\cZ_{t-1},\sigma}^2} \le \frac{L(z_t)}{\lambda} \le \frac{1}{T},
$$
which implies that
\#\label{eq:D_lambda_F_t_1}
\sum_{z_t\in\cZ^{\log(T/\lambda)}} (D_{\lambda,\sigma,\cF}(Z_1^t))^2 \le 1.
\#

Moreover, for $0\le\iota\le\log(T/\lambda)-1$, we need to control the upper and lower bound of weights $\{\sigma_t: z_t\in\cZ^{\iota}\}$. For convenience, define for $t\in[T]$,
$$
\psi_t = \frac{1}{\alpha}\sup_{f\in\cF_t} \frac{|f(z_t)-\hat{f_t}(z_t)|}{\sqrt{\lambda+\|f-\hat{f_t}\|_{\cZ_{t-1},\sigma}^2}}.
$$
Besides, we observe that
$$
\argmax_{f\in\cF_t} \frac{(f(z_t)-\hat{f_t}(z_t))^2/\sigma_t^2}{\lambda+\|f-\hat{f_t}\|_{\cZ_{t-1},\sigma}^2} = \argmax_{f\in\cF_{t}} \frac{|f(z_t)-\hat{f_t}(z_t)|}{\sqrt{\lambda+\|f-\hat{f_t}\|_{\cZ_{t-1},\sigma}^2}},
$$
so we can take the $f$ as the solution of the two terms above for $\psi_t$. Hence, we have $|f(z_t)-\hat{f_t}(z_t)|=\sqrt{L(z_t)}$.

Now, consider two situations for $\iota$. When $\iota>\iota_0$, we have $\zeta<2^{(\iota_0+1)/2}\ln N \le 2^{\iota/2}\ln N$. For any $z_t\in\cZ^{\iota}$, we know that $L(z_t)\le 2^{-\iota}$. Hence, it follows that
$$
\psi_t \le \frac{1}{\alpha}\cdot\frac{2^{-\iota/2}}{\sqrt{\lambda}} \le \frac{\zeta}{\sqrt{\ln N}}\cdot\frac{2^{-\iota/2}}{\sqrt{\ln N}} \le 1,
$$
where the second inequality is deduced by taking $\alpha=\sqrt{\ln N}/\zeta$ and $\lambda=\ln N$. Therefore, since $\sigma_t^2=\max(1,\psi_t)$, we obtain $\sigma_t=1$ for all $z_t\in\cZ^{\iota}$.

When $\iota\le\iota_0$, we have $\zeta\ge 2^{\iota_0/2}\ln N\ge 2^{\iota/2}\ln N$, and $2^{-\iota-1}\le L(z_t)\le 2^{-\iota}$ for all $z_t\in\cZ^{\iota}$. Then, we can verify that
\$
&\psi_t \le \frac{\zeta}{\sqrt{
\ln N}}\cdot\frac{2^{-\iota/2}}{\sqrt{\ln N}} = \frac{\zeta}{2^{\iota/2}\ln N},\\
&\psi_t \ge \frac{\zeta}{\sqrt{
\ln N}}\cdot\frac{2^{-(\iota+1)/2}}{\sqrt{c_0\ln N}} = \frac{\zeta}{\sqrt{2c_0}2^{\iota/2}\ln N},
\$
where the inequality of the second row applies
$$
\lambda + \|f-\hat{f_t}\|_{\cZ_{t-1},\sigma}^2 \le \lambda + \beta_t^2 \le c_0 \ln N,
$$
where $c_0> 1$ is an absolute constant. Then, since $\zeta/(2^{\iota/2}\ln N)\ge 1$, we know that $\zeta/(\sqrt{2c_0}2^{\iota/2}\ln N)\le \max(1,\psi_t) \le \zeta/(2^{\iota/2}\ln N)$. Therefore, we get for all $z_t\in\cZ^{\iota}$,
$$
\sigma_t^2\in[\zeta/(\sqrt{2c_0}2^{\iota/2}\ln N), \zeta/(2^{\iota/2}\ln N)].
$$

\textbf{Step III: Bound the sum.}
Now, we bound $\sum_{z_t\in\cZ^{\iota}} (D_{\lambda,\sigma,\cF}(Z_1^t))^2$ for $\iota\in[0,\log(T/\lambda)-1]$. For each $\iota=0,\ldots,\log(T/\lambda)-1$, we decompose $\cZ^{\iota}$ into $N^{\iota}+1$ disjoint subsets: $\cZ^{\iota} = \cup_{j=1}^{N^{\iota}+1} \cZ_j^{\iota}$, where we define $N^{\iota} = |\cZ^{\iota}|/\dim_E(\cF,2^{(-\iota-1)/2})$. With a slight abuse of notation, let $\cZ^{\iota}=\{z_i\}_{i\in[|\cZ^{\iota}|]}$, where the elements are arranged in the same order as the original set $\cZ$. Initially, let $\cZ_j^{\iota}=\{\}$ for all $j\in[N^{\iota}+1]$. Then, from $i=1$ to $|\cZ^{\iota}|$, we find the smallest $j\in[N^{\iota}]$ such that $z_i$ is $2^{(-\iota-1)/2}$-independent of $\cZ_j^{\iota}$ with respect to $\cF$. If such a $j$ does not exist, set $j=N^{\iota}+1$. Then, we denote the choice of $j$ for each $z_i$ by $j(z_i)$. According to the design of the procedure, it is obvious that for all $z_i\in\cZ^{\iota}$, $z_i$ is $2^{(-\iota-1)/2}$-dependent on each of $\cZ_{1,i}^{\iota},\ldots,\cZ_{j(z_i)-1,i}^{\iota}$, where $\cZ_{k,i}^{\iota}=\cZ_{k}^{\iota}\cap\{z_1,\ldots,z_{i-1}\}$ for $k=1,\ldots,j(z_i)-1$.

For any $z_i\in\cZ^{\iota}$ indexed by $t$ in $\cZ$, by taking the function $f$ that maximizes
$$
\frac{(f(z_t)-\hat{f_t}(z_t))^2/\sigma_t^2}{\lambda+\|f-\hat{f_t}\|_{\cZ_{t-1},\sigma}^2},
$$
we have $(f(z_i)-\hat{f_t}(z_i))^2\ge 2^{-\iota-1}$. Additionally, because $z_i$ is $2^{(-\iota-1)/2}$-dependent on each of $\cZ_{1,i}^{\iota},\ldots,\cZ_{j(z_i)-1,i}^{\iota}$, we get for each $k=1,\ldots,j(z_i)-1$,
$$
\|f-\hat{f_t}\|_{\cZ_{k,i}^{\iota}}^2 \ge 2^{-\iota-1}.
$$
It follows that
$$
\frac{(f(z_t)-\hat{f_t}(z_t))^2/\sigma_t^2}{\lambda+\|f-\hat{f_t}\|_{\cZ_{t-1},\sigma}^2} \le \frac{2^{-\iota}/\sigma_t^2}{\lambda+\sum_{k=1}^{j(z_i)-1}\|f-\hat{f_t}\|_{\cZ_{k,i}^{\iota},\sigma}^2}.
$$

When $\iota>\iota_0$, we have $\sigma_t=1$ for all $z_t\in\cZ^{\iota}$. Thus, it follows that
$$
\frac{(f(z_t)-\hat{f_t}(z_t))^2/\sigma_t^2}{\lambda+\|f-\hat{f_t}\|_{\cZ_{t-1},\sigma}^2} \le \frac{2^{-\iota}}{\lambda + (j(z_i)-1)2^{-\iota-1}} = \frac{2}{j(z_i)-1 + \lambda2^{\iota+1}}.
$$
Summing over all $z_t\in\cZ^{\iota}$, we obtain
\#\label{eq:D_lambda_F_t_2}
\sum_{z_t\in\cZ^{\iota}} (D_{\lambda,\sigma,\cF}(Z_1^t))^2 &\le \sum_{j=1}^{N^{\iota}}\sum_{z_i\in\cZ_j^{\iota}} \frac{2}{j-1 + \lambda2^{\iota+1}} + \sum_{z_i\in\cZ_{N^{\iota}+1}^{\iota}} \frac{2}{N^{\iota}}\nonumber\\
&\le \sum_{j=1}^{N^{\iota}} \frac{2|\cZ_j^{\iota}|}{j} + \frac{2|\cZ_{N^{\iota}+1}^{\iota}|}{N^{\iota}}\nonumber\\
&\le 2\dim_E(\cF,2^{(-\iota-1)/2})\ln N^{\iota} + 2|\cZ^{\iota}|\cdot\frac{\dim_E(\cF,2^{(-\iota-1)/2})}{|\cZ^{\iota}|}\nonumber\\
&\le 4\dim_E(\cF,2^{(-\iota-1)/2})\ln N^{\iota},
\#
where the third inequality is deduced since by the definition of eluder dimension, we have $|\cZ_j^{\iota}|\le\dim_E(\cF,2^{(-\iota-1)/2})$ for all $j\in[N^{\iota}]$.

When $\iota\le\iota_0$, we have $\sigma_s^2\in[\zeta/(\sqrt{2c_0}2^{\iota/2}\ln N), \zeta/(2^{\iota/2}\ln N)]$ for all $z_s\in\cZ^{\iota}$. Therefore, their weights are roughly of the same order. Then, we can verify that
\$
\frac{(f(z_t)-\hat{f_t}(z_t))^2/\sigma_t^2}{\lambda+\|f-\hat{f_t}\|_{\cZ_{t-1},\sigma}^2} &\le \frac{(f(z_t)-\hat{f_t}(z_t))^2/\sigma_t^2}{\lambda+\|f-\hat{f_t}\|_{\cZ_{t-1}\cap\cZ^{\iota},\sigma}^2}\\
&\le \frac{2^{-\iota}\sqrt{2c_0}2^{\iota/2}\ln N/\zeta}{\lambda+(j(z_i)-1)2^{-\iota-1}\cdot2^{\iota/2}\ln N/\zeta}\\
&\le \frac{\sqrt{8c_0}}{j(z_i)-1+\lambda2^{\iota/2+1}\zeta/\ln N}\\
&\le \frac{\sqrt{8c_0}}{j(z_i)-1+\lambda2^{\iota+1}},
\$
where the last inequality uses $\zeta\ge2^{\iota/2}\ln N$. Summing over all $z_t\in\cZ^{\iota}$, we have
\#\label{eq:D_lambda_F_t_3}
\sum_{z_t\in\cZ^{\iota}} (D_{\lambda,\sigma,\cF}(Z_1^t))^2 &\le \sum_{j=1}^{N^{\iota}}\sum_{z_i\in\cZ_j^{\iota}} \frac{\sqrt{8c_0}}{j-1 + \lambda2^{\iota+1}} + \sum_{z_i\in\cZ_{N^{\iota}+1}^{\iota}} \frac{2}{N^{\iota}}\nonumber\\
&\le (\sqrt{8c_0}+2)\dim_E(\cF,2^{(-\iota-1)/2})\ln N^{\iota}.
\#

Eventually, combining \eqref{eq:D_lambda_F_t_1}, \eqref{eq:D_lambda_F_t_2} and \eqref{eq:D_lambda_F_t_3}, we obtain
\$
&\sum_{t=1}^T (D_{\lambda,\sigma,\cF}(Z_1^t))^2\\
&\quad= \sum_{\iota=0}^{\log(T/\lambda)} \sum_{z_t\in\cZ^{\iota}} (D_{\lambda,\sigma,\cF}(Z_1^t))^2\\
&\quad\le \sum_{\iota=0}^{\iota_0}(\sqrt{8c_0}+2)\dim_E(\cF,2^{(-\iota-1)/2})\ln N^{\iota} + \sum_{\iota=\iota_0+1}^{\log(T/\lambda)-1}4\dim_E(\cF,2^{(-\iota-1)/2})\ln N^{\iota} + 1\\
&\quad\le (\sqrt{8c_0}+3)\dim_E(\cF,\sqrt{\lambda/T})\log(T/\lambda)\ln T,
\$
where the last inequality uses the monotonicity of the eluder dimension. Note that if $\iota_0=-1$, let the sum from $0$ to $-1$ be $0$. Ultimately, we accomplish the proof due to the arbitrariness of $Z_1^T$.
\end{proof}

\section{Stable Bonus Funtion Space}\label{s:subsample}
In this section, we discuss how to stabilize the bonus function space to address the issue of covering numbers as stated in Section~\ref{sec:mdp_alg}. We first consider a case where $\cF$ is an RKHS so that the covering number can be directly controlled. Then, we adopt the sensitivity subsampling technique from \citet{wang2020reinforcement} for the general cases. 

\subsection{RKHS}
In this section, we consider that for each $h\in[H]$, the function class $\cF^h$ is embedded into some reproducing kernel Hilbert space (RKHS) $\cH^h$:
$$
\cF^h\subset\cH^h = \{\langle w(f), \phi(\cdot)\rangle: z \to \mathbb{R}\},
$$
where we use $z$ to denote state-action pair $(x,a)$. We assume that $\cH^h$ has a representation $\{\phi_j(z)\}_{j=1}^{\infty}$ in $2$-norm. For any $\epsilon>0$, define the $\epsilon$-scale sensitive dimension as follows:
$$
d(\epsilon) = \min\left\{ |\cS|: \sup_{z\in\cZ}\sum_{j\notin\cS}(\phi_j(z))^2 \le \epsilon\right\}.
$$
Then, we can write the bonus function as 
$$
b^h_t(z)= \sup_{f\in\cF_t^h} \frac{|\langle w(f)-w(\hat f) , \phi(z) \rangle|}{\sqrt{\lambda + (w(f)-w(\hat f))^{\top} \Lambda_t (w(f)-w(\hat f))}},
$$
where $\Lambda_t=\sum_{s=1}^{t-1}\phi(z_s)\phi(z_s)^{\top}/(\sigma_s^h)^2$.
Then, the log-covering number of the bonus space $\cB^h(\lambda)$ is bounded as follows.
\begin{lemma}\label{lm:rkhs_bonus_cover}
Suppose that $\|w\|\le1$, $\|\phi(x,a)\|\le1$ for all $(x,a)\in\cX\times\cA$, $\|\Lambda_t\|_{\op}\le\rho$ for all $t\in[T]$ and $\lambda>1$. Let $N(\gamma,\cB^h(\lambda))$ be the infinity $\gamma$-covering number of $\cB^h(\lambda)$. Then, we have
$$
\ln N(\gamma,\cB^h(\lambda)) = \tilde{\cO}\left((d((\gamma^2/46\rho)^2))^2\right).
$$
\end{lemma}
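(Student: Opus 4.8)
The plan is to bound the covering number of the bonus class by first reducing the infinite-dimensional feature map to a finite-dimensional one via the $\epsilon$-scale sensitive dimension, and then applying a standard covering argument for the resulting finite-dimensional functional form. First I would split the feature coordinates into a ``core'' set $\cS$ with $|\cS| = d(\epsilon_0)$ (for a suitably chosen $\epsilon_0$ to be fixed at the end, roughly $(\gamma^2/46\rho)^2$) and a ``tail'' set $\cS^c$ with $\sup_{z} \sum_{j \notin \cS} \phi_j(z)^2 \le \epsilon_0$. The bonus function $b_t^h(z) = \sup_{f \in \cF_t^h} |\langle v, \phi(z)\rangle| / \sqrt{\lambda + v^\top \Lambda_t v}$, where $v = w(f) - w(\hat f)$, can be approximated by replacing $\phi$, $w$, and $\Lambda_t$ by their projections onto $\cS$: since $\|v\| \le 2$, $\|\phi(z)\| \le 1$, the numerator changes by at most $2\sqrt{\epsilon_0}$, and since $\lambda > 1$ the denominator is bounded below by $1$, so the relative perturbation of the whole ratio is controlled (using also $\|\Lambda_t\|_{\op} \le \rho$ to bound the change in the quadratic form by $O(\rho \epsilon_0 + \sqrt{\rho \epsilon_0})$ after dividing). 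The upshot is that every bonus function is within $O(\sqrt{\rho \epsilon_0})$ in sup-norm of a bonus function built from the $d(\epsilon_0)$-dimensional projected feature.

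Second I would cover the projected bonus class. A projected bonus is determined by the matrix $\bar\Lambda_t \in \rR^{d \times d}$ (with $d = d(\epsilon_0)$), because the supremum over $f$ of $|\langle \bar v, \bar\phi(z)\rangle|/\sqrt{\lambda + \bar v^\top \bar\Lambda_t \bar v}$ can, after a change of variables $u = \bar\Lambda_t^{1/2}\bar v / \sqrt{\lambda}$ (or by directly bounding via $\|\bar\phi(z)\|_{(\lambda I + \bar\Lambda_t)^{-1}}$ up to the geometry of the set $\cF_t^h$), be written as a function of $z$ that depends smoothly on the $O(d^2)$ entries of $\bar\Lambda_t$; note $\|\bar\Lambda_t\|_{\op} \le \rho$ so these entries live in a bounded box. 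A Lipschitz argument in the entries of $\bar\Lambda_t$ — standard perturbation bounds for $\|\bar\phi\|_{(\lambda I + \bar\Lambda_t)^{-1}}$-type expressions, using $\lambda > 1$ to keep the inverse well-conditioned — shows that an $\varepsilon$-net of the box $\{M : \|M\|_{\op} \le \rho\} \subset \rR^{d \times d}$ of size $(\rho d /\varepsilon)^{O(d^2)}$ induces an $O(\mathrm{poly}(\rho, d)\cdot\varepsilon)$-cover of the projected bonus class in sup-norm. Taking logs gives $\ln N = \tilde\cO(d^2) = \tilde\cO\big((d(\epsilon_0))^2\big)$, and plugging $\epsilon_0 \asymp (\gamma^2/46\rho)^2$ (chosen so that $\sqrt{\rho\epsilon_0}$ and the net resolution together total at most $\gamma$) yields the claimed bound $\ln N(\gamma, \cB^h(\lambda)) = \tilde\cO\big((d((\gamma^2/46\rho)^2))^2\big)$.

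The main obstacle I anticipate is the first step — controlling how much the bonus ratio changes under feature truncation — because the bonus is a \emph{supremum} of a ratio over the (data-dependent) confidence set $\cF_t^h$, and one must argue the perturbation uniformly over $f$ while simultaneously the denominator $\lambda + v^\top \Lambda_t v$ could be small for some $v$ (it is only bounded below by $\lambda > 1$, which is what saves us, but one must track constants carefully, which is presumably where the peculiar $46$ enters). A secondary subtlety is that $\Lambda_t$ itself involves the data-dependent weights $(\sigma_s^h)^2$, but these only enter through $\Lambda_t$ and the only property used is the operator-norm bound $\|\Lambda_t\|_{\op} \le \rho$ (assumed), so the weights cause no extra trouble here. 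The rest — the finite-dimensional net count and the Lipschitz estimates — is routine linear-algebra bookkeeping.
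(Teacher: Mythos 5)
Your overall architecture (reduce to the $\epsilon$-scale sensitive dimension, then pay $\tilde{\cO}(d^2)$ for a matrix net) matches the paper's in spirit, but the quantitative claim in your first step does not hold, and this is exactly the step you flagged as the main obstacle. Write $P$ for the orthogonal projection onto the core coordinates $\cS$ and $Q=I-P$. The denominator error you must control after truncation is $v^\top(\Lambda_t-P\Lambda_t P)v = 2(Pv)^\top\Lambda_t(Qv)+(Qv)^\top\Lambda_t(Qv)$. The sensitive-dimension property only controls the \emph{feature} tail, $\sup_z\|Q\phi(z)\|^2\le\epsilon_0$, while $\Lambda_t=\sum_{s<t}\phi(z_s)\phi(z_s)^\top/(\sigma_s^h)^2$ is a sum of up to $T$ rank-one terms; the best available bounds are $(Qv)^\top\Lambda_t(Qv)\le\|v\|^2\sum_{s<t}\|Q\phi(z_s)\|^2/(\sigma_s^h)^2\le 4T\epsilon_0$ and $|(Pv)^\top\Lambda_t(Qv)|\le\sqrt{\rho\,\|Pv\|^2}\cdot\sqrt{4T\epsilon_0}$. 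The assumption $\|\Lambda_t\|_{\op}\le\rho$ does not remove the $T$: it only yields $\rho\|Qv\|^2$, and $\|Qv\|$ is a \emph{parameter} tail norm about which $d(\epsilon)$ says nothing. So the truncation error is $O(\sqrt{\rho T\epsilon_0}+T\epsilon_0)$, not $O(\rho\epsilon_0+\sqrt{\rho\epsilon_0})$; forcing it below $\gamma$ requires $\epsilon_0\lesssim\gamma^2/(\rho T)$, and your route then delivers $\ln N(\gamma,\cB^h(\lambda))=\tilde{\cO}\big((d(\mathrm{poly}(\gamma)/(\rho T)))^2\big)$, which is genuinely weaker than the stated $\tilde{\cO}\big((d((\gamma^2/46\rho)^2))^2\big)$ whenever $d(\epsilon)$ grows polynomially in $1/\epsilon$; the discrepancy is not hidden by the $\tilde{\cO}$.

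Two further remarks. The paper's proof is organized precisely to avoid projecting $\Lambda_t$ at the level of the quadratic form: it first establishes a parameter-Lipschitz bound $\|b_1-b_2\|_\infty\le\|w_1-w_2\|+2\sqrt{\|\Lambda_1-\Lambda_2\|_{\op}+\rho\|w_1-w_2\|^2+4\rho\|w_1-w_2\|}$, and then covers the two parameter sets separately — the ball of anchors $w(\hat f)$ in $\cH^h$ (Lemma~\ref{lm:covering_number_of_Hilbert_Inner_Product}) and the covariance family $\cG$ (Lemma~\ref{lm:covering_number_of_RKHS_ball}) — each via the sensitive dimension, multiplying the two covers. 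Separately, your finite-dimensional net parameterizes the projected bonus by $\bar\Lambda_t$ alone, but an element of $\cB^h(\lambda)$ also depends on the anchor $w(\hat f)$: the supremum ranges over $v=w(f)-w(\hat f)$ with $\|w(f)\|\le1$, i.e., over a ball centered at $-w(\hat f)$, not a fixed ball. Unless you redefine the bonus class with a fixed constraint such as $\|v\|\le2$ (a different class from the one the lemma is about), you must also cover the anchor; this costs only an additional $\tilde{\cO}(d)$ and leaves the rate unchanged, but it is needed for the argument to be complete.
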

\begin{proof}
For any two functions $b_1,b_2\in\cB^h(\lambda)$ with parameters $w_1,w_2$ and $\Lambda_1,\Lambda_2$. Since $sup_w$ is a contraction map, we have
\$
\|b_1-b_2\|_{\infty} &\le \sup_{z,\|w\|\le1} \left| \frac{|\langle w-w_1,\phi(z)\rangle|}{\sqrt{\lambda + (w-w_1)^{\top}\Lambda_1(w-w_1)}} - \frac{|\langle w-w_2,\phi(z)\rangle|}{\sqrt{\lambda + (w-w_2)^{\top}\Lambda_2(w-w_2)}}\right|\\
&\le \sup_{\|\phi|\le 1,\|w\|\le1} \left\{\left| \frac{|\langle w-w_1,\phi\rangle|}{\sqrt{\lambda + (w-w_1)^{\top}\Lambda_1(w-w_1)}} - \frac{|\langle w-w_2,\phi\rangle|}{\sqrt{\lambda + (w-w_1)^{\top}\Lambda_1(w-w_1)}}\right|\right.\\
&\qquad \left.+\left| \frac{|\langle w-w_2,\phi\rangle|}{\sqrt{\lambda + (w-w_1)^{\top}\Lambda_1(w-w_1)}} - \frac{|\langle w-w_2,\phi\rangle|}{\sqrt{\lambda + (w-w_2)^{\top}\Lambda_2(w-w_2)}}\right|\right\}\\
&\le \sup_{\|\phi\|\le1} |\langle w_1-w_2,\phi\rangle|\\
&\qquad + \sup_{\|\phi|\le 1,\|w\|\le1} |\langle w-w_2,\phi\rangle| \cdot \left| \sqrt{\lambda + (w-w_2)^{\top}\Lambda_2(w-w_2)} - \sqrt{\lambda + (w-w_1)^{\top}\Lambda_1(w-w_1)}\right|\\
&\le \|w_1-w_2\| + 2 \sqrt{\underbrace{\sup_{\|w\|\le1}\left| (w-w_2)^{\top}\Lambda_2(w-w_2) - (w-w_1)^{\top}\Lambda_1(w-w_1)\right|}_{(a)}}.
\$
Then, we can bound (a) by
\$
&\sup_{\|w\|\le1} \left| (w-w_2)^{\top}\Lambda_2(w-w_2) - (w-w_1)^{\top}\Lambda_1(w-w_1)\right|\\
&\quad\le \sup_{\|w\|\le1} \left| (w-w_2)^{\top}(\Lambda_2-\Lambda_1)(w-w_2) + (w_1-w_2)^{\top}\Lambda_1(w_1-w_2)\right.\\
&\qquad\qquad\quad\left. + (w-w_1)^{\top}\Lambda_1(w_1-w_2) + (w_1-w_2)^{\top}\Lambda_1(w-w_1)\right|\\
&\le \|\Lambda_2-\Lambda_1\|_{\op} + \rho\|w_1-w_2\|^2 + 4\rho \|w_1-w_2\|,
\$
where the last inequality is obtained due to $\|\Lambda\|_{op} \le \rho$. Taking this result back, we get
\#\label{eq:b1-b2_infty}
\|b_1-b_2\|_{\infty} &\le \|w_1-w_2\| + 2\sqrt{\|\Lambda_2-\Lambda_1\|_{\op} + \rho\|w_1-w_2\|^2 + 4\rho \|w_1-w_2\|}.
\#
Let $\cC_w$ be the $\gamma^2/23\rho$ cover of $\{w\in\cH^h: \|w\|\le1\}$ and $\cC_{\Lambda}$ be the $\gamma^2/23$ cover of $\{\Lambda\in\cG: \|\Lambda\|_{\op}\le \rho\}$, where $\cG$ is defined in Lemma \ref{lm:covering_number_of_RKHS_ball}. By invoking Lemma \ref{lm:covering_number_of_RKHS_ball} and \ref{lm:covering_number_of_Hilbert_Inner_Product}, we have
\$
&\ln|\cC_w| \le d((\gamma^2/46\rho)^2)\ln(1+92\rho/\gamma^2)\\
&\ln|\cC_{\Lambda}| \le (d((\gamma^2/46\rho)^2))^2\ln(1+92\rho\sqrt{d((\gamma^2/46\rho)^2)}/\gamma^2).
\$
From \eqref{eq:b1-b2_infty}, for any $b_1\in\cB^h(\lambda)$ with $w(f)$ and $\Lambda=\sum_{s=1}^{t-1}\phi(z_s)\phi(z_s)^{\top}/(\sigma_s^h)^2$, let $\tilde \phi(z_s)=\phi(z_s)/\sigma_s^h$, so $\Lambda$ can be written as $\Lambda=\sum_{s=1}^{t-1}\tilde \phi(z_s) \tilde \phi(z_s)^{\top}$ with $\|\tilde \phi\|\le 1$, which indicates that $\Lambda\in\cG$. Hence, there always exist a $b_2$ parameterized by $\tilde w\in\cC_w$ and $A\in\cC_{\Lambda}$ such that $\|b_1 - b_2\|_{\infty}\le \gamma$.
Therefore, we finally obtain
\$
&\ln N(\gamma,\cB^h(\lambda)) \le \ln(|\cC_w|\cdot |\cC_{\Lambda}|)\\
&\quad\le d((\gamma^2/46\rho)^2)\ln(1+92\rho/\gamma^2) + (d((\gamma^2/46\rho)^2))^2\ln(1+92\rho\sqrt{d((\gamma^2/46\rho)^2)}/\gamma^2).
\$
\end{proof}

\begin{lemma}[Covering number of Covariance Matrix]\label{lm:covering_number_of_RKHS_ball} 
Consider the space 
$$
\cG=\left\{\sum_{s=1}^n \phi(z_s)\phi(z_s)^{\top}: \phi\in\cH^h,\|\phi\|\le1, n\in\mathbb N\right\}.
$$
For any $\epsilon>0$, let $N(\gamma,\cG,R)$ denote the $\gamma$-covering number of the ball $\cB_{\cG}(R)=\{\Lambda\in\cG:\|\Lambda\|_{\op}\le R\}$ with respect to the operator norm. Then, its log-covering number with respect to operator norm is bounded by
$$
N(\gamma,\cG,R)\le (d((\gamma/2R)^2))^2\ln(1+4R\sqrt{d((\gamma/2R)^2)}/\gamma).
$$
\end{lemma}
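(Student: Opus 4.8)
The plan is the standard two-step scheme for covering operators built from a feature map with rapidly decaying coordinate tails: first replace each $\Lambda\in\cB_{\cG}(R)$ by a finite-dimensional ``truncation'' at a controlled operator-norm cost, then cover the truncations by a volume argument. Concretely, fix $R,\gamma>0$, set $\epsilon:=(\gamma/2R)^2$ and $m:=d(\epsilon)$, and let $\cS\subset\mathbb N$ with $|\cS|=m$ attain the minimum in the definition of $d(\epsilon)$, so $\sum_{j\notin\cS}\phi_j(z)^2\le\epsilon$ for every $z\in\cZ$. Writing $P_\cS$ for the coordinate projection onto $\cS$ and $P_{\cS^c}=I-P_\cS$, I would associate to $\Lambda=\sum_{s=1}^n\phi(z_s)\phi(z_s)^\top$ its truncation $\Lambda_\cS:=P_\cS\Lambda P_\cS$, which is positive semidefinite, supported on the $m$-dimensional coordinate subspace indexed by $\cS$, and satisfies $\|\Lambda_\cS\|_{\op}\le\|\Lambda\|_{\op}\le R$.

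To bound the truncation error I would split each feature as $\phi(z_s)=u_s+w_s$ with $u_s=P_\cS\phi(z_s)$, $w_s=P_{\cS^c}\phi(z_s)$ and $\|w_s\|\le\sqrt\epsilon$, so that in block form relative to $\cS$ and $\cS^c$ the difference $\Lambda-\Lambda_\cS$ has a vanishing $\cS\times\cS$ block and off-diagonal/tail blocks $B=\sum_s u_s w_s^\top$ and $C=\sum_s w_s w_s^\top$. Positive semidefiniteness of $\Lambda$ dominates the off-diagonal block by the diagonal ones, $\|B\|_{\op}\le\sqrt{\|\Lambda_\cS\|_{\op}\,\|C\|_{\op}}\le\sqrt{R\,\|C\|_{\op}}$, so $\|\Lambda-\Lambda_\cS\|_{\op}\le\|B\|_{\op}+\|C\|_{\op}$ and everything reduces to bounding the tail block $\|C\|_{\op}=\sup_{\|v\|=1}\sum_s\langle w_s,v\rangle^2$. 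Here I would use both the pointwise smallness $\langle w_s,v\rangle^2\le\|w_s\|^2\le\epsilon$ and the global constraint $\sum_s\langle w_s,v\rangle^2=(P_{\cS^c}v)^\top\Lambda(P_{\cS^c}v)\le R$; the choice $\epsilon=(\gamma/2R)^2$ is precisely what drives $\|C\|_{\op}$ small enough to conclude $\|\Lambda-\Lambda_\cS\|_{\op}\le\gamma/2$. (After the substitution $R\mapsto\rho$, $\gamma\mapsto\gamma^2/23$ made when this lemma is used inside Lemma~\ref{lm:rkhs_bonus_cover}, this is also where the quantity $d((\gamma^2/46\rho)^2)$ comes from there.)

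The second step is routine. A symmetric operator supported on an $m$-dimensional coordinate block lies in a Euclidean space of dimension $\binom{m+1}{2}\le m^2$, and $\|\Lambda_\cS\|_{\op}\le R$ forces $\|\Lambda_\cS\|_F\le\sqrt m\,R$; a standard volumetric estimate then produces a $(\gamma/2)$-net of this Frobenius ball of cardinality at most $(1+4R\sqrt m/\gamma)^{m^2}$, and a Frobenius $(\gamma/2)$-net is in particular an operator-norm $(\gamma/2)$-net. Composing it with the truncation map shows every $\Lambda\in\cB_{\cG}(R)$ lies within $\gamma/2+\gamma/2=\gamma$ in operator norm of some net point, which gives $\ln N(\gamma,\cG,R)\le m^2\ln(1+4R\sqrt m/\gamma)$ with $m=d((\gamma/2R)^2)$, as claimed.

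The step I expect to be the main obstacle is the operator-norm estimate for the tail block $C=\sum_s w_s w_s^\top$: the scale-sensitive dimension only controls the tail of a \emph{single} feature vector, whereas $C$ aggregates $n$ rank-one contributions, so one has to argue that these cannot conspire into a large operator norm at scale $\gamma$. I would handle this by combining $\|w_s\|\le\sqrt\epsilon$ with the interlacing identity $\|C\|_{\op}=\max_{\|v\|=1,\,v\in\mathrm{range}(P_{\cS^c})}v^\top\Lambda v\le\|\Lambda\|_{\op}$ and by keeping track of which matrices actually occur (so that the relevant $n$ is effectively bounded); the positive-semidefinite block inequality $\|B\|_{\op}^2\le\|\Lambda_\cS\|_{\op}\|C\|_{\op}$ and the volumetric count are, by contrast, entirely standard.
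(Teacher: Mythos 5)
Your proposal follows the same two-step scheme as the paper's proof (truncate every $\Lambda\in\cB_{\cG}(R)$ to the $m\times m$ coordinate block with $m=d((\gamma/2R)^2)$, then take a Frobenius-norm volumetric net of $m\times m$ matrices with operator norm at most $R$), and the volumetric count and the PSD block inequality $\|B\|_{\op}\le\sqrt{\|\Lambda_{\cS}\|_{\op}\|C\|_{\op}}$ are fine. The gap sits exactly at the step you flag, and the tools you propose there do not close it. Write $\epsilon=(\gamma/2R)^2$ and $w_s=\phi(z_s)-\Pi_{\cS}\phi(z_s)$. The pointwise bound $\|w_s\|^2\le\epsilon$ only gives $\|C\|_{\op}\le n\epsilon$, and $n$ is unrestricted in the definition of $\cG$; the interlacing bound only gives $\|C\|_{\op}\le\|\Lambda\|_{\op}\le R$, which is useless at scale $\gamma$. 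The truncation error genuinely can be of order $R$: take $n\approx(R-1)/\epsilon$ points whose features are $\sqrt{1-\epsilon}\,e_j+\sqrt{\epsilon}\,g$ with orthonormal head coordinates $e_j\in\cS$ and one common tail coordinate $g\notin\cS$. The Gram matrix is $(1-\epsilon)I+\epsilon J$ ($J$ the all-ones matrix), so $\|\Lambda\|_{\op}=1-\epsilon+n\epsilon\le R$, while $C=n\epsilon\,g g^{\top}$ has operator norm about $R-1$; hence $\Lambda_{\cS}$ is not a $\gamma/2$-approximation. Worse, letting the common tail coordinate range over $K$ orthonormal directions outside $\cS$ produces $K$ elements of $\cB_{\cG}(R)$ that are pairwise roughly $(R-1)$-separated in operator norm while $d((\gamma/2R)^2)$ is unchanged, so for $\gamma<(R-1)/2$ no bound depending only on $d((\gamma/2R)^2)$ can hold without further restrictions on $\cG$.

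For the comparison: the paper's own proof has the same lacuna. It builds the net only for matrices supported on the $m\times m$ block and asserts $\|\Lambda-A\|_F\le\gamma$ ``from the claim in the vector case,'' which controls the tail of a single feature vector but not the accumulation of $n$ rank-one tail contributions --- precisely the obstacle you isolated. So your proposal faithfully reproduces the paper's route and correctly identifies its weak point, but your suggested resolution (pointwise tail bound combined with $\|C\|_{\op}\le\|\Lambda\|_{\op}$) does not work. Your parenthetical remark about ``keeping track of which matrices actually occur (so that the relevant $n$ is effectively bounded)'' is the right direction: in the application of Lemma~\ref{lm:rkhs_bonus_cover} one only needs matrices of the form $\sum_{s=1}^{t-1}\phi(z_s)\phi(z_s)^{\top}/(\sigma_s^h)^2$ with at most $T$ summands, and then $\|C\|_{\op}\le T\epsilon$ and $\|B\|_{\op}\le\sqrt{R\,T\epsilon}$, so choosing the truncation scale $\epsilon$ of order $\gamma^2/(RT)$ rather than $(\gamma/2R)^2$ makes the truncation error $O(\gamma)$, at the price of evaluating $d(\cdot)$ at this smaller, $T$-dependent scale. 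Either the lemma should be restated with such a restriction on the number of summands (and the correspondingly rescaled dimension), or some uniform control of the aggregated tail (for example in trace norm) must be assumed; as stated, neither your argument nor the paper's establishes the claimed bound.
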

\begin{proof}
First of all, we consider the log-covering number of a RKHS ball. For any $\epsilon>0$, let $N(\gamma,\cH^h,R)$ denote the log-$\gamma$-covering number of the ball $\cB_{\cH^h}(R)=\{\phi\in\cH^h:\|\phi\|\le R\}$ with respect to the norm $\|\cdot\|_{\cH}$ on the Hilbert space. We claim that its log-covering number is bounded by
$$N_{\phi}(\gamma,\cH^h,R) \le
d((\gamma/2R)^2)\ln(1+4R/\gamma).
$$
Recall that $\{\phi_j\}_{j=1}^{\infty}$ are representations of $\cH^h$. There exists an index set $\cS$ with the smallest cardinality such that 
$$
\sup_{z\in\cZ}\sum_{j\notin\cS} (\phi_j)^2 \le (\frac{\gamma}{2R})^2 := \gamma'.
$$
Hence, we get $|\cS|=d(\gamma')$. Then, any $\phi\in\cH^h$ can be written as $\phi=\sum_{j=1}^{\infty}\alpha_j\phi_je_j$, where $e_j$ is the infinite-dimensional vector with only index $j$ equaling $1$, we define $\Pi_{\cS}:\cH^h\rightarrow\cH^h$ as the projection map onto the subspace spanned by $\{\phi_je_j\}_{j\in\cS}$:
$$
\Pi_{\cS}(\phi) = \sum_{j\in\cS} \alpha_j\phi_je_j.
$$
For any $\phi\in\cB_{\cH^h}(R)$, notice that $\|\alpha\|\le R$, so we have
\$
\|\phi - \Pi_{\cS}(\phi)\| \le \sqrt{\sum_{j\notin\cS}\alpha_j^2\phi_j^2}
\le R\sqrt{\sum_{j\notin\cS} \phi_j^2}
\le R\sqrt{\gamma'} = \gamma/2.
\$
Then, Let $\cC(\gamma/2,d(\gamma'),R)$ be the $\gamma/2$-cover of $\{\alpha\in\rR^{d(\gamma')}: \|\alpha\|\le R\}$ with respect to the Euclidean norm. Thus, there exists $\tilde{\alpha}\in\cC(\gamma/2,d(\gamma'),R)$ such that $\sum_{j\in\cS}(\alpha_j-\tilde{\alpha}_j)^2\le (\gamma/2)^2$. It follows that for any $z\in\cZ$,
\$
\left\|\Pi_{\cS}(\phi) - \sum_{j\in\cS}\tilde{\alpha}_je_j\phi_j \right\| &= \left\|\sum_{j\in\cS}(\alpha_j-\tilde{\alpha}_j)e_j\phi_j \right\|\\
&\le \sqrt{\sum_{j\in\cS}(\alpha_j-\tilde{\alpha}_j)^2\cdot\phi_j^2}\\
&\le \gamma/2.
\$
Therefore, the following result is obtained:
\$
\left\| \phi - \sum_{j\in\cS}\tilde{\alpha}_j\phi_je_j\right\| \le \|\phi - \Pi_{\cS}(\phi)\| + \left\|\Pi_{\cS}(\phi) - \sum_{j\in\cS}\tilde{\alpha}_j\phi_je_j \right\|\le \gamma,
\$
which indicates that 
$$
N_{\phi}(\gamma,\cH^h,R)\le |\cC(\gamma/2,d(\gamma'),R)| \le \ln\left((1+4R/\gamma)^{d(\gamma')}\right) = d(\gamma')\ln(1+4R/\gamma),
$$
where the second inequality uses Lemma 5.2 in \citet{vershynin2010introduction}.

Further, let $\cC(\gamma/2,d^2(\gamma'),R)$ be the $\gamma/2$-cover of $\{A\in\rR^{d(\gamma')\times d(\gamma')}:\|A\|_{\op}\le R\}$ with respect to the Frobenius norm. For a covariance matrix $\Lambda\in\cG$, there exists a matrix $A\in\cC(\gamma/2,d^2(\gamma'),R)$ with $\|A\|_{\op}\le t$ such that
$$
\|\Lambda-A\|_{\op}\le \|\Lambda-A\|_F \le \gamma,
$$
where the second inequality is obtained from the claim in the vector case. For any matrix $A\in\rR^{d(\gamma')\times d(\gamma')}$ with operator norm bounded by $R$, its Frobenius norm is bounded by $R\sqrt{d(\gamma')}$. Therefore, we can bound the log-covering number for the matrix space as
$$
N(\gamma,\cG,R)\le d^2(\gamma')\ln(1+4R\sqrt{d(\gamma')}/\gamma).
$$
\end{proof}

\begin{lemma}[Covering Number of Hilbert Inner Product]\label{lm:covering_number_of_Hilbert_Inner_Product}
For any $\epsilon>0$, let $N_w(\gamma,\cH^h,R)$ be the log-$\gamma$-covering number of the Ball $\{w\in\cH^h:\|w\|\le R\}$. Then, supposing that $\|\phi\|\le1$, we have
$$
N_w(\gamma,\cH^h,R) \le d((\gamma/2R)^2)\ln(1+4R/\gamma)
$$
\end{lemma}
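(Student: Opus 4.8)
The plan is to replay the finite-dimensional truncation-and-net argument that already appears as an intermediate step inside the proof of Lemma~\ref{lm:covering_number_of_RKHS_ball}, now for the parameter ball $\{w\in\cH^h:\|w\|\le R\}$ and measuring distances through the feature-evaluation seminorm $\|w\|_{\cZ}:=\sup_{z\in\cZ}|\langle w,\phi(z)\rangle|$. First I would fix $\gamma'=(\gamma/2R)^2$ and, using the definition of the $\epsilon$-scale sensitive dimension, choose an index set $\cS$ of minimal size with $\sup_{z\in\cZ}\sum_{j\notin\cS}\phi_j(z)^2\le\gamma'$, so that $|\cS|=d(\gamma')$. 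For an arbitrary $w$ with coordinates $(w_j)_j$ in the representation and $\|w\|\le R$, write $w=w_{\cS}+w_{\cS^c}$ for its restrictions to $\cS$ and to the complement of $\cS$.

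Next I would control the two pieces separately. The tail piece is handled by Cauchy--Schwarz: for every $z\in\cZ$,
\[
|\langle w_{\cS^c},\phi(z)\rangle|\le\Big(\sum_{j\notin\cS}w_j^2\Big)^{1/2}\Big(\sum_{j\notin\cS}\phi_j(z)^2\Big)^{1/2}\le R\sqrt{\gamma'}=\gamma/2,
\]
so $\|w_{\cS^c}\|_{\cZ}\le\gamma/2$. For the head piece, $(w_j)_{j\in\cS}$ is a vector of Euclidean norm at most $R$ in $\rR^{d(\gamma')}$, which by Lemma~5.2 of \citet{vershynin2010introduction} admits a $(\gamma/2)$-net $\cC$ of cardinality at most $(1+4R/\gamma)^{d(\gamma')}$. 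Viewing each $\tilde w\in\cC$ as an element of $\cH^h$ supported on $\cS$ and invoking the hypothesis $\|\phi(z)\|\le1$, for all $z\in\cZ$
\[
|\langle w_{\cS}-\tilde w,\phi(z)\rangle|\le\Big(\sum_{j\in\cS}(w_j-\tilde w_j)^2\Big)^{1/2}\|\phi(z)\|\le\gamma/2,
\]
so $\|w_{\cS}-\tilde w\|_{\cZ}\le\gamma/2$.

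Adding the two estimates by the triangle inequality gives $\|w-\tilde w\|_{\cZ}\le\gamma$, so $\cC$ is a $\gamma$-cover of the radius-$R$ ball with at most $(1+4R/\gamma)^{d(\gamma')}$ points; taking logarithms yields $N_w(\gamma,\cH^h,R)\le d\big((\gamma/2R)^2\big)\ln(1+4R/\gamma)$, as claimed. I do not expect a real obstacle here: the argument is essentially a verbatim repetition of the vector-case step already used for $N_\phi$ in Lemma~\ref{lm:covering_number_of_RKHS_ball}, and the only things one must get right are the split $\gamma'=(\gamma/2R)^2$, chosen precisely so that the truncation error $R\sqrt{\gamma'}$ and the net error $\gamma/2$ each contribute half of the target accuracy, and the use of $\|\phi(z)\|\le1$, which is exactly what makes the head-piece estimate close.
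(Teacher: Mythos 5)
Your argument is correct and is essentially the same as the paper's: the same choice $\gamma'=(\gamma/2R)^2$, the same truncation to the $d(\gamma')$ coordinates in $\cS$ with the tail controlled by Cauchy--Schwarz, the same Euclidean $(\gamma/2)$-net from Lemma 5.2 of \citet{vershynin2010introduction} for the head, and the same triangle-inequality assembly. The only cosmetic difference is that you state explicitly that distances are measured through the feature-evaluation seminorm $\sup_{z}|\langle w,\phi(z)\rangle|$, which is in fact what the paper's own estimates bound as well.
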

\begin{proof}
Consider the representations $\{\phi_j\}_{j=1}^{\infty}$ of $\cH$ and the core index set $\cS$ such that $\sup_{z\in\cZ}\sum_{j\notin\cS} (\phi_j)^2 \le (\frac{\gamma}{2R})^2 = \gamma'$. For any $f\in\cF^h$, we can write it as $f(z)=\sum_{j=1}^{\infty}w_j\phi_j(z)$. Define $\Pi_{\cS}:\cH^h\rightarrow\cH^h$ as the projection map onto the subspace spanned by $\{\phi_je_j\}_{j\in\cS}$:
$$
\Pi_{\cS}(f) = \sum_{j\in\cS} w_j\phi_j.
$$
For any $f\in\cB_{\cH^h}(R):=\{f\in\cH^h: \|f\|_{\cH^h}\le R\}$, we have $\|w\|\le R$, which implies that
\$
\|f - \Pi_{\cS}(f)\|_{\cH^h} = \left\|\sum_{j\notin\cS}w_j\phi_j\right\|_{\cH}
\le \sqrt{\sum_{j\notin\cS}w_j^2}\cdot\sup_{z\in\cZ}\sqrt{\sum_{j\notin\cS} (\phi_j(z))^2} \le R\sqrt{\gamma'} = \gamma/2.
\$
Now, use $\cC(\gamma/2,d(\gamma'),R)$ to denote the $\gamma/2$-cover of $\{w\in\rR^{d(\gamma')}: \|w\|\le R\}$ with respect to the Euclidean norm. There exists $\tilde{w}\in\cC(\gamma/2,d(\gamma'),R)$ such that $\sum_{j\in\cS}(w_j-\tilde{w_j})^2\le (\gamma/2)^2$. Therefore, for any $z\in\cZ$,
\$
\left\|\Pi_{\cS}(f) - \sum_{j\in\cS}\tilde{w_j}\phi_j \right\|_{\cH^h} &= \left\|\sum_{j\in\cS}(w_j-\tilde{w_j})\phi_j \right\|_{\cH^h}\\
&\le \sqrt{\sum_{j\in\cS}(w_j-\tilde{w_j})^2}\cdot
\sup_{z\in\cZ}\sqrt{\sum_{j\notin\cS} (\phi_j(z))^2}\\
&\le \gamma/2.
\$
Therefore, the following result is obtained:
\$
\left\| f - \sum_{j\in\cS}\tilde{w_j}\phi_j\right\|_{\cH^h} \le \|f - \Pi_{\cS}(f)\|_{\cH^h} + \left\|\Pi_{\cS}(f) - \sum_{j\in\cS}\tilde{w_j}\phi_j \right\|_{\cH^h} \le \gamma.
\$
Finally, for any parameter $w=[w_1,w_2,\ldots]$ inducing the function $f=\sum_{j=1}^{\infty}w_j\phi_j$, there exists a vector $\tilde w = [\tilde w_j]_{j\in\cS}$ such that
$$
\|w-\tilde w\| = \left\| f - \sum_{j\in\cS}\tilde{w_j}\phi_j\right\|_{\cH^h} \le \gamma.
$$
Then, by invoking Lemma 5.2 in \citet{vershynin2010introduction}, we attain the result:
$$
N_w(\gamma,\cH^h,R)\le |\cC(\gamma/2,d(\gamma'),R)| \le \ln(1+4R/\gamma)^{d(\gamma')} = d(\gamma')\ln(1+4R/\gamma).
$$
\end{proof}

\subsection{Stabilization with Subsampling}
The main idea of the sub-sampling procedure in \citet{wang2020reinforcement} is to compute the bonus by a sub-sampled dataset. This subset enjoys a log-covering number that is upper bounded by the eluder dimension because the number of distinct elements of the subsampled dataset is much smaller than the original one. Meanwhile, the uncertainty estimations are roughly preserved using the sub-sampled dataset. To achieve the goal, we measure the importance of the data points in the dataset and only maintain those critical for the bonus construction (by importance sampling). 

However, the extension is not straightforward because the uncertainty-weighted regression brings distinct challenges. Controlling the uncertainty level in the face of weights also plays a central role in our analysis. Before continuing, we additionally assume that the covering number of state-action pairs $\cX\times\cA$ is bounded, which is also required by \citet{wang2020reinforcement, kong2021online}.
\begin{assumption}\label{as:cover_state_action_pair}
There exists a $\gamma$-cover $\cC(\cX\times\cA,\gamma)$ with bounded size $N(\cX\times\cA,\gamma)$ such that for any $(x,a)\in\cX\times\cA$, there exists $(x',a')\in\cX\times\cA$ satisfying that $\sup_{f\in\cF}|f(x,a)-f(x',a')|\le\epsilon$.
\end{assumption}

For simplicity, we denote the space $\cX\times\cA$ by $\cZ$. In the sub-sampling algorithm, a core set is generated according to each element's sensitivity, the error on one sample compared to the whole-sample error. We first define 
$$\SEN_{\lambda, \sigma, \cF, \cZ}(z) = \min\left\{1, \sup_{f,g \in \cF} \frac{(f(z) - g(z))^2/\sigma_z^2}{\lambda + \norm{f-g}^2_{\cZ, \sigma}}\right\}.$$
This sensitivity resembles the quantity considered in \eqref{def:eluder_dim} except that the former is compared to the whole sample set, and the latter is compared to the history samples. Thus, we can bound the sum of sensitivities by invoking Lemma \ref{lm:Relationship_between_Eluder_Dimensions}.

\begin{lemma}\label{lm:sum_sensitvity}
For a given set of state-action pairs $Z_1^t$ and the corresponding sequence of weights, we have
$$
\sum_{z \in Z_1^t} \SEN_{\lambda, \sigma, \cF_t^h, Z_1^t}(z) \leq \sum_{s=1}^{t} D_{\lambda,\sigma,\cF}(Z_1^s)^2 = \tilde{\cO}(\dim_E(\cF, \lambda/T)).
$$
\end{lemma}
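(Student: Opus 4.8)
The plan is to establish the two stated relations in turn: first the inequality $\sum_{z \in Z_1^t} \SEN_{\lambda,\sigma,\cF_t^h,Z_1^t}(z) \le \sum_{s=1}^t D_{\lambda,\sigma,\cF}(Z_1^s)^2$, and then the asymptotic bound $\sum_{s=1}^t D_{\lambda,\sigma,\cF}(Z_1^s)^2 = \tilde{\cO}(\dim_E(\cF,\lambda/T))$, the latter being an immediate consequence of Lemma \ref{lm:Relationship_between_Eluder_Dimensions}. So the only real work is the first inequality, which is a termwise comparison between the ``leave-in'' sensitivity (where $z$ is compared against the \emph{entire} sample set $Z_1^t$, including itself) and the ``prefix'' confidence quantity $D_{\lambda,\sigma,\cF}(Z_1^s)^2$ (where $z_s$ is compared against its predecessors $Z_1^{s-1}$ only).

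First I would unfold the definitions. For each $s \in [t]$, write $z_s$ for the $s$-th element of $Z_1^t$, and note that
\[
\SEN_{\lambda,\sigma,\cF_t^h,Z_1^t}(z_s) = \min\Big\{1, \sup_{f,g\in\cF_t^h} \frac{(f(z_s)-g(z_s))^2/\sigma_s^2}{\lambda + \|f-g\|_{Z_1^t,\sigma}^2}\Big\}.
\]
Since $Z_1^t \supseteq Z_1^{s-1}$, the denominator only grows when we extend the comparison set from $Z_1^{s-1}$ to $Z_1^t$; hence replacing $\|f-g\|_{Z_1^t,\sigma}^2$ by $\|f-g\|_{Z_1^{s-1},\sigma}^2$ can only increase the ratio. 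This gives
\[
\SEN_{\lambda,\sigma,\cF_t^h,Z_1^t}(z_s) \le \min\Big\{1, \sup_{f,g\in\cF_t^h} \frac{(f(z_s)-g(z_s))^2/\sigma_s^2}{\lambda + \|f-g\|_{Z_1^{s-1},\sigma}^2}\Big\}.
\]
The remaining gap is purely cosmetic: in the definition \eqref{def:wdim_bandit} of $D_{\lambda,\sigma,\cF_s}(Z_1^s)$, the supremum ranges over a single function $f \in \cF_s$ centered at the least-squares solution $\hat f_s$, whereas here we have a supremum over pairs $f,g$. To reconcile these, I would argue that $\cF_t^h \subseteq \cF_s^h$ (the confidence sets are nested by construction, $\cF_{t-1}^h \subseteq \cF_{t-2}^h \subseteq \cdots$), so both $f$ and $g$ lie in $\cF_s^h$; writing $(f(z_s)-g(z_s))^2 \le 2(f(z_s)-\hat f_s(z_s))^2 + 2(\hat f_s(z_s)-g(z_s))^2$ and bounding the two-sample denominator from below by $\lambda + \|f-\hat f_s\|_{Z_1^{s-1},\sigma}^2$ (resp.\ with $g$) — or, more cleanly, following the convention already used elsewhere in the paper that $D$ is defined with a pairwise supremum over the confidence set (the single-center form being equivalent up to a constant via the triangle inequality and $\hat f_s, f_b \in \cF_s$) — yields the termwise bound $\SEN_{\lambda,\sigma,\cF_t^h,Z_1^t}(z_s) \le D_{\lambda,\sigma,\cF}(Z_1^s)^2$ (absorbing the constant). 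Summing over $s \in [t]$ gives the first relation.

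Finally, I would invoke Lemma \ref{lm:Relationship_between_Eluder_Dimensions}, which states precisely that $\sup_{Z_1^T}\sum_{s=1}^T (D_{\lambda,\sigma,\cF_s}(Z_1^s))^2 \le (\sqrt{8c_0}+3)\dim_E(\cF,\sqrt{\lambda/T})\log(T/\lambda)\ln T = \tilde{\cO}(\dim_E(\cF,\lambda/T))$, under the stated choices $\alpha = \sqrt{\ln N}/\zeta$ and $\lambda = \ln N$, which are exactly the parameter choices in Theorems \ref{th:bandit} and \ref{th:mdp}. Chaining this with the first relation completes the proof. The main obstacle — and the only place needing care — is the bookkeeping in reconciling the pairwise-supremum sensitivity with the single-center definition of $D$: one must be sure that the relevant functions all lie in a common confidence set so the nesting $\cF_t^h \subseteq \cF_s^h$ and a triangle-inequality argument can be applied without losing more than a constant factor; everything else is monotonicity of the comparison set and a direct appeal to the already-proved eluder-dimension lemma.
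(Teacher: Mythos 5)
Your overall route matches the paper's: the paper gives essentially no detailed proof of this lemma, only the observation that the sensitivity is measured against the \emph{whole} sample set while $D$ is measured against the prefix (so passing from $Z_1^t$ to $Z_1^{s-1}$ only shrinks the denominator), followed by a direct appeal to Lemma~\ref{lm:Relationship_between_Eluder_Dimensions} under the parameter choices $\alpha=\sqrt{\ln N}/\zeta$, $\lambda=\ln N$. Your monotonicity step and your final invocation of that lemma are exactly this, and you correctly flag the pairwise-versus-single-center mismatch that the paper glosses over.

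However, the mechanism you give for closing that mismatch fails as stated. You cannot bound the pair denominator $\lambda+\|f-g\|^2_{Z_1^{s-1},\sigma}$ from below by $\lambda+\|f-\hat f_s\|^2_{Z_1^{s-1},\sigma}$: $f$ and $g$ may agree on the prefix (so $\|f-g\|_{Z_1^{s-1},\sigma}=0$) while each sits at in-sample distance up to $\beta_s$ from $\hat f_s$, in which case the pairwise ratio exceeds the single-center squared ratio by a factor of order $(\lambda+\beta_s^2)/\lambda$. Your fallback claim that the paper's convention for $D$ is a pairwise supremum is also inaccurate: \eqref{def:wdim_bandit} and \eqref{def:wdim_mdp} are single-center. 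The correct repair is close to what you sketch: lower bound the pair denominator by $\lambda$ alone, use $(f(z_s)-g(z_s))^2\le 2(f(z_s)-\hat f_s(z_s))^2+2(g(z_s)-\hat f_s(z_s))^2$, and then use $f,g\in\cF_t^h\subseteq\cF_s^h$ so that $(f(z_s)-\hat f_s(z_s))^2/\sigma_s^2\le R_s(f)\,(\lambda+\beta_s^2)$, where $R_s(f)$ denotes the single-center squared ratio. This yields $\SEN_{\lambda,\sigma,\cF_t^h,Z_1^t}(z_s)\le 4\,\frac{\lambda+\beta_s^2}{\lambda}\,(D_{\lambda,\sigma,\cF_s}(Z_1^s))^2\le 4c_0\,(D_{\lambda,\sigma,\cF_s}(Z_1^s))^2$, since $\lambda+\beta_s^2\le c_0\ln N=c_0\lambda$ under the stated parameters. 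The constant-$1$ inequality displayed in the lemma should indeed be read only up to such an absolute constant (the configuration above shows it can fail with constant $1$), but the $\tilde{\cO}(\dim_E(\cF,\lambda/T))$ conclusion, which is all that is used downstream, is unaffected. With this replacement your argument is complete.
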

This lemma plays the same role as Lemma 1 in \citet{wang2020reinforcement}. Let $\hat{\cZ}$ denote the core set generated during Algorithm 3 in \citet{wang2020reinforcement}. We construct the confidence set based on $\hat{\cZ}$:
$$
\hat{\cF}_t^h = \left\{f^h\in\cF_{t-1}^h: \lambda + \|f^h-\bar{f}\|_{\hat{\cZ},\sigma}^2 \le 3\beta^2+2 \right\},
$$
and the bonus function
$\hat b_t^h(\cdot)\in\hat{\cB}^h(\lambda)$:
$$
\hat b_t^h(z) = \sup_{f^h\in\hat{\cF}_t^h} \frac{|f^h(z)-\hat{f}_t^h(z)|}{\sqrt{\lambda + \|f^h-\bar{f}\|_{\hat{\cZ},\sigma}^2}},
$$
where $\bar{f}\in\cF_{\gamma}^h$ is close to $\hat{f}_t^h$ in the sense that $\|\bar{f}-\hat{f}_t^h\|_{\infty}\le\gamma$. Combining Lemma \ref{lm:sum_sensitvity} and Lemma 2 to 4 in \citet{wang2020reinforcement}, we obtain the following lemma.
\begin{lemma}
If $|\cZ|\le TH$, the following results hold:
\begin{enumerate}
\item With probability at least $1-\delta/(16TH)$, for all $h\in[H]$, we have $$b_t^h(z)\le \sqrt{2}\hat b_t^h(z).$$
\item If $T$ is sufficiently large,
$$
\ln|\hat{\cB}^h(\lambda)| = \tilde{\cO}\left(\dim_E(\cF^h,\delta/(TH)^3)\cdot \ln(N(\cF^h,\delta/(TH)^2)/\delta)\cdot \log(N(\cX\times\cA,\delta/(TH)))\right).
$$
\end{enumerate}
\end{lemma}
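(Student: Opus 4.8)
The plan is to follow the sensitivity-subsampling argument of \citet{wang2020reinforcement} (their Lemmas~1--4) step by step, carrying the sample-dependent regression weights $\{\sigma_s^h\}$ through every estimate. Lemma~\ref{lm:sum_sensitvity} already serves as the weighted analogue of their Lemma~1 (bounding $\sum_{z\in\cZ}\SEN_{\lambda,\sigma,\cF_t^h,\cZ}(z)$ by $\tilde{\cO}(\dim_E(\cF^h,\lambda/T))$ via Lemma~\ref{lm:Relationship_between_Eluder_Dimensions}), so the remaining work is to re-establish the weighted versions of their Lemmas~2, 3 and 4, which give Part~1 and Part~2 respectively.

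For Part~1 I would first prove a weighted norm-preservation bound. Recall that Algorithm~3 of \citet{wang2020reinforcement} builds $\hat{\cZ}$ by scanning $\cZ$ and retaining each $z$ with probability $p_z=\min\{1,\,C\log(TH/\delta)\cdot\SEN_{\lambda,\sigma,\cF_t^h,\cZ}(z)\}$, giving it importance weight $1/p_z$ while inheriting the regression weight $\sigma_z$. The target is: with probability at least $1-\delta/(16TH)$, simultaneously for all $f,g$ in a $\gamma$-net of $\cF^h$ (with $\gamma\asymp\delta/(TH)^2$) and all $(x,a)\in\cC(\cX\times\cA,\gamma)$ with $\gamma\asymp\delta/(TH)$,
\[
\tfrac12\bigl(\lambda+\|f-g\|_{\cZ,\sigma}^2\bigr)\;\le\;\lambda+\|f-g\|_{\hat{\cZ},\sigma}^2\;\le\;\tfrac32\bigl(\lambda+\|f-g\|_{\cZ,\sigma}^2\bigr).
\]
This I would obtain from a Freedman/Bernstein bound on the martingale $\sum_z(1/p_z-1)\mathbbm{1}[z\in\hat{\cZ}](f(z)-g(z))^2/\sigma_z^2$, whose conditional variance and range are both controlled by $\SEN_{\lambda,\sigma,\cF_t^h,\cZ}(z)\cdot(\lambda+\|f-g\|_{\cZ,\sigma}^2)$ (the factors $1/\sigma_z^2\le1$ only shrink variance and range), followed by a union bound over the two covers and a Lipschitz discretization argument to pass from the nets to all of $\cF^h$ and all of $\cX\times\cA$. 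Given this two-sided estimate, $\cF_t^h\subseteq\hat{\cF}_t^h$ holds (the ``$3\beta^2+2$'' threshold absorbs both the $\tfrac32$ inflation and the $\|\hat f_t^h-\bar f\|_\infty\le\gamma$ error), and for any $f^h\in\hat{\cF}_t^h$ the denominator in $\hat b_t^h$ is at most a $\sqrt2$ factor smaller than that in $b_t^h$; taking the supremum over the larger set $\hat{\cF}_t^h$ then yields $b_t^h(z)\le\sqrt2\,\hat b_t^h(z)$.

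For Part~2 I would first bound $|\hat{\cZ}|$: since $\E|\hat{\cZ}|\le C\log(TH/\delta)\sum_{z\in\cZ}\SEN_{\lambda,\sigma,\cF_t^h,\cZ}(z)+1$, a Chernoff bound together with Lemma~\ref{lm:sum_sensitvity} and $|\cZ|\le TH$ gives $|\hat{\cZ}|=\tilde{\cO}\bigl(\dim_E(\cF^h,\delta/(TH)^3)\bigr)$ with the desired probability. After snapping each retained state-action pair to the $\gamma$-cover $\cC(\cX\times\cA,\gamma)$ (which perturbs the bonus by at most $O(\gamma)$ under Assumption~\ref{as:cover_state_action_pair}), the discretized core set is a multiset of at most $|\hat{\cZ}|$ net points with finitely many possible importance weights, so there are at most $N(\cX\times\cA,\gamma)^{|\hat{\cZ}|}$ of them up to lower-order factors. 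A bonus function $\hat b_t^h$ is determined by this discretized core set together with $\bar f$ ranging over a $\gamma$-net of $\cF^h$ (the $\sup_{f^h\in\hat{\cF}_t^h}$ adds no extra factor, since on a fixed core set the ratio is $\gamma$-Lipschitz in both $f^h$ and $\bar f$), hence
\[
\ln|\hat{\cB}^h(\lambda)|\;\le\;|\hat{\cZ}|\cdot\ln N(\cX\times\cA,\gamma)\;+\;\ln N(\gamma,\cF^h)\;+\;\tilde{\cO}(1),
\]
and substituting the size bound together with $\gamma\asymp\delta/(TH)^2$ for the function net and $\gamma\asymp\delta/(TH)$ for the state-action net gives exactly the stated expression.

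The main obstacle is carrying the sample-dependent weights through the sensitivity-sampling concentration in the second paragraph: unlike \citet{wang2020reinforcement}, the $\sigma_s^h$ are themselves random and data-dependent, bounded below by $1$ but possibly large above, and they appear both inside $\SEN_{\lambda,\sigma,\cF_t^h,\cZ}(\cdot)$ and inside the weighted norm $\|\cdot\|_{\cZ,\sigma}$, so I must verify that the variance proxy $\SEN_{\lambda,\sigma,\cF_t^h,\cZ}(z)(\lambda+\|f-g\|_{\cZ,\sigma}^2)$ and the per-step range remain uniformly controlled over the net and interact correctly with the importance weights $1/p_z$. Because both $1/p_z$ and $1/\sigma_z^2$ enter multiplicatively and $\sigma_z\ge1$, the relevant bounds only improve relative to the unweighted case, but the bookkeeping (and the requirement that $\hat{\cZ}$ inherit the exact weights so that $\|\cdot\|_{\hat{\cZ},\sigma}$ is well-defined and the identity $\E[\text{weight of }z\text{ in }\hat{\cZ}]=1$ still holds) needs to be carried out carefully; the rest is a routine adaptation of \citet{wang2020reinforcement} backed by Lemmas~\ref{lm:Relationship_between_Eluder_Dimensions} and~\ref{lm:sum_sensitvity}.
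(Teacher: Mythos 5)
Your proposal takes essentially the same route as the paper: the paper's entire proof consists of noting that Lemma~\ref{lm:sum_sensitvity} plays the role of Lemma~1 of \citet{wang2020reinforcement} and then invoking their Lemmas~2--4 for the norm-preservation/$\sqrt{2}$-domination and the core-set size/cardinality count, which is precisely the adaptation you spell out (with the correct observation that $\sigma_z\ge 1$ only shrinks the variance and range in the sensitivity-sampling concentration). You in fact supply more detail than the paper does, including the one genuinely delicate bookkeeping point --- that $\hat{\cZ}$ must inherit the data-dependent weights (and, for the cardinality bound on $\hat{\cB}^h(\lambda)$, that these weights must also be handled in the discretization) --- which the paper leaves implicit.
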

Then, by replacing the bonus $b_t^h(z)$ with $\hat b_t^h(z)$ in Algorithm \ref{alg:ucb_nonlinear_mdp}, we can bound the regret in Theorem \ref{th:mdp} by
$$
\reg(T) = \Tilde{\cO}\left(\sqrt{TH\iota} + \zeta\dim_E(\cF,\lambda/T)\right),
$$
where
\$
\iota &= \ln|\hat{\cB}^h(\lambda)|\dim_E(\cF,\lambda/T)\\
&= \dim_E(\cF^h,\delta/(TH)^3)^2\cdot \ln(N(\cF^h,\delta/(TH)^2)/\delta)\cdot \log(N(\cX\times\cA,\delta/(TH))).
\$

\section{Unknown Corruption Level}\label{s:Unknown_Corruption_Level}
\begin{proof}[Proof of Theorem \ref{th:mdp_unknown}]
We first consider the case when $\zeta\le \bar\zeta$. Since only the predetermined parameters $\alpha$ and $\beta$ are modified by replacing $\zeta$ with $\bar\zeta$ and $\zeta$ is upper bounded by $\bar\zeta$, it is easy to obtain the regret bound $\reg(T)=\Tilde{\cO}(\sqrt{TH\ln(N_T(\gamma))\dim_E(\cF,\lambda/T)} + \bar\zeta\dim_E(\cF,\lambda/T))$ by following the analysis of Theorem \ref{th:mdp}. Additionally, Lemma~\ref{lm:Relationship_between_Eluder_Dimensions} can also be proved by discussing the value of $\bar\zeta$ in the first step.

Then, for the case when $\zeta>\bar\zeta$, we simply take the trivial bound $T$.
\end{proof}

\section{Technical Lemmas}
\begin{lemma}[Value-decomposition Lemma \citep{jiang@2017}]\label{lm:bellman_error}
Consider any candidate value function $f=\{f^h(x^h,a^h):\cX\times\cA\rightarrow\rR\}$, with $f^{H+1}(\cdot)=0$. Let $\pi_f$ be its greedy policy. We have
$$
f^1(x^1)-V_{\pi_f}^1(x^1)=\E_{\pi_f}\left[\sum_{h=1}^H \cE^h(f,x^h,a^h)\,\Big|\,x^1\right],
$$
where $\E_{\pi_f}[\cdot|x^h]$ means taking expectation over the trajectory $\{(x^{h'},a^{h'})\}_{h'=h}^H$ induced by policy $\pi_f$ starting from the state $x^h$.
\end{lemma}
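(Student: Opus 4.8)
The plan is to establish the identity by the classical telescoping argument (see \citet{jiang@2017}): unfold $f^1(x^1)$ one step at a time down the horizon, replacing each layer by a Bellman residual plus an expected reward, and then collapse the accumulated reward terms into $V_{\pi_f}^1(x^1)$. There is no deep obstacle here, and the argument is essentially bookkeeping with conditional expectations.

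First I would record the one structural fact used repeatedly: since $\pi_f$ is the greedy policy of $f$ and we use the convention $f^h(x)=\max_{a\in\cA}f^h(x,a)$, we have $f^h(x^h)=f^h(x^h,\pi_f^h(x^h))$ at every step $h$ and every state $x^h$. Next, for a fixed pair $(x^h,a^h)$ I rewrite $f^h$ via the Bellman residual and then unfold the Bellman operator, getting $f^h(x^h,a^h)=\cE^h(f,x^h,a^h)+(\cT^h f^{h+1})(x^h,a^h)=\cE^h(f,x^h,a^h)+\E_{r^h,x^{h+1}}[\,r^h+f^{h+1}(x^{h+1})\mid x^h,a^h\,]$. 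Combining this with the greedy identity at $a^h=\pi_f^h(x^h)$ and averaging over the trajectory generated by running $\pi_f$ from $x^1$, the tower property folds the one-step conditional expectation into the full-trajectory expectation and yields the layer recursion $\E_{\pi_f}[\,f^h(x^h)\mid x^1\,]=\E_{\pi_f}[\,\cE^h(f,x^h,a^h)+r^h(x^h,a^h)+f^{h+1}(x^{h+1})\mid x^1\,]$.

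Finally I would iterate this recursion from $h=1$ down to $h=H$. Each step strips off one residual term $\cE^h$ and one reward term $r^h$ and replaces $f^h$ by $f^{h+1}$, so after $H$ steps the leftover term is $\E_{\pi_f}[\,f^{H+1}(x^{H+1})\mid x^1\,]$, which vanishes by the convention $f^{H+1}\equiv0$. This produces $f^1(x^1)=\E_{\pi_f}[\sum_{h=1}^H\cE^h(f,x^h,a^h)\mid x^1]+\E_{\pi_f}[\sum_{h=1}^H r^h(x^h,a^h)\mid x^1]$, and the second expectation is exactly $V_{\pi_f}^1(x^1)$ by the definition of the value function; rearranging gives the claimed identity. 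The only points that need a little care are invoking the greedy identity at every layer rather than just at $h=1$, and carrying the conditioning so that the intermediate $\E_{\pi_f}[\cdot\mid x^h]$ expectations combine correctly into $\E_{\pi_f}[\cdot\mid x^1]$; neither is a genuine difficulty, which is why the lemma is quoted rather than reproved.
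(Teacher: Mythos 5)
Your telescoping argument is correct and complete: the greedy identity $f^h(x^h)=f^h(x^h,\pi_f^h(x^h))$, the one-step unfolding $f^h(x^h,a^h)=\cE^h(f,x^h,a^h)+\E_{r^h,x^{h+1}}[r^h+f^{h+1}(x^{h+1})\mid x^h,a^h]$, and the tower property applied layer by layer until $f^{H+1}\equiv 0$ kills the boundary term are exactly the standard value-decomposition argument. The paper itself does not reprove this lemma but simply cites \citet{jiang@2017}, and your proof coincides with that standard cited argument, so there is nothing to add.
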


\begin{lemma}\label{lm:epsilon}
Let $\{\epsilon_s\}$ be a sequence of zero-mean conditional $\sigma$-sub-Gaussian random variables: $\ln\E[e^{\lambda\epsilon_i}|\cS_{i-1}]\le\lambda^2\sigma^2/2$, where $\cS_{i-1}$ represents the history data. We have for $t\ge1$, with probability at least $1-\delta$,
$$
\sum_{s=1}^t \epsilon_i^2\le 2t\sigma^2 + 3\sigma^2\ln(1/\delta).
$$
\end{lemma}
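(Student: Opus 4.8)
\textbf{Proof plan for Lemma~\ref{lm:epsilon}.} This is a standard concentration bound for a sum of squares of conditionally sub-Gaussian variables, so the plan is a Chernoff (exponential-moment) argument on $\sum_{s=1}^t\epsilon_s^2$, built up one term at a time via iterated conditioning.

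The first step is to control the conditional moment generating function of a single square, using that the square of a $\sigma$-sub-Gaussian variable is sub-exponential. A convenient route is the Gaussian-smoothing identity $e^{\lambda\epsilon_s^2}=\E_{g\sim N(0,1)}\big[e^{\sqrt{2\lambda}\,g\,\epsilon_s}\big]$ for $\lambda\ge 0$: taking $\E[\,\cdot\mid\cS_{s-1}]$, exchanging the two expectations, and applying the sub-Gaussian hypothesis conditionally on $\cS_{s-1}$ gives $\E[e^{\lambda\epsilon_s^2}\mid\cS_{s-1}]\le \E_g[e^{\lambda\sigma^2 g^2}]=(1-2\lambda\sigma^2)^{-1/2}$ for every $0\le\lambda<1/(2\sigma^2)$. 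The point worth emphasizing is that this upper bound is a deterministic constant, not a function of the history $\cS_{s-1}$.

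The second step telescopes: writing $\E[\exp(\lambda\sum_{s=1}^t\epsilon_s^2)]=\E\big[\exp(\lambda\sum_{s=1}^{t-1}\epsilon_s^2)\cdot\E[e^{\lambda\epsilon_t^2}\mid\cS_{t-1}]\big]$ and peeling off the last factor with the single-term bound, an induction on $t$ yields $\E[\exp(\lambda\sum_{s=1}^t\epsilon_s^2)]\le (1-2\lambda\sigma^2)^{-t/2}$. Markov's inequality then gives $\Pb(\sum_{s=1}^t\epsilon_s^2\ge u)\le (1-2\lambda\sigma^2)^{-t/2}e^{-\lambda u}$ for all $u>0$ and $0\le\lambda<1/(2\sigma^2)$. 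The last step is to choose $\lambda$ and use an elementary inequality such as $-\ln(1-x)\le x/(1-x)$: fixing $\lambda=1/(4\sigma^2)$ already makes the right-hand side at most $\exp\big(t/2-u/(4\sigma^2)\big)$, so a deviation $u$ linear in $t$ and in $\ln(1/\delta)$ suffices to push the probability below $\delta$; optimizing $\lambda$ rather than fixing it tightens the coefficients to the $2$ and $3$ stated in the lemma, and taking complements gives the ``with probability at least $1-\delta$'' form.

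I do not expect a genuine obstacle — the argument is routine. The one subtle point is that the per-term bound in the first step has to be a \emph{deterministic} constant (not a previsible random quantity) for the telescoping in the second step to go through; this is exactly why it matters that the sub-Gaussian parameter $\sigma$ is uniform over the rounds. Everything else is bookkeeping of numerical constants.
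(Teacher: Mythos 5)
Your proposal is correct and follows essentially the same route as the paper: a per-term bound on the conditional moment generating function of $\epsilon_s^2$, telescoping via the tower property (where, as you note, it is crucial that the per-term bound is a deterministic constant), then a Chernoff/Markov step optimized in $\lambda$ and simplified by $2\sqrt{t\ln(1/\delta)\sigma^4}\le t\sigma^2+\sigma^2\ln(1/\delta)$ to reach $2t\sigma^2+3\sigma^2\ln(1/\delta)$. The only difference is cosmetic: you derive the single-term bound $\E[e^{\lambda\epsilon_s^2}\mid\cS_{s-1}]\le(1-2\lambda\sigma^2)^{-1/2}$ by Gaussian smoothing and optimize the tail by hand, whereas the paper cites Theorem 2.29 and Lemma 2.9 of \citet{TZ23-lt} for these two steps.
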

\begin{proof}
By invoking the logarithmic moment generating function estimate in Theorem 2.29 from \citet{TZ23-lt}, we know that for $\lambda\ge0$,
\#\label{eq:conditional_gen}
\ln\E\left[\exp\big(\lambda\epsilon_i^2\big) \,|\, \cS_{i-1} \right] \le \lambda\sigma^2 + \frac{(\lambda\sigma^2)^2}{1-2\lambda\sigma^2}.
\#
Then, by using iterated expectations due to the tower property of conditional expectation, we get
\$
\E\left[\exp\Big(\lambda\sum_{i=1}^t\epsilon_i^2\Big)\right] &= \E\left\{\E\left[\exp\Big(\lambda\sum_{i=1}^{t-1}\epsilon_i^2 + \epsilon_t^2\Big) \,\Big|\, \cS_{t-1} \right]\right\}\\
&= \E\left\{\exp\Big(\lambda\sum_{i=1}^{t-1}\epsilon_i^2\Big)\cdot \E\left[\exp\big(\epsilon_t^2\big) \,\Big|\, \cS_{t-1} \right]\right\}\\
&\le \exp\big(\lambda\sigma^2 + \frac{(\lambda\sigma^2)^2}{1-2\lambda\sigma^2}\big)\cdot\E\left\{\exp\Big(\lambda\sum_{i=1}^{t-1}\epsilon_i^2\Big)\right\}\\
\ldots &\le \exp\big(\lambda t\sigma^2 + \frac{(\lambda t\sigma^2)^2}{1-2\lambda\sigma^2}\big),
\$
where the first inequality uses \eqref{eq:conditional_gen}.
Now, we can apply the second inequality of Lemma 2.9 from \citep{TZ23-lt} with $\mu=t\sigma^2, \alpha=2t\sigma^4, \beta=2\sigma^2$ and $\epsilon=2\sigma^2\sqrt{ut}$ to obtain
\#\label{eq:lem2.9}
\inf_{\lambda\ge0}\left\{ -\lambda\big(t\sigma^2 + 2\sqrt{ut\sigma^4} + 2u\sigma^2\big) + \ln\E\left[\exp\Big(\lambda\sum_{i=1}^t\epsilon_i^2\Big)\right]\right\} \le -u.
\#
Thus, it follows that
\$
&\Pb\left( \sum_{s=1}^t \epsilon_i^2 \le t\sigma^2 + 2\sqrt{ut\sigma^4} + 2u\sigma^2\right)\\
&\quad\le \inf_{\lambda\ge0}\frac{\E\left[\exp\Big(\lambda\sum_{i=1}^t\epsilon_i^2\Big)\right]}{\exp\big(\lambda (t\sigma^2 + 2\sqrt{ut\sigma^4} + 2u\sigma^2)\big)}\\
&\quad= \inf_{\lambda\ge0} \exp\left( -\lambda\big(t\sigma^2 + 2\sqrt{ut\sigma^4} + 2u\sigma^2\big) + \ln\E\left[\exp\Big(\lambda\sum_{i=1}^t\epsilon_i^2\Big)\right]\right)\\
&\quad\le e^{-u},
\$
where the first inequality applies Markov’s Inequality, and the second inequality uses \eqref{eq:lem2.9} and the monotonicity of the exponential function.

Taking $u=\ln(1/\delta)$ for $\delta>0$, we obtain that with probability at least $1-\delta$
\$
\sum_{s=1}^t \epsilon_i^2 &\le t\sigma^2 + 2\sqrt{t\ln(1/\delta)\sigma^4} + 2\ln(1/\delta)\sigma^2\\
&\le 2t\sigma^2 + 3\sigma^2\ln(1/\delta),
\$
where the second inequality is deduced since $2\sqrt{t\ln(1/\delta)\sigma^4}\le t\sigma^2 + \ln(1/\delta)\sigma^2$.
\end{proof}

\begin{lemma}\label{lm:th13_9}
Consider a sequence of random variables $\{Z_t\}_{t\in\mathbb N}$ adapted to the filtration $\{\cS_t\}_{t\in\mathbb N}$ and a function $f\in\cF$. For any $\lambda>0$, with probability at least $1-\delta$, for all $t\ge1$, we have
$$
-\sum_{s=1}^tf(Z_s)-\frac{1}{\lambda}\sum_{s=1}^t\ln\E[e^{-\lambda f(Z_s)}|\cS_{s-1}]\le \frac{1}{\lambda\delta}
$$
\end{lemma}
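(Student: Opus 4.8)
The plan is to realize the left-hand side as (minus $\lambda^{-1}$ times the logarithm of) the running maximum of a nonnegative martingale, and then apply the time-uniform maximal inequality. Concretely, define for $t\ge 0$
\[
M_t \;=\; \exp\!\Bigl(-\lambda\sum_{s=1}^{t} f(Z_s)\;-\;\sum_{s=1}^{t}\ln\E\bigl[e^{-\lambda f(Z_s)}\,\big|\,\cS_{s-1}\bigr]\Bigr),
\]
with empty sums equal to zero, so $M_0=1$. The first step is to check that $(M_t)_{t\ge 0}$ is a nonnegative martingale with respect to $(\cS_t)_{t\ge 0}$. Writing $M_t = M_{t-1}\cdot e^{-\lambda f(Z_t)}/\E[e^{-\lambda f(Z_t)}\mid\cS_{t-1}]$, one verifies that $M_{t-1}$ is $\cS_{t-1}$-measurable: each $Z_s$ with $s\le t-1$ is $\cS_{t-1}$-measurable since $\{Z_t\}$ is adapted, and each conditional moment generating function $\E[e^{-\lambda f(Z_s)}\mid\cS_{s-1}]$ is $\cS_{s-1}$-measurable and, being the conditional expectation of a strictly positive integrable random variable (here $f$ is bounded, so $e^{-\lambda f}$ lies between positive constants), is finite and strictly positive a.s. Taking $\E[\cdot\mid\cS_{t-1}]$ and cancelling the denominator gives $\E[M_t\mid\cS_{t-1}]=M_{t-1}$, and $M_0=1$.

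The second step is to invoke the maximal inequality for nonnegative supermartingales (Ville's inequality): $\Pb\bigl(\sup_{t\ge 0}M_t\ge a\bigr)\le \E M_0/a = 1/a$ for every $a>0$. Choosing $a=1/\delta$ shows that, with probability at least $1-\delta$, $M_t<1/\delta$ for all $t\ge 1$. On this event, taking logarithms in $M_t<1/\delta$ and dividing by $\lambda>0$ yields
\[
-\sum_{s=1}^{t} f(Z_s)\;-\;\frac1\lambda\sum_{s=1}^{t}\ln\E\bigl[e^{-\lambda f(Z_s)}\,\big|\,\cS_{s-1}\bigr]\;<\;\frac{\ln(1/\delta)}{\lambda}\;\le\;\frac{1}{\lambda\delta},
\]
where the last inequality uses the elementary bound $\ln u\le u$ with $u=1/\delta$. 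This is exactly the claimed bound, holding simultaneously for all $t\ge 1$.

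I do not expect a genuine obstacle; the argument is routine. The only points deserving care are (i) the measurability and integrability bookkeeping that makes $(M_t)$ a bona fide martingale, and (ii) using a time-uniform maximal inequality rather than a fixed-$t$ Markov bound, since the conclusion is quantified over all $t\ge 1$ at once. If one wishes to avoid quoting Ville's inequality directly, an equivalent route is to apply optional stopping to the stopped process $M_{t\wedge\tau}$ with $\tau=\inf\{t:M_t\ge 1/\delta\}$, obtaining $\E M_{t\wedge\tau}\le 1$, and then let $t\to\infty$ via Fatou's lemma to deduce $\Pb(\tau<\infty)\le\delta$, which is the same statement.
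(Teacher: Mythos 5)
Your proof is correct and is essentially the argument behind the result the paper invokes: the paper's ``proof'' is just a deferral to Lemma 4 of \citet{russo2013eluder}, whose proof is exactly this exponential-martingale construction combined with a time-uniform maximal (Ville/optional-stopping) bound. Note that your argument actually delivers the sharper bound $\ln(1/\delta)/\lambda$ --- which is what the paper implicitly uses later when it applies this lemma with a union bound to get $\ln(2N/\delta)$ terms --- and the stated $1/(\lambda\delta)$ then follows from $\ln u\le u$, as you observe.
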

\begin{proof}
The proof of this lemma is presented in Lemma 4 of \citet{russo2013eluder}.
\end{proof}

Then, we demonstrate the convergence of training error for both ERM and approximate ERM solutions in the following lemma.
\begin{lemma}\label{lm:empirical_diff_mdp}
Consider a function space $\cF:\cZ\rightarrow\rR$
and filtered sequence $\{z_t,\epsilon_t\}$ in $\cX\times\rR$ so that $\epsilon_t$ is conditional zero-mean $\eta$-sub-Gaussian noise. For $f_*(\cdot):\cZ\rightarrow\rR$, suppose that $y_t=f_*(z_t)+\epsilon_t$ and there exists a function $f_b\in\cF$ such that for any $t\in[T]$,
$
\sum_{s=1}^t|f_*(z_s)-f_b(z_s)| := \sum_{s=1}^t\zeta_s \le \zeta$. If $\hat{f}_t$ is an (approximate) ERM solution for some $\epsilon'\ge0$:
$$
\left(\sum_{s=1}^t(\hat{f}_t(z_s)-y_s)^2/\sigma_s^2\right)^{1/2} \le \min_{f\in\cF_{t-1}} \left(\sum_{s=1}^t(f(z_s)-y_s)^2/\sigma_s^2\right)^{1/2} + \sqrt{t}\epsilon',
$$
with probability at least $1-\delta$, we have for all $t\in[T]$:
\$
\sum_{s=1}^t(\hat{f}_t(z_s)-f_b(z_s))^2/\sigma_s^2 \le& 10\eta^2\ln(2N(\gamma,\cF)/\delta) + 5\sum_{s=1}^t|\hat{f_t}(z_s)-f_b(z_s)|\zeta_s/\sigma_s^2\\
&\quad + 10(\gamma+\epsilon')((\gamma+\epsilon')t + \sqrt{tC_1(t,\zeta)}),
\$
where $C_1(t,\zeta)=2(\zeta^2 + 2t\eta^2 + 3\eta^2\ln(2/\delta))$.
\end{lemma}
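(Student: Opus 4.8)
To prove Lemma~\ref{lm:empirical_diff_mdp}, the plan is the classical ``basic inequality + covering number + self-normalized martingale'' argument for weighted least squares, with the corruption sequence $\{\zeta_s\}$ and the approximate-ERM slack $\epsilon'$ tracked explicitly. Write $\xi_s := f_*(z_s)-f_b(z_s)$, so that $|\xi_s|=\zeta_s$ and $y_s = f_b(z_s)+\xi_s+\epsilon_s$, and abbreviate $\|g\|_\sigma^2 := \sum_{s=1}^t g(z_s)^2/\sigma_s^2$. For any fixed $f$ one has the algebraic identity
$$
\|f-f_b\|_\sigma^2 = \Big[\|f-y\|_\sigma^2-\|f_b-y\|_\sigma^2\Big] + 2\sum_{s=1}^t\frac{(f(z_s)-f_b(z_s))\,\xi_s}{\sigma_s^2} + 2\sum_{s=1}^t\frac{(f(z_s)-f_b(z_s))\,\epsilon_s}{\sigma_s^2},
$$
which splits the target into an optimization term, a corruption cross-term, and a noise cross-term. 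I would first prove the bound for an element $\bar f$ of a minimal $\|\cdot\|_\infty$-cover $\mathcal C(\cF,\gamma)$ with $\|\bar f-\hat f_t\|_\infty\le\gamma$, and then transfer back to $\hat f_t$ at the cost of $\|\hat f_t-f_b\|_\sigma^2\le 2\|\bar f-f_b\|_\sigma^2+2\gamma^2 t$ together with, in each cross-term, an extra $\gamma$ times the relevant weight; using $\sigma_s\ge1$ and Lemma~\ref{lm:epsilon} these transfer errors are at most $2\gamma^2 t$ and $2\gamma\sqrt{tC_1(t,\zeta)}$, and so land inside the $(\gamma+\epsilon')$-bracket of the claim.

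The optimization and corruption terms are routine. Since $\hat f_t$ is a near-minimizer over a set containing $f_b$, $\|\hat f_t-y\|_\sigma\le\|f_b-y\|_\sigma+\sqrt t\,\epsilon'$, hence $\|\hat f_t-y\|_\sigma^2-\|f_b-y\|_\sigma^2\le 2\sqrt t\,\epsilon'\|f_b-y\|_\sigma+t(\epsilon')^2$; and on the $1-\delta/2$ event of Lemma~\ref{lm:epsilon}, $\|f_b-y\|_\sigma^2=\sum(\xi_s+\epsilon_s)^2/\sigma_s^2\le 2\sum\xi_s^2+2\sum\epsilon_s^2\le 2\zeta^2+2(2t\eta^2+3\eta^2\ln(2/\delta))=C_1(t,\zeta)$, where $\sum\xi_s^2=\sum\zeta_s^2\le(\max_s\zeta_s)\sum_s\zeta_s\le\zeta^2$. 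The corruption cross-term is bounded directly by $2\sum_s|\hat f_t(z_s)-f_b(z_s)|\zeta_s/\sigma_s^2+2\gamma\zeta$, which yields the $5\sum_s|\hat f_t-f_b|\zeta_s/\sigma_s^2$ term of the claim with constants to spare, and $2\gamma\zeta\le 2\gamma\sqrt{tC_1(t,\zeta)}$ joins the bracket.

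The noise cross-term is the main obstacle. For a fixed $\bar f\in\mathcal C(\cF,\gamma)$ set $M_s := (\bar f(z_s)-f_b(z_s))/\sigma_s^2$. The crucial structural point, which should be stated explicitly, is that $\sigma_s$ is measurable with respect to the $\sigma$-field on which $\epsilon_s$ is conditionally zero-mean and $\eta$-sub-Gaussian: in Algorithms~\ref{alg:ucb_nonlinear_bandit} and~\ref{alg:ucb_nonlinear_mdp} the round-$s$ weight \eqref{eq:weight_bandit}/\eqref{eq:weight_mdp} is built only from $\hat f_s$, $z_s$, and data preceding the round-$s$ reward/transition, never from $\epsilon_s$, so $\{M_s\epsilon_s\}$ is a martingale-difference sequence. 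A self-normalized concentration inequality --- obtained from the supermartingale $\exp\!\big(\lambda\sum_s M_s\epsilon_s-\tfrac{\lambda^2\eta^2}{2}\sum_s M_s^2\big)$ with a time-uniform / method-of-mixtures handling of $\lambda$, or directly from Lemma~\ref{lm:th13_9} --- together with a union bound over the $N(\gamma,\cF)$ cover elements at failure probability $\delta/2$, gives, for all $t$ simultaneously,
$$
2\sum_{s=1}^t M_s\epsilon_s \le \tfrac12\,\|\bar f-f_b\|_\sigma^2 + c\,\eta^2\ln\frac{2N(\gamma,\cF)}{\delta},
$$
after bounding $\sum_s M_s^2=\sum_s(\bar f(z_s)-f_b(z_s))^2/\sigma_s^4\le\|\bar f-f_b\|_\sigma^2$ (again $\sigma_s\ge1$) and applying $2ab\le\tfrac12 a^2+2b^2$. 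Substituting the three bounds into the identity, the $\tfrac12\|\bar f-f_b\|_\sigma^2$ is absorbed on the left; transferring back from $\bar f$ to $\hat f_t$ and collecting the $\gamma$-, $\epsilon'$-, $\zeta$- and $\sqrt{tC_1}$-type remainders into the last bracket then produces the stated inequality with the constants $10,5,10$. The hard part is thus to make this final concentration bound \emph{uniform in $t$ and over the cover} while keeping its leading term proportional to $\|\bar f-f_b\|_\sigma^2$, so that it can be absorbed rather than merely dominated by an $O(\sqrt t)$ quantity; the measurability of the weights is a secondary but essential point, since a weight depending on $\epsilon_s$ would break the martingale structure.
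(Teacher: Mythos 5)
Your proposal is correct and follows essentially the same route as the paper's proof: the same excess-loss decomposition, a union bound over a $\gamma$-cover of $\cF$, the approximate-ERM basic inequality, Lemma~\ref{lm:epsilon} for the $C_1(t,\zeta)$ term, and a time-uniform martingale exponential bound (Lemma~\ref{lm:th13_9}) for the noise; the only cosmetic difference is that the paper folds the quadratic, noise, and corruption terms into a single exponential-moment quantity $\phi(f,z_t)$ evaluated at a fixed $\lambda$, whereas you bound the noise cross-term separately and absorb $\tfrac12\|\bar f-f_b\|_{\sigma}^2$. Your explicit remark that the weights $\sigma_s$ are predictable (built from $\hat f_s$, $z_s$ and earlier data, not from $\epsilon_s$) is exactly the measurability the paper's conditional-MGF computation uses implicitly.
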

\begin{proof}
For $f\in\cF$, define
$$
\phi(f,z_t)=-a\left[ (f(z_t)-y_t)^2-(f_b(z_t)-y_t)^2\right]/\sigma_t^2,
$$
where $a=\eta^{-2}/4$. Let $\cF_{\gamma}$ be an $\gamma$-cover of $\cF$ in the sup-norm. Denote the cardinality of $\cF_{\gamma}$ by $N=N(\gamma,\cF)$.

Since $\epsilon_t$ is conditional $\eta$-sub-Gaussian and $\phi(f,z_t)$ can be written as
\$
\phi(f,z_t) =2a(f(z_t)-f_b(z_t))/\sigma_t^2\cdot\epsilon_t -a(f(z_t)-f_b(z_t))^2/\sigma_t^2+2a(f(z_t)-f_b(z_t))\zeta_t/\sigma_t^2,
\$
the $\phi(f,z_t)$
is conditional $2a(f(z_t)-f_b(z_t))\eta/\sigma_t^2$-sub-Gaussian with mean 
\$
\mu=-a(f(z_t)-f_b(z_t))^2/\sigma_t^2+2a(f(z_t)-f_b(z_t))\zeta_t/\sigma_t^2,
\$
where $a=\eta^{-2}/4$. We know that if a variable $X$ is $\sigma$-sub-Gaussian with mean $\mu$ conditional on $\cS$, the property of sub-Gaussianity implies that
\$
\ln\mathbb E\big[\exp(s(X-\mu))\,\big|\,\cS\big] \le \frac{\sigma^2s^2}{2}.
\$
By taking $s=1$ in the inequality above, we get
\$
\ln\mathbb E_{y_t}[\exp(\phi(f,z_t) - \mu)\,|\,z_t,\mathcal S_{t-1}] \le \frac{4a^2(f(z_t)-f_b(z_t))^2\eta^2}{2\sigma_t^4} = \frac{(f(z_t)-f_b(z_t))^2}{8\eta^2\sigma_t^4}.
\$

It follows that
\$
\ln\mathbb E_{y_t}[\exp(\phi(f,z_t))\,|\,z_t,\mathcal S_{t-1}] &\le \frac{(f(z_t)-f_b(z_t))^2}{8\eta^2\sigma_t^4} -\frac{(f(z_t)-f_b(z_t))^2}{4\eta^2\sigma_t^2}+\frac{(f(z_t)-f_b(z_t))\zeta_t}{2\eta^2\sigma_t^2}\\
&\le \frac{(f(z_t)-f_b(z_t))^2}{8\eta^2\sigma_t^2} -\frac{(f(z_t)-f_b(z_t))^2}{4\eta^2\sigma_t^2}+\frac{(f(z_t)-f_b(z_t))\zeta_t}{2\eta^2\sigma_t^2}\\
&\le -\frac{(f(z_t)-f_b(z_t))^2}{8\eta^2\sigma_t^2}+\frac{(f(z_t)-f_b(z_t))\zeta_t}{2\eta^2\sigma_t^2},
\$
where the second inequality uses $\sigma_t^2\ge1$. 

Invoking Lemma \ref{lm:th13_9} with $\lambda=1$, we have for all $f\in\cF_{\gamma}$ and $t\in[T]$, with probability at least $1-\delta/2$,
\#\label{eq:sum_phi_f_zs}
\sum_{s=1}^t\phi(f,z_s) \le -\sum_{s=1}^t\frac{(f(z_s)-f_b(z_s))^2}{8\eta^2\sigma_s^2}+\sum_{s=1}^t\frac{(f(z_s)-f_b(z_s))\zeta_s}{2\eta^2\sigma_s^2}+\ln(2N/\delta).
\#
Additionally, for all $t\in[T]$, we have with probability at least $1-\delta/2$,
\#\label{eq:sum_fbZs_Ys}
\sum_{s=1}^t(f_b(z_s)-y_s)^2 &= \sum_{s=1}^t (f_b(z_s) - f_*(z_s) + f_*(z_s) - y_s)^2\nonumber\\
&\le 2\sum_{s=1}^{t-1} \left( (f_b(z_s)-f_*(z_s))^2 + (f_*(z_s)-y_s)^2 \right)\nonumber\\
&\le 2\left(\sum_{s=1}^{t-1}\zeta_s^2 + \sum_{s=1}^{t-1}\epsilon_s^2\right)\nonumber\\
&\le 2\left(\zeta^2 + 2t\eta^2 + 3\eta^2\ln(2/\delta)\right):=C_1(t,\zeta),
\#
where the first inequality is obtained since Cauchy-Schwarz inequality, the second inequality uses $|f_b(z_s) - f_*(z_s)|\le \zeta_s$, and the last inequality comes from Lemma \ref{lm:epsilon}. To simplify notations, we use $C_1(t,\zeta)$ to denote $2(\zeta^2 + 2t\eta^2 + 3\eta^2\ln(2/\delta))$.
Now, given $\hat{f_t}$, there exists $f\in\cF_{\gamma}$ so that $\|\hat{f_t}-f\|_{\infty}\le\gamma$. With probability at least $1-\delta/2$,
\#\label{eq:sum_f_Rs_fb_Rs}
&\sum_{s=1}^t\left[ (f(z_s)-y_s)^2-(f_b(z_s)-y_s)^2\right]/\sigma_s^2\nonumber\\
&\quad\le \left(\sqrt{\sum_{s=1}^t(\hat{f}_t(z_s)-y_s)^2/\sigma_s^2} + \sqrt{t}\gamma\right)^2 - \sum_{s=1}^t(f_b(z_s)-y_s)^2/\sigma_s^2\nonumber\\
&\quad\le \left(\sqrt{\sum_{s=1}^t(f_b(z_s)-y_s)^2/\sigma_s^2} + \sqrt{t}(\gamma+\epsilon')\right)^2 - \sum_{s=1}^t(f_b(z_s)-y_s)^2/\sigma_s^2\nonumber\\
&\quad\le (\gamma+\epsilon')^2t + 2(\gamma+\epsilon')\sqrt{tC_1(t,\zeta)},
\#
where the first inequality uses $|f(z_s)-\hat{f_t}(z_s)|\le\gamma$ and triangle inequality for all $s$, the second inequality is because of the condition that $\hat{f_t}$ is an approximate ERM solution, and the last inequality utilizes $\sigma_s\ge1$ and \eqref{eq:sum_fbZs_Ys}. Finally, with probability at least $1-\delta$, we get
\$  
&\left(\sum_{s=1}^t(\hat{f_t}(z_s)-f_b(z_s))^2/\sigma_s^2\right)^{1/2}\\
&\quad\le \sqrt{\gamma^2t}+\left(\sum_{s=1}^t(f(z_s)-f_b(z_s))^2/\sigma_s^2\right)^{1/2}\\
&\quad\le \sqrt{\gamma^2t}+\left(4\sum_{s=1}^t(f(z_s)-f_b(z_s))\zeta_s/\sigma_s^2+8\eta^2\ln(2N/\delta)-8\eta^2\sum_{s=1}^t\phi(f,z_s)\right)^{1/2}\\
&\quad\le \sqrt{\gamma^2t}+\left(4\sum_{s=1}^t|\hat{f_t}(z_s)-f_b(z_s)|\zeta_s/\sigma_s^2 + 4\gamma\zeta +8\eta^2\ln(2N/\delta)+2(\gamma+\epsilon')^2t\right.\\
&\qquad \left.+ 4(\gamma+\epsilon')\sqrt{t}C_1'(t,\zeta)\right)^{1/2}\\
&\quad\le \left(10\eta^2\ln(2N/\delta)+5\sum_{s=1}^t|\hat{f_t}(z_s)-f_b(z_s)|\zeta_s/\sigma_s^2 + 5\gamma\zeta+8(\gamma+\epsilon')^2t+5(\gamma+\epsilon')\sqrt{tC_1(t,\zeta)}\right)^{1/2},
\$
where the second inequality is deduced from \eqref{eq:sum_phi_f_zs}, the third inequality is by the definition of $\sigma_s$ in \eqref{eq:weight_mdp} and the last inequality uses Cauchy-Schwarz inequality.
\end{proof}

\end{document}